\def\eqref#1{equation~\ref{#1}}
\def\1{\bm{1}}
\def\mD{{\bm{D}}}
\def\mL{{\bm{L}}}
\DeclareMathAlphabet{\mathsfit}{\encodingdefault}{\sfdefault}{m}{sl}
\SetMathAlphabet{\mathsfit}{bold}{\encodingdefault}{\sfdefault}{bx}{n}
\newcommand{\softmax}{\mathrm{softmax}}
\newcommand{\KL}{D_{\mathrm{KL}}}
\def\wrt{\mbox{w.r.t.\space}}
\def\ie{\mbox{\textit{i.e.}}}
\def\eg{\mbox{\textit{e.g.}}}
\newtheorem{thm}{Theorem}
\newtheorem{lemma}{Lemma}
\newtheorem{remark}{Remark}
\newtheorem{ass}{Assumption}
  \def\mD{{\mathcal D}}
  \def\mL{{\mathcal L}}
  \DeclareMathAlphabet\mathbfcal{OMS}{cmsy}{b}{n}
  \def\0{{\bf 0}}
  \def\1{{\bf 1}}
  \def\bx{{\bf x}}
  \def\bx{{\bf x}}
\def\eg{\emph{e.g.}} 
\def\ie{\emph{i.e.}} 
\def\etc{\emph{etc.}} 
\def\wrt{{w.r.t.~}}
\let\oldding\ding%
\renewcommand{\ding}[2][1]{\scalebox{#1}{\oldding{#2}}}%
\definecolor{codeblue}{rgb}{0.25,0.5,0.5}
\newcommand{\smallbullet}{\raisebox{0.25ex}{\scalebox{0.5}{$\bullet$}}}
\lstdefinestyle{pseudopy}{
  language=Python,
  basicstyle=\fontsize{7pt}{7pt}\ttfamily\bfseries,
  commentstyle=\color{codeblue}\fontsize{7pt}{7pt},
  keywordstyle=\fontsize{7pt}{7pt},
  columns=fullflexible,
  keepspaces=true,
  breaklines=true,
  showstringspaces=false,
  frame=tb,              
  numbersep=6pt,
  belowskip=0.6\baselineskip,
  aboveskip=0.6\baselineskip
}
\newcommand{\blue}[1]{{\color{black}#1}}
\newcommand{\methodname}{ZeroSiam\xspace}
\newcommand{\drop}[1]{\textbf{\textcolor{purple!60}{#1}}}
\title{ZeroSiam: An Efficient Asymmetry for Test-Time Entropy Optimization without Collapse}
\author{%
Guohao Chen$^{1}$\thanks{Equal contribution. $^\dagger$Corresponding author. Source code at: \href{https://github.com/Cascol-Chen/ZeroSiam}{https://github.com/Cascol-Chen/ZeroSiam}.}~~,~
Shuaicheng Niu$^{12*}$,~
Deyu Chen$^{3}$,~
Jiahao Yang$^{3}$,~
Zitian Zhang$^{3}$~ \\
\textbf{Mingkui Tan$^{3*}$,~
Pengcheng Wu$^{1}$,~
Zhiqi Shen$^{1\dagger}$}~
\\
\texttt{\{guohao.chen, shuaicheng.niu, zqshen\}@ntu.edu.sg}
\\
Nanyang Technological University$^{1}$ ~
Joint WeBank-NTU Research Institute on Fintech$^{2}$ \\
South China University of Technology$^{3}$ \\
}
\begin{document}

\maketitle

\begin{abstract}
Test-time entropy minimization helps adapt a model to novel environments and incentivize its reasoning capability, unleashing the model's potential during inference by allowing it to evolve and improve in real-time using its own predictions, achieving promising performance.
However, pure entropy minimization can favor non-generalizable shortcuts, such as inflating the logit norm and driving all predictions to a dominant class to reduce entropy, risking collapsed solutions (\eg, constant one-hot outputs) that trivially minimize the objective without meaningful learning.
\blue{In this paper, we reveal asymmetry as a key mechanism for collapse prevention and introduce \methodname--an efficient asymmetric Siamese architecture tailored for test-time entropy minimization. \methodname prevents collapse through asymmetric divergence alignment, efficiently achieved by a learnable predictor and a stop-gradient operator before the classifier.}
We provide empirical and theoretical evidence that \methodname not only prevents collapse, but also regularizes biased learning signals, enhancing performance even when no collapse occurs. 
Despite its simplicity, extensive results show that \methodname performs more stably over prior methods using negligible overhead, demonstrating efficacy on both vision adaptation and large language model reasoning tasks across challenging test scenarios and diverse models, including particularly collapse-prone tiny models.

\end{abstract}

\section{Introduction}




Entropy measures the uncertainty of a model’s predictions. Since its emergence, entropy optimization has been widely adopted as an auxiliary objective alongside supervised, reinforcement signals, \etc~To be specific, in semi-supervised learning it enforces low-entropy predictions on unlabeled data to refine decision boundaries~\citep{grandvalet2004semi,lee2013pseudo,sajjadi2016regularization}; in domain adaptation it mitigates distribution shifts by promoting confident outputs~\citep{jiang2007instance,long2016unsupervised,morerio2017minimal}; and in reinforcement learning it is often maximized to encourage exploration or minimized to ensure deterministic policies~\citep{ziebart2008maximum,ahmed2019understanding}, among others, showing its remarkable success in boosting the learning effectiveness.


Of late, \textbf{test-time} entropy minimization has attracted growing interest, as it operates purely unsupervised during inference, without relying on any ground-truth supervision. For example, in test-time adaptation (TTA)~\citep{wang2021tent,lee2024deyo} it encourages confident predictions to mitigate domain shifts on unseen data streams; in large language models, it helps calibrate uncertainty and improve prediction consistency for alignment~\citep{Hu2025TLM,jang2025self}. These advances highlight the potential of entropy minimization in unleashing model power during inference, enabling adaptation to novel or out-of-distribution domains and moving toward greater intelligence.


However, entropy minimization naturally drives the model to increase the maximum predicted logit, regardless of whether it aligns with the true label. In practice, testing often involves noisy real-world data, domain shifts, or novel environments, where models often tend to be highly sensitive and uncertain, and the predicted maximum class is frequently incorrect. In this sense, minimizing entropy alone does not ensure that the corresponding supervised loss (\eg, cross-entropy) on the target task is also reduced. Instead, the model can easily exploit a shortcut solution by producing near one-hot outputs for all inputs, which trivially minimizes entropy but fails to capture meaningful predictions, leading to degraded performance. For instance, after TTA, models can deteriorate to predicting all samples to a single class under challenging wild testing scenarios~\citep{niu2023sar}.

Various TTA methods have been proposed to tackle the above issue, such as filtering unreliable gradients using predefined thresholds~\citep{niu2022eata,lee2024deyo,Hu2025TLM}, or mitigating disturbances through optimization of a sharpness-aware loss surface~\citep{niu2023sar}. However, the use of heuristic thresholds for gradient selection remains problematic, as such thresholds are difficult to define and generalize across domains. In this sense, filtering or suppressing noisy gradients can only be partial, and the model remains exposed to trivial solutions by optimizing with the remaining gradients, for which entropy can still be minimized by collapsing into constant one-hot outputs regardless of the inputs. As a result, it is unavoidable that these methods still risk collapse during deployment, particularly under prolonged or more challenging testing scenarios, as in Tables~\ref{tab:imagenet-c-mix-shift} \& \ref{tab:imagenet-c-blindspot}.

\blue{In this paper, we reveal asymmetry as a key mechanism for collapse prevention, which was introduced in the negative-free self-supervised learning (SSL)~\citep{chen2021exploring,zhang2022avoid} that seeks to learn meaningful representations by maximizing the similarity between two augmented views of the same image. Negative-free SSL uses an \textit{asymmetric structure} to prevent the outputs of \textit{two branches} from collapsing toward the same constant, resolving collapse from architectural design. However, asymmetric structure is tailored for representation learning during pre-training, and naively applying it to our context is infeasible or inferior, as: 1) test-time entropy minimization typically has only one prediction branch and optimizes entropy instead of similarity; 2) traditional Siamese design requires extra backbone passes, which impairs efficiency for our test-time learning.}

\begin{figure*}[t]
    \centering
    \vspace{-0.1in}
    \includegraphics[width=1.\linewidth]{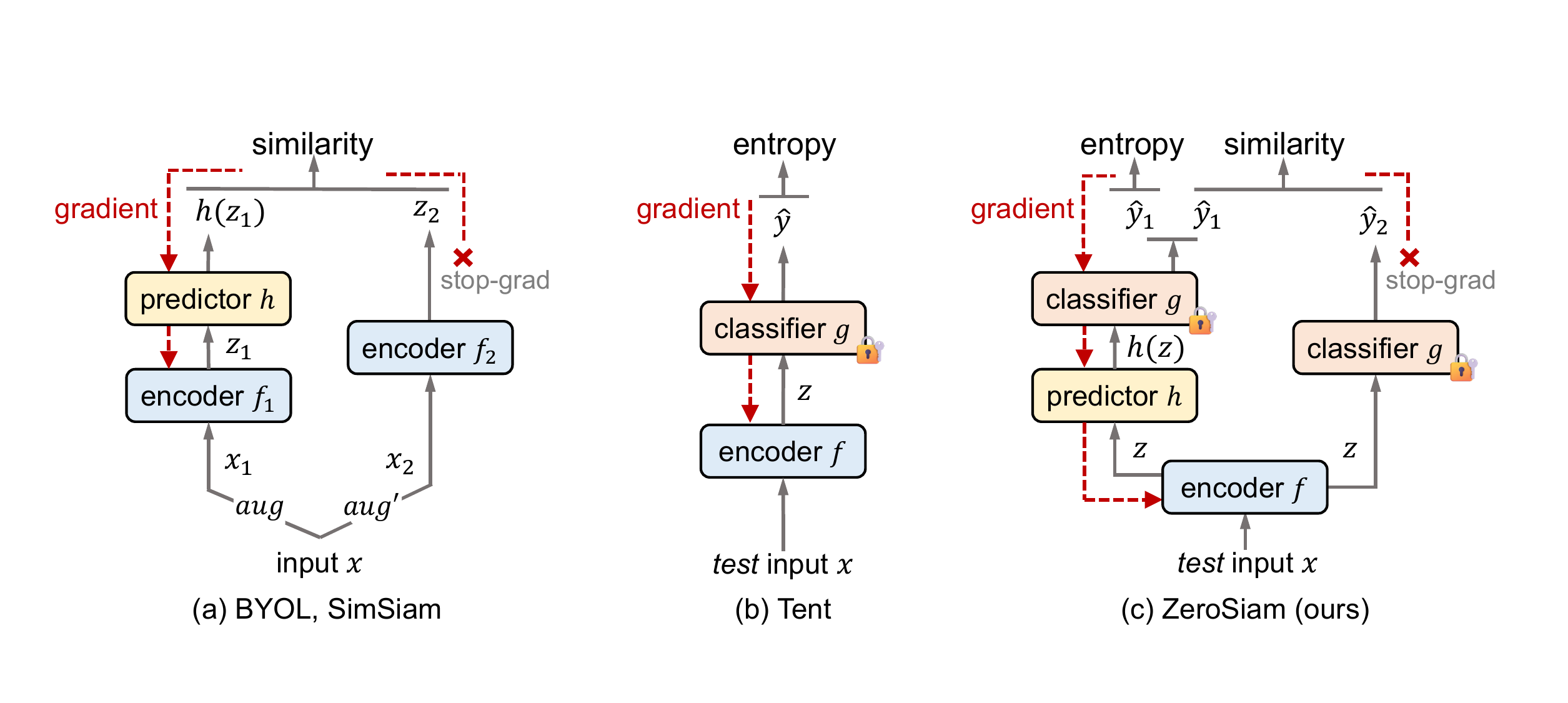}
    \vspace{-0.19in}
    \caption{Comparisons on architectures. (a) Alignment-oriented SSL methods (BYOL~\citep{grill2020bootstrap}, SimSiam~\citep{chen2021exploring}). (b) Test-time entropy minimization (Tent)~\citep{wang2021tent}. (c) Our \methodname, which designs a minimal asymmetry for entropy minimization with a lightweight predictor and a stop-gradient branch—without augmentations, extra encoder passes, or teacher models—to substantially enhance learning stability and boost performance while retaining efficiency.
    }
    \label{fig:architecture-comparisons}
    \vspace{-0.15in}
\end{figure*}


\blue{Therefore, we design \methodname, demonstrating how asymmetry can be efficiently implemented in test-time entropy minimization without using augmentations, extra backbone passes, and teacher models.} To embed asymmetry within a single backbone pass, \methodname decouples a prediction into two asymmetric outputs based on the same feature: an online branch with a learnable predictor and a target branch with stop-gradient. As shown in Figure~\ref{fig:architecture-comparisons}, we minimize entropy on the online branch to learn discriminative features, while performing asymmetric predictor–target alignment to prevent collapsed constants. 
We investigate \methodname's behaviors empirically and theoretically during TTA in Section~\ref{sec:empric_theory_evidence}, demonstrating that our architecture avoids collapse and is also meaningful in \blue{regularizing biased shortcut learning signals} at testing, which improves the performance of entropy optimization even when no collapse occurs. 
\blue{\methodname achieves notable stability and effectiveness with a single extra predictor, introducing negligible overhead (see Table~\ref{tab:efficiency_comparison}).}
Despite its simplicity, \methodname yields superior robustness compared to prior methods across diverse architectures and a wide range of test scenarios, \eg, adapting reliably even with \textit{incorrect} pseudo labels (see Table~\ref{tab:imagenet-c-blindspot}).


\textbf{Main Novelty and Contributions}:
\blue{1) We are the first to study asymmetric structure in TTA and propose \methodname, revealing how asymmetry can be efficiently implemented in test-time entropy minimization for collapse prevention without using augmentations, extra backbone passes, and teacher models.}
2) We provide empirical and theoretical insights into \methodname's behaviors (Section~\ref{sec:empric_theory_evidence}), demonstrating that \methodname also helps absorb and regulate non-generalizable shortcut learning signals at testing, which enhances test-time learning performance even when no collapse occurs.
3) Extensive experiments on both language and vision tasks across both transformer and CNN models with varying sizes and a wide range of challenging test scenarios verify our efficacy.

\section{Preliminary and Problem Statement}

We briefly revisit test-time entropy minimization in this section for the convenience of our method presentation and put detailed related
work discussions into Appendix~\ref{sec:related-work} due to page limits.

\textbf{Test-Time Entropy Minimization} 
Formally, given any model with its encoder $f(\cdot;\theta_f)$ and classifier $g(\cdot;\theta_g)$ trained on source data $\mD_{train}$, test-time entropy minimization conducts model learning directly from the testing data $\mD_{test} = \{\bx_j\}_{j=1}^{M}$ by optimizing 
\begin{equation}\label{eq:entropy_minimization}
\min_{\tilde{\theta}{\subset}\{\theta_f,\theta_g\}}\mL (\bx; \theta_f, \theta_g) = - \sum_c p(\hat{y}_c) \log p(\hat{y}_c),~~~\text{where}~~~p(\hat{y}_c)=g(f(\bx;\theta_f); \theta_g)[c].  
\end{equation}
Here, $p(\hat{y}_c)$ denotes the predicted probability for class $c$, and $\tilde{\theta}$ represents the learnable parameters. 
Based on Eqn.~(\ref{eq:entropy_minimization}), test-time adaptation methods~\citep{wang2021tent,zhang2021memo,marsden2024universal,zhang2025come} often leverage it to adapt a pretrained model to novel domains under potential distribution shifts, \textit{known as} out-of-distribution generalization. In natural language processing, it has also been employed for large language model alignment and improving predictive performance~\citep{Hu2025TLM,jang2025self}. 
By using the model’s own predictive entropy as a self-supervised signal during inference, Eqn.~(\ref{eq:entropy_minimization}) requires neither source data nor modifications to the training process, making it a practical and lightweight solution for real-world deployment.


\textbf{Problem Statement and Motivation} As shown in Figure~\ref{fig:motivation-figures} (c-d), model learning with unsupervised entropy minimization on test data may favor non-generalizable shortcuts, such as (i) inflating the logit norm to reduce entropy, or (ii) aligning all logits towards a dominant mode. Such updates converge toward collapsed trivial solutions (\eg, constant one-hot outputs), causing performance degradation. 
This phenomenon becomes particularly severe in challenging test scenarios or when using weaker base models (\eg, ConvNeXt-Tiny), where the lower source accuracy makes the resulting entropy gradients substantially less reliable, thereby increasing the risk of collapse. Existing methods mainly seek to improve TTA stability with heuristic thresholds to remove some unreliable updates~\citep{niu2023sar,lee2024deyo}, but 
\textit{the objective is yet trivially minimized by collapsed solutions}. 
Consequently, prior methods remain exposed to collapse and are sensitive to architectures and domains (see Tables~\ref{tab:imagenet-c-label-shift}-\ref{tab:imagenet-c-blindspot}).
This gap motivates us to ask: 
\textit{How can we design an efficient, theoretically grounded entropy optimization mechanism that inherently avoids undesired trivial solutions?}







\section{\methodname: Stable Adaptation via Minimal Asymmetric Network}
We achieve the goal of efficient test-time entropy optimization without collapse by designing a lightweight asymmetric Siamese architecture for entropy minimization, namely \methodname. We draw inspiration from the traditional asymmetric design for similarity learning in SSL, but extend its compatibility to the single-branch entropy minimization via a single learnable predictor—without requiring augmentations and extra backbone passes.
We depict the design choice of our \methodname in Section~\ref{sec:zerosiam-architecture}, and provide further theoretical and empirical evidence of \methodname's stability for TTA in Section~\ref{sec:empric_theory_evidence}.
The overall details of \methodname are illustrated in Figure~\ref{fig:architecture-comparisons} (c) and Algorithm~\ref{alg:zerosiam-code}.

\subsection{Minimal Asymmetric Siamese Architecture for Entropy Minimization}\label{sec:zerosiam-architecture}


Unlike cross-entropy training supervised by ground-truth labels, test-time entropy minimization relies on the model’s own predicted labels for self-training. This, however, creates a shortcut solution, where the objective can be minimized simply by producing one-hot outputs irrespective of the input. 
\blue{As shown in Figure~\ref{fig:motivation-figures} (c-d), pure entropy minimization tends to inflate the logit norm and drive all predictions toward a dominant class to reduce the entropy naively, leading to model collapse into degenerate solutions (\ie, constant one-hot outputs). Similar issues appear in negative-free SSL, where the model learns trivial solutions via constant representations regardless of inputs to maximize representation similarity. SSL~\citep{chen2021exploring} prevents such collapse by introducing an asymmetry: with a non-identity predictor $h$ on one branch, and a stop-gradient operation on the other (see Figure~\ref{fig:architecture-comparisons}). 
This ensures that degenerate constant outputs incur non-zero alignment loss, preventing the output of \textit{two branches} from collapsing toward the same constant, showing a promising solution.}

Motivated by this, we hypothesize that an asymmetric mechanism, which remains under-explored in entropy optimization, could also prevent undesired trivial solutions and stabilize self-training. 
However, constructing asymmetry in test-time entropy optimization remains an open question: 
(1)~Unlike contrastive SSL, entropy-based learning relies on \textit{one prediction branch}, making it non-trivial to introduce a Siamese asymmetric structure; 
(2) Its objective optimizes entropy rather than similarity, offering no pairwise stop-gradient alignment that enables asymmetry as in SSL;
(3)~Traditional Siamese designs require extra backbone passes, hindering the efficiency for our test-time learning.

\begin{figure}[t]
\vspace{-0.15in}
    \centering
    \begin{minipage}{1.0\textwidth}
        \centering
\begin{algorithm}[H]\small
         \KwIn{Test samples $\mD_{test}=\{\bx_j\}_{j=1}^{M}$, encoder $f(\cdot;\theta_f)$, classifier $g(\cdot;\theta_g)$, inserted predictor $h(\cdot;\theta_h)$}
         
         \KwOut{Predictions $\{p^r_j\}_{j=1}^{M}$.}
         
         Initialize predictor $h(\cdot;\theta_h)=\text{Identity}$ \tcp*{warm start for fully test-time learning}
         \For{$\bx_j \in \mD_{test}$}{
          Compute encoder feature: $z = f(\bx_j;\theta_f)$ \tcp*{augmentation-free, single-pass}
          
          Compute target~~branch outputs $u^r = g(z;\theta_g)$, $p^r=\softmax(u^r)$ \tcp*{stop-gradient}

          Compute online branch outputs $u^o = g(h(z;\theta_h);\theta_g)$, $p^o=\softmax(u^o)$ ;

          Update $\theta_h$ and normalization params $\tilde{\theta} \subset\theta_f$ via Eqn.~(\ref{eq:zerosiam-loss}).
         }
         \caption{
         \textit{ZeroSiam}: Test-Time Asymmetric Entropy Minimization.
         }
         \label{alg:zerosiam-code}
\end{algorithm}
    \end{minipage}%
\end{figure}

To bridge these gaps, our key idea is to embed asymmetry within a single forward pass. We achieve this by inserting a predictor and the stop-gradient operator before the classifier, which decouples a prediction into two \emph{asymmetric} views from the same feature: an online branch (through the predictor) for \textit{optimizing entropy}, and a target branch (the original logits) for \textit{asymmetric stop-gradient alignment}. In this way, it establishes an efficient asymmetric Siamese for entropy optimization, using no augmentations or extra backbone passes, yet still exploiting asymmetric predictor–target alignment to prevent collapsed constants and stabilize adaptation. Formally, let $f(\cdot;\theta_f)$ denote the encoder, $g(\cdot;\theta_g)$ the classifier, and $h(\cdot;\theta_h)$ a lightweight \textit{linear} predictor. Given a test sample $\bx$, \methodname computes the encoder feature $z=f(\bx;\theta_f)$ once, then defines two asymmetric branches:
\begin{align}
    u^r &= g(z;\theta_g), 
    && \text{(target branch, stop-gradient)} \\
    u^o &= g(h(z;\theta_h);\theta_g),
    && \text{(online branch)}.
\end{align}
Let $p^r = \text{softmax}(u^r)$ and $p^o = \text{softmax}(u^o)$.
The \methodname\ objective combines entropy minimization on the online branch 
with an alignment regularizer to the target branch:
\begin{align}
    \mathcal{L} 
    = H(p^o) + \alpha \, D\!\left(p^o \,\|\, \mathrm{sg}[p^r]\right),
    \label{eq:zerosiam-loss}
\end{align}
where $H(p) = -\sum_c p_c \log p_c$ is the prediction entropy, $D(\cdot\|\cdot)$ is a divergence (\eg, $\ell_2$ or KL)\footnote{\methodname uses symmetric KL loss as $D(\cdot\|\cdot)$ to penalize both under- and over-coverage of modes, \ie, $D(p\|q) = \KL(p\|q) + \KL(q\|p)$, and $\alpha$ is fixed to 1. Results with KL and reverse KL are put in Table~\ref{tab:zerosiam-generality}.}, 
and $\mathrm{sg}[\cdot]$ denotes stop-gradient. 
Here, $\theta_h$ is initialized as an identity to ensure a warm start, which quickly diverges during online learning, as shown in Figure~\ref{fig:motivation-figures} (a), creating the asymmetry necessary to prevent collapsed constant solutions. 
\blue{Even when using a \textit{randomly initialized} predictor to introduce asymmetry, Table~\ref{tab:predictor_design} shows that \methodname helps prevent collapse in Tent~\citep{wang2021tent}, revealing asymmetry's value.}
In essence, the asymmetric predictor–stop-gradient alignment inherently rules out collapsed constant solutions as valid minima, thereby avoiding collapse and improving stability during online entropy minimization.
Unlike BYOL and SimSiam, which rely on augmentations or extra backbone passes, \methodname shows that asymmetry can also be instantiated within pure online entropy minimization in a minimal and efficient manner.


\begin{figure*}[t]
    \centering
    \includegraphics[width=1.\linewidth]{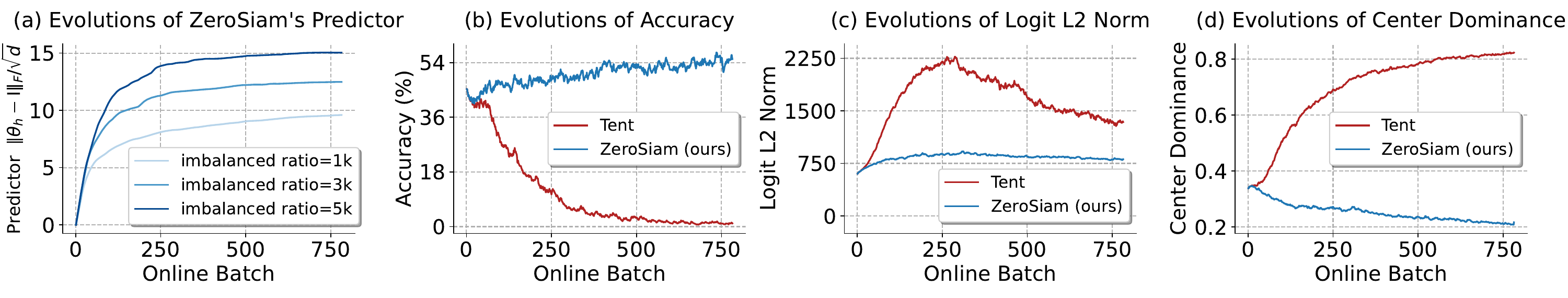}
    \vspace{-0.2in}
    \caption{Empirical evidence of \methodname's stabilization effects. (a) records the Frobenius distance between $\theta_h$ and the identity matrix under the non-i.i.d. streams with varying imbalance ratios~\citep{niu2023sar}. (b-d) record the ODD accuracy, logits $L_2$ norm, and center dominance in model predictions under a mild test scenario~\citep{wang2021tent}. Center dominance is calculated by $\|\overline{u}\|/\|u\|$ following~\citep{zhang2022avoid}. Experiments are conducted on ImageNet-C (Snow, level 5) with ResNet50-GN. For fair comparisons, \methodname and Tent use the same learning rate configuration.
    }
    \label{fig:motivation-figures}
    \vspace{-0.1in}
\end{figure*}
\subsection{How \methodname Resists Collapse: Empirical and Theoretical Insights}\label{sec:empric_theory_evidence}
Although asymmetric designs (\eg, SimSiam for SSL) are known to prevent collapsed trivial solutions, test-time entropy minimization poses new challenges like logit norm inflation and single-class dominance. 
\blue{In this section, we rethink TTA stability from shortcuts, and find that \methodname not only inherits the anti-collapse effect of asymmetry, but also \textit{plays a broader role} in absorbing biased shortcuts and regularizing dynamics during entropy minimization. We present empirical evidence showing these stabilizing effects, and further explain \methodname's behaviors with theoretical insights.}


\blue{\textbf{Rethinking TTA Stability from Shortcuts} During unsupervised TTA, the model can exploit shortcuts (\eg, assigning all predictions to a dominant class) to naively reduce the objective, resulting in model collapse, as shown in Figure~\ref{fig:motivation-figures}. Therefore, to improve TTA stability, we seek to identify what shortcuts are and suppress the exploitation of shortcuts during unsupervised learning. 
Instead of identifying shortcuts specifically for each objective, as in Figures~\ref{fig:motivation-figures} \&~\ref{fig:motivation-figures-suppl}, our key insight is that models with different sizes and architectures exploit similar shortcut patterns under the same objective. 
In this sense, we \textit{define shortcuts in an objective as patterns learnable by an extremely simple network (\eg, the predictor $h$)}, which naively reduce the loss value without improving generalization.
}
\vspace{-0.15in}

\textbf{Empirical Evidence of \methodname's Stabilization Effects} 
In Figure~\ref{fig:motivation-figures}, we visualize the effects of our \methodname under the unsupervised TTA. We conduct experiments with ResNet50-GN on ImageNet-C, and record the evolution of OOD accuracy, representative failure modes of entropy-based TTA, and changes in predictor parameters $\theta_h$ (\ie, a simple FC layer, learning preliminarily non-generalizable modes due to capacity). From the results, we highlight two key observations:

\textit{\textbf{Observation 1: Predictor as an Effective Absorber of Biased \blue{Shortcut} Signals.}} 
Figure~\ref{fig:motivation-figures} (a) shows that the predictor parameters $\theta_h$ deviate more rapidly and substantially from the identity mapping when facing online data streams with a higher label imbalance ratio~\citep{niu2023sar}, which produces more \blue{shortcut} signals for entropy minimization that may result in collapse. 
\blue{In this sense, this pronounced deviation thus suggests an active absorption of biased shortcut signals within predictor $h$ (an extremely simple network)}, adaptively posing stronger alignment regularization in Eqn.~(\ref{eq:zerosiam-loss}) to stabilize entropy adaptation under more collapse-prone TTA scenarios, as shown in Figure~\ref{fig:similarity_loss}.




\textit{\textbf{Observation 2: Asymmetry Suppresses Learning Non-Generalizable Collapse Modes.}} In Figure~\ref{fig:motivation-figures} (c), the logit $L_2$ norm under Tent suffers from a rapid inflation, whereas \methodname grows slowly and then stabilizes. In Figure~\ref{fig:motivation-figures} (d), Tent increasingly aligns logits toward a dominant mode (higher center dominance), while our \methodname suppresses such dominance over time.
Consequently, Tent collapses into non-generalizable shortcuts, whereas \methodname mitigates them and achieves better generalization as shown in Figure~\ref{fig:motivation-figures} (b), similarly even when no collapse occurs (see Figure~\ref{fig:motivation-figures-suppl}).

\blue{Overall, Figure~\ref{fig:motivation-figures} shows that \methodname absorbs and regulates non-generalizable shortcuts during TTA beyond collapse prevention. This reveals a distinct advantage of \methodname, and provides TTA-specific evidence on the role of asymmetry in generalization that extends prior findings from SSL.}


\textbf{Theoretical Insights of \methodname's Mechanism} We provide a theoretical analysis to explain \methodname's behaviors, showing that (i) the predictor enhances the online branch’s dynamism in exploring the parameter space, thereby facilitating more efficient entropy optimization;  (ii) the alignment term explicitly regulates the biased learning signals induced by entropy minimization, preventing the\linebreak model from being exposed to collapsed trivial solutions and meanwhile prompting performance.





\begin{thm} \textbf{(Optimization and Stability of \methodname)} \label{proofs: thm1}
    Consider the \methodname objective $\mathcal{L} = H(p^o) + \alpha \, D\!\left(p^o \,\|\, \mathrm{sg}[p^r]\right)$, where $H(\cdot)$ denotes the entropy loss, $D(\cdot)$ the alignment regularizer, and $p^o, p^r \in \Delta^{|\mathcal{C}|-1}$ are the probability distributions induced by the online and target branches. Given the encoder $f$, classifier $g$ and predictor $h$. Under Assumptions~\ref{proofs: ass1} and~\ref{proofs: ass2}, the following hold:  \\ 
    (1) For $\alpha = 0$, the entropy variation satisfies $|\Delta H(p^o)| > |\Delta H(p^r)|$, and the Hessian of \(H(p^o)\) attains its minimal eigenvalue along collapse directions $\mathbf{v}$:
    \begin{equation*}
        \lambda_{\min}\!\left(\nabla^2 H(p^o)\right) = \mathbf{v}^{\top}\nabla^2 H(p^o)\mathbf{v} .
    \end{equation*} 
    (2) For $\alpha > 0$, the predictor $h$ serves as a filtering mechanism that suppresses gradient update directions corresponding to over-amplified logits, and the system converges to a stable equilibrium: there exists $h_{\min} > 0$ such that
    \begin{equation*}
        H(p^o) > h_{\min},~p^o \to p^r .
    \end{equation*}
\end{thm}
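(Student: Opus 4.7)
The plan is to leverage the chain-rule structure of the two asymmetric branches and the curvature geometry of the entropy functional on the simplex. The stop-gradient and the extra linear map $h$ give us two technical handles: first, they make the online branch more responsive to updates than the target branch; second, they allow us to decouple the gradient dynamics between the shortcut-seeking direction and the shared backbone.

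For part (1) with $\alpha=0$, I would expand gradients of the online entropy via the chain rule,
\begin{equation*}
\nabla_\theta H(p^o) \;=\; J_h^{\!\top} J_g^{\!\top}\, \nabla_{p}H(p^o),
\end{equation*}
whereas the target gradient omits $J_h$. Under Assumption~\ref{proofs: ass1}, which I expect to provide a nondegenerate spectrum for $\theta_h$ after its identity warm-start, this yields $\|\nabla_\theta H(p^o)\| > \|\nabla_\theta H(p^r)\|$, and, by a one-step Taylor expansion of $H$, the strict inequality $|\Delta H(p^o)| > |\Delta H(p^r)|$. For the Hessian statement, I would use the standard Fisher-information form $\nabla^2_{u} H(p) = \mathrm{diag}(p) - pp^{\!\top}$ in logit space. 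Collapse directions $\mathbf{v}$ correspond to those aligned with the dominant class (up to the simplex tangent), and a direct Rayleigh-quotient evaluation along these $\mathbf{v}$ reproduces the minimal eigenvalue, establishing that the entropy landscape is flattest exactly along the collapse-inducing subspace. This flatness is what makes pure entropy minimization vulnerable to logit-norm inflation in the first place.

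For part (2) with $\alpha>0$, I would analyze the joint gradient of $\mathcal{L}$. The stop-gradient zeros the contribution of $p^r$ to $\nabla D$, so the alignment term acts only on $p^o$, producing a restoring force toward $p^r$. Writing the update direction as
\begin{equation*}
\nabla_\theta \mathcal{L} \;=\; \nabla_\theta H(p^o) \;+\; \alpha\, J_h^{\!\top} J_g^{\!\top}\, \nabla_{p^o} D(p^o\,\|\,p^r),
\end{equation*}
I would argue that any shortcut direction that inflates logits or concentrates mass on a dominant class pushes $p^o$ away from $p^r$ and thus incurs $\nabla D \neq 0$. Because $J_h$ is a linear projector, it channels these collapse-aligned gradients through $\theta_h$, where the alignment penalty opposes them; this is the precise sense in which $h$ acts as a filter of over-amplified logits. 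At a stationary point, $\nabla_\theta H(p^o) = -\alpha\, \nabla_\theta D(p^o\,\|\,p^r)$, and by Assumption~\ref{proofs: ass2} this fixed-point condition is attained only at $p^o \to p^r$. Since $p^r$ is produced by the unmodified classifier on the encoder feature with stop-gradient, its entropy is bounded away from zero by the source model's structural uncertainty, giving the lower bound $H(p^o) > h_{\min}$.

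The main obstacle I anticipate is the coupling between $\theta_h$ and the shared backbone $\theta_f$: the absorption argument requires that biased gradients get preferentially routed through $\theta_h$ rather than leaking into $\theta_f$, which is not automatic from the chain rule alone. Resolving this cleanly will likely require combining (i) the capacity asymmetry between the lightweight predictor and the backbone, (ii) the stop-gradient on the target branch so that the alignment loss does not push $\theta_f$ toward the shortcut, and (iii) a quantitative statement from Assumption~\ref{proofs: ass2} to convert the stationarity condition into an actual convergence statement with an explicit $h_{\min}$ rather than a qualitative one.
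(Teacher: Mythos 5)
Your proposal has genuine gaps in both parts. In part (1), the Hessian you invoke is wrong: $\mathrm{diag}(p)-pp^{\top}$ is the softmax Jacobian (Hessian of log-sum-exp), not the Hessian of $H(\mathrm{softmax}(u))$ in logit space, and it is positive semidefinite, so a Rayleigh-quotient computation with it can never exhibit the negative curvature along collapse directions that the claim is really about. The paper's Lemma~\ref{proofs: lemma2} computes the true entropy Hessian, $[\nabla^2_u H]_{ij}=p_i(\delta_{ij}-p_j)(H-\log p_i-1)$, and shows that as $p_c\to 1$ the diagonal entry for the dominant class behaves like $-\mu<0$ while the other entries are positive, which is what identifies the explicitly constructed parameter-space direction $\mathbf{v}=(\mathbf{v}_h,\mathbf{v}_f)$ as the minimal-eigenvalue (self-reinforcing) direction. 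Your first claim in (1) is also under-argued: comparing gradient norms at a single point (online gradient ``has an extra $J_h$'') does not yield $|\Delta H(p^o)|>|\Delta H(p^r)|$, and Assumption~\ref{proofs: ass1} is Lipschitz continuity of the encoder, not a spectral condition on $\theta_h$. The paper's route is different: the target entropy change is bounded \emph{through the shared encoder update}, which is itself driven by the online gradient, giving $|\Delta H^r|\le L_H\|g\|_2 L_f\,\eta_f\,\|\nabla_{\theta_f}H(p^o)\|$ (Lemma~\ref{proofs: lemma1}), while the descent lemma gives $|\Delta H^o|\ge \eta_{\min}\|\nabla_\phi H\|^2$; the strict inequality then requires the learning-rate condition $\eta_h=k\eta_f$ with $k$ large enough, a quantitative ingredient absent from your sketch.

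In part (2), your stationarity argument does not go through: at a stationary point of $\mathcal{L}$ the two forces balance, $\nabla_\theta H(p^o)=-\alpha\nabla_\theta D$, but nothing in Assumption~\ref{proofs: ass2} forces this to happen only when $p^o\to p^r$ (if $p^o=p^r$ then $\nabla D=0$ and you would additionally need $\nabla H(p^o)=0$, which generally fails). More importantly, the explicit lower bound is exactly the piece you defer: the paper obtains it by (i) Gibbs' inequality, $H(p^o)\ge H(p^r)-\KL(p^o\|p^r)\ge H(p^r)-D_{\mathrm{SKL}}(p^o\|p^r)$, (ii) a Lyapunov argument with $V(t)=\mathcal{L}(t)$ monotone and bounded below, which yields $D_{\mathrm{SKL}}(p^o\|p^r)\le V(0)/\alpha$, and (iii) the assumption that the target stays in the non-collapse region $\min_c p^r_c\ge\delta$, giving a positive lower bound on $H(p^r)$; combining these gives $h_{\min}$ explicitly, with $\alpha$ chosen large enough that $h_{\min}>0$. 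Your appeal to ``the source model's structural uncertainty'' has no counterpart in the assumptions, so without the Gibbs--Lyapunov chain the bound $H(p^o)>h_{\min}$ is not established.
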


\begin{remark}
    Theorem \ref{proofs: thm1} characterizes \methodname's optimization dynamics and convergence behavior. Initially, the online and target branches coincide within the non-collapse region, so entropy minimization dominates. In this regime, the online branch exhibits a stronger tendency toward collapse than the target branch in both magnitude and directional dominance (\textit{\textbf{Conclusion (1)}}). As optimization proceeds, the alignment term $D(\cdot)$ constrains the predictor $h$ to filter/suppress the gradient components that drive $p^o$ away from $p^r$, i.e., regulating the biased learning signals that may result in collapse, guiding the system toward a stable and non-collapsing equilibrium (\textit{\textbf{Conclusion (2)}}). \blue{This shows that the predictor in the online branch purposely converts biased signals (\eg, shortcuts) into explicit discrepancies, where these biased signals are then penalized by the alignment loss.}

    
\end{remark}

\begin{wraptable}{r}{0.3\linewidth}
\vspace{-0.55in}
\caption{Efficiency comparison for processing 50,000 images (Gaussian noise, level 5 on ImageNet-C) via an RTX 3090 GPU on ViT-Base.}
\vspace{-0.13in}
\label{tab:efficiency_comparison}
\newcommand{\tabincell}[2]{\begin{tabular}{@{}#1@{}}#2\end{tabular}}
\begin{center}
\begin{threeparttable}
    \resizebox{1.0\linewidth}{!}{
 	\begin{tabular}{lr}
 	 Method & GPU time \\
 	\midrule
        Tent~\citep{wang2021tent} & 193 secs  \\
        SAR~\citep{niu2023sar} & 382 secs  \\
        EATA~\citep{niu2022eata} & 197 secs  \\
        COME~\citep{zhang2025come} & 300 secs \\
        DeYO~\citep{lee2024deyo} & 280 secs \\
    \midrule
         \methodname (ours) & 193 secs   \\
	\end{tabular}
	}
\end{threeparttable}
\end{center}
\vspace{-0.25in}
\end{wraptable}

\section{Comparisons with SOTAs}\label{sec:main_exp}

\textbf{Dataset and Methods} For image classification, we conduct experiments based on ImageNet-C~\citep{hendrycks2019benchmarking}, a large-scale benchmark for out-of-distribution generalization. It contains 15 types of 4 main categories (noise, blur, weather, digital) corrupted images and each type has 5 severity levels. For natural language reasoning, we assess logical reasoning capabilities on Math-500~\citep{lightman2023let}, CollegeMath~\citep{tang2024mathscale}, AIME24, and Minerva~\citep{lewkowycz2022solving}, ranging from high-school mathematics to advanced competition problems.
We compare with the following methods. EATA~\citep{niu2022eata}, SAR~\citep{niu2023sar}, DeYO~\citep{lee2024deyo}, and CETA~\citep{yang2024ceta} are entropy-based methods with sample selection or update reweighting. COME~\citep{zhang2025come} predefines an uncertainty mass, implemented based \linebreak on DeYO for comparisons. TLM~\citep{Hu2025TLM} refines the prediction entropy on input sequences.


\textbf{Models and Implementation Details} We conduct experiments on ResNet50-GN, ViT-Base, ViT-Small, ConvNeXt-Tiny, and Swin-Tiny that are obtained from \texttt{timm}~\citep{rw2019timm} for image classification, and Llama3.1-8B-Instruct~\citep{dubey2024llama} for natural language reasoning. We use symmetric KL as the divergence term in Eqn.~(\ref{eq:zerosiam-loss}), and $\alpha$ is fixed to 1 without tuning. We update predictor params $\theta_h$, and the affine parameters in norm layers for the vision task per Tent~\citep{wang2021tent}, and the LoRA~\citep{hu2022lora} parameters for LLM. More details are put in Appendix~\ref{sec:more_details}.


\subsection{Robustness to Corruption under Various Wild Test Settings}

In this section, we evaluate the robustness of \methodname under the wild test scenarios as established by SAR~\citep{niu2023sar}, which simulate practical TTA scenarios in real-world applications, including 1) \textbf{{online imbalanced label distribution shifts}}: the samples in a non-i.i.d. data stream come in class order (\textbf{in Table~\ref{tab:imagenet-c-label-shift}}), 2) \textbf{{mixed distribution shifts}}: the samples in online data stream are obtained from a mixture of 15 corruption types (a total of 15$\times$50,000 images) in ImageNet-C, \ie, a prolonged TTA scenario with multiple concurrent shifts (\textbf{in Table~\ref{tab:imagenet-c-mix-shift}}), and 3) \textbf{{batch size = 1}}: the samples come one-by-one due to the scarcity of data in online streams (\textbf{in Table~\ref{tab:imagenet-c-bs1}}). Compared to SAR, we extend the evaluations using more models, including ViT-Small, ConvNeXt-Tiny, and Swin-Tiny.

From the results, we have the following general observations: 1) \textit{Sensitivity on model choice:} While sample selection methods like DeYO perform competitively on ResNet50-GN and ViT-Base, they remain unstable and \textit{sensitive to model choices}, particularly with weaker models where pseudo labels are highly unreliable, \eg, the accuracy of~0.1\% (DeYO) \textit{vs.} 26.5\% (\methodname) on 
\begin{wraptable}{r}{0.45\textwidth}
  \setlength{\tabcolsep}{6pt}
  \vspace{-0.03in}
    \caption{Comparisons with SOTAs on ImageNet-C (level 5) under \textbf{\textsc{mixture of 15 corruption types}} \wrt \textbf{Acc (\%)}.}
    \vspace{-0.14in}
    \label{tab:imagenet-c-mix-shift}
\newcommand{\tabincell}[2]{\begin{tabular}{@{}#1@{}}#2\end{tabular}}
 \begin{center}
     \begin{threeparttable}
      \resizebox{\linewidth}{!}{ 
        \begin{tabular}{l|ccccc|>{\columncolor{blue!8}}c}
          \multicolumn{1}{l|}{Method} & \multicolumn{1}{c}{R-50} & \multicolumn{1}{c}{Vit-B} & \multicolumn{1}{c}{Vit-S} & \multicolumn{1}{c}{Con-T} & \multicolumn{1}{c}{Swi-T} & \multicolumn{1}{|c}{Avg.} \\
          \midrule
          NoAdapt & 30.6 & 29.9 & 22.9 & 34.7 & 31.3 & 29.9 \\
          Tent & \drop{13.4} & \drop{16.5} & \drop{7.1} & \drop{27.8} & \drop{6.7} & 14.3 \\
          SAR & 38.1 & 55.7 & 41.3 & 36.5 & \drop{25.9} & 39.5 \\
          EATA & 38.3 & 57.1 & 36.9 & 41.1 & 41.3 & 42.9 \\
          COME & \drop{30.0} & 58.5 & 40.9 & \drop{23.7} & \drop{19.1} & 34.4 \\
          DeYO & 38.6 & 59.4 & 35.9 & \drop{27.8} & \drop{29.0} & 38.1 \\
          \rowcolor{black!8} \textit{ZeroSiam} & 40.4 & 57.1 & 41.7 & 42.4 & 39.3 & \textbf{44.2} \\
        \end{tabular}
      }
    \end{threeparttable}
	 \end{center}
\vspace{-0.19in}
\end{wraptable}
\textit{noise} corruptions with ViT-Small under the label shift scenario in Table~\ref{tab:imagenet-c-label-shift}, and SAR and COME also fail to perform TTA (worse than NoAdapt) on Swin-Tiny under mixed shifts in Table~\ref{tab:imagenet-c-mix-shift}; 
2) \textit{Sensitivity on test scenario choice:} For example, compared to results in Table~\ref{tab:imagenet-c-label-shift} under label shifts, EATA's performance drops significantly on Swin-Tiny under TTA with a single sample in Table~\ref{tab:imagenet-c-bs1}, \eg, 32.9\%$\rightarrow$25.3\% regarding the accuracy on \textit{blur} corruptions. This highlights the instability in prior methods and that achieving \textit{robust TTA across diverse architectures and test scenarios remains highly non-trivial}; 3) Our \methodname is simple yet effective. \methodname neither relies on careful sample selection strategies as in SAR or DeYO, nor requires source data as in EATA, and it achieves consistently stable and high performance across models and scenarios, markedly outperforming prior methods, \eg, 52.9\% (\methodname) \textit{vs.} 38.8\% (SAR) in terms of average accuracy under label shifts in Table~\ref{tab:imagenet-c-label-shift}, and 
an average gain of +8.5\% over DeYO under the batch size of 1 in Table~\ref{tab:imagenet-c-bs1}, suggesting our effectiveness.

\subsection{Effectiveness for Incentivizing Reasoning Capability Online}
A statistical reasoning model may fail to generalize to the diverse reasoning tasks in real-world applications, while retraining the model offline is both expensive and time-consuming. We further explore adaptively incentivizing the reasoning capability in a large language model \textit{online}, by minimizing the entropy of its own predicted tokens. From Table~\ref{tab:math-reasoning}, we have the following observations: 1) Entropy optimization demonstrates a great potential in enhancing the model's reasoning online, \ie, all prior methods achieve an accuracy gain by +3.34\% on AIME24. 2) However, for a comprehensive benchmark like CollegeMath, covering algebra, calculus, probability, differential equations, \textit{etc}, existing methods may suffer from overfitting and poor generalization, \eg, -0.83\% on Tent and -0.75\% on SAR. 3) \methodname enhances generalization for online reasoning incentivization with its predictor and asymmetric alignment design, which regularizes non-generalizable shortcuts as discussed in Section~\ref{sec:empric_theory_evidence}, yielding significant further gains, \eg, +10.00\% on AIME24, +3.40\% on Math-500, and +3.94\% on average. This suggests the potential of our \methodname to boost both the perception robustness and reasoning capability in a model during inference for greater intelligence.


\subsection{Robustness to Adversarial Adaptation Scenarios (Stress Test)}


\begin{table*}[t]
  \setlength{\tabcolsep}{6pt}
  \renewcommand{\arraystretch}{1.02}
    \caption{Comparisons with SOTAs on ImageNet-C (level 5) under \textbf{\textsc{online label shifts}} (imbalance ratio=$\infty$) \wrt \textbf{Acc(\%)}. ``$\mathcal{N}$/$\mathcal{B}$/$\mathcal{W}$/$\mathcal{D}$" are short for \textit{Noise/Blur/Weather/Digital} corruptions. \drop{RED} marks results worse than NoAdapt. \textbf{Detailed results of each corruption are put in Appendix.}
    }
    \vspace{-0.15in}
  \label{tab:imagenet-c-label-shift}
  \newcommand{\tabincell}[2]{\begin{tabular}{@{}#1@{}}#2\end{tabular}}
  \begin{center}
  \begin{threeparttable}
  \LARGE
    \resizebox{1.0\linewidth}{!}{
      \begin{tabular}{l|cccc|cccc|cccc|cccc|cccc|>{\columncolor{blue!8}}c}
        \multicolumn{1}{c}{} & \multicolumn{4}{c}{ResNet50-GN} & \multicolumn{4}{c}{ViT-Base} & \multicolumn{4}{c}{ViT-Small} & \multicolumn{4}{c}{ConvNeXt-Tiny} & \multicolumn{4}{c}{Swin-Tiny} \\
        Method & $\mathcal{N}$ & $\mathcal{B}$ & $\mathcal{W}$ & $\mathcal{D}$ & $\mathcal{N}$ & $\mathcal{B}$ & $\mathcal{W}$ & $\mathcal{D}$ & $\mathcal{N}$ & $\mathcal{B}$ & $\mathcal{W}$ & $\mathcal{D}$ & $\mathcal{N}$ & $\mathcal{B}$ & $\mathcal{W}$ & $\mathcal{D}$ & $\mathcal{N}$ & $\mathcal{B}$ & $\mathcal{W}$ & $\mathcal{D}$ & Avg. \\
        \midrule
        NoAdapt & 18.6 & 19.4 & 47.7 & 33.9 & 8.1 & 28.4 & 36.1 & 41.6 & 1.8 & 23.4 & 27.7 & 33.5 & 23.7 & 24.8 & 52.1 & 35.4 & 25.8 & 21.7 & 50.5 & 25.9 & 29.0 \\
        Tent & \drop{2.9} & \drop{14.6} & \drop{27.7} & 38.0 & 22.9 & 54.3 & 41.3 & 64.7 & \drop{0.3} & 40.5 & 38.5 & 48.2 & \drop{22.8} & \drop{22.9} & \drop{50.2} & \drop{34.4} & \drop{13.6} & \drop{19.4} & \drop{34.9} & \drop{25.3} & 30.9 \\
        SAR & 35.0 & 24.9 & 47.8 & 40.6 & 46.2 & 55.8 & 61.4 & 65.8 & \drop{1.6} & 42.2 & 46.7 & 52.8 & 35.5 & \drop{24.2} & \drop{43.3} & 38.0 & 29.2 & \drop{19.0} & \drop{39.4} & 26.7 & 38.8 \\
        EATA & 27.8 & 20.4 & 42.7 & 34.6 & 35.7 & 47.4 & 61.6 & 51.6 & 16.3 & 42.2 & 54.8 & 54.2 & 35.4 & 29.0 & 54.9 & 42.2 & 35.6 & 32.9 & 53.4 & 41.1 & 40.7 \\
        COME & \drop{18.4} & \drop{19.2} & 47.7 & \drop{33.4} & 49.8 & 59.2 & 69.8 & 67.5 & \drop{0.2} & 39.5 & 56.0 & 59.6 & 43.1 & \drop{24.4} & 62.9 & 44.9 & 42.9 & 23.9 & 59.7 & 27.7 & 42.5 \\
        DeYO & 43.7 & 23.2 & 54.2 & 54.5 & 48.0 & 56.6 & 71.2 & 69.9 & \drop{0.1} & 41.4 & 59.4 & 60.7 & 36.0 & \drop{19.0} & \drop{43.8} & \drop{34.8} & 36.5 & 22.7 & 52.1 & 34.7 & 43.1 \\
        \rowcolor{black!8} \textit{ZeroSiam} & 43.7 & 41.2 & 62.1 & 57.3 & 52.8 & 60.1 & 71.2 & 69.6 & 26.5 & 50.4 & 62.0 & 60.8 & 43.0 & 40.0 & 62.4 & 53.8 & 45.3 & 42.2 & 59.3 & 53.6 & \textbf{52.9} \\
      \end{tabular}
    }
  \end{threeparttable}
  \end{center}
  \vspace{-0.03in}
\end{table*}

\begin{table*}[t]
  \setlength{\tabcolsep}{6pt}
  \renewcommand{\arraystretch}{1.02}
  \caption{Comparisons with SOTAs on ImageNet-C (level 5) with \textbf{\textsc{Batch Size=1}} \wrt \textbf{Acc (\%)}.}
  \vspace{-0.15in}
  \label{tab:imagenet-c-bs1}
  \newcommand{\tabincell}[2]{\begin{tabular}{@{}#1@{}}#2\end{tabular}}
  \begin{center}
  \begin{threeparttable}
  \LARGE
    \resizebox{1.0\linewidth}{!}{
      \begin{tabular}{l|cccc|cccc|cccc|cccc|cccc|>{\columncolor{blue!8}}c}
        \multicolumn{1}{c}{} & \multicolumn{4}{c}{ResNet50-GN} & \multicolumn{4}{c}{ViT-Base} & \multicolumn{4}{c}{ViT-Small} & \multicolumn{4}{c}{ConvNeXt-Tiny} & \multicolumn{4}{c}{Swin-Tiny} \\
        Method & $\mathcal{N}$ & $\mathcal{B}$ & $\mathcal{W}$ & $\mathcal{D}$ & $\mathcal{N}$ & $\mathcal{B}$ & $\mathcal{W}$ & $\mathcal{D}$ & $\mathcal{N}$ & $\mathcal{B}$ & $\mathcal{W}$ & $\mathcal{D}$ & $\mathcal{N}$ & $\mathcal{B}$ & $\mathcal{W}$ & $\mathcal{D}$ & $\mathcal{N}$ & $\mathcal{B}$ & $\mathcal{W}$ & $\mathcal{D}$ & Avg. \\
        \midrule
        NoAdapt & 18.6 & 19.4 & 47.7 & 33.9 & 8.1 & 28.4 & 36.1 & 41.7 & 1.8 & 23.3 & 27.8 & 33.6 & 23.7 & 24.6 & 52.2 & 35.3 & 25.9 & 21.6 & 50.6 & 25.8 & 29.0 \\
        Tent & \drop{2.6} & \drop{13.3} & \drop{27.5} & 37.9 & 28.8 & 51.7 & 43.7 & 61.9 & 23.7 & 24.6 & 52.2 & 35.3 & 32.0 & \drop{23.9} & \drop{51.9} & 36.5 & \drop{24.2} & 23.0 & \drop{42.1} & 26.7 & 33.2 \\
        SAR & 24.3 & 23.2 & \drop{46.9} & 40.6 & 39.6 & 53.7 & 63.1 & 64.6 & 25.9 & \drop{21.6} & 50.6 & \drop{25.8} & 31.2 & 25.3 & \drop{51.0} & 35.8 & \drop{25.2} & 22.8 & \drop{47.0} & 26.3 & 37.2 \\
        EATA & 26.3 & 23.3 & 50.3 & 43.5 & 29.8 & 43.7 & 52.3 & 56.4 & 2.2 & 30.7 & 36.9 & 43.6 & 28.0 & 25.1 & \drop{51.8} & 36.4 & 33.0 & 25.3 & \drop{50.6} & 30.8 & 36.0 \\
        COME & \drop{18.4} & \drop{19.3} & \drop{47.6} & \drop{33.5} & 51.7 & 55.1 & 70.6 & 69.6 & \drop{1.5} & 41.0 & 56.7 & 59.6 & 41.8 & 25.5 & 62.0 & 43.5 & 39.7 & 25.3 & 58.3 & 38.6 & 43.0 \\
        DeYO & 43.2 & 28.4 & 50.2 & 55.5 & 53.7 & 59.0 & 71.7 & 70.4 & \drop{0.5} & 42.4 & 59.1 & 60.4 & 37.7 & \drop{21.3} & \drop{44.2} & 35.3 & 36.4 & 24.6 & \drop{50.1} & 35.3 & 44.0 \\
        \rowcolor{black!8} \textit{ZeroSiam} & 42.6 & 40.8 & 62.9 & 58.6 & 52.5 & 59.9 & 70.9 & 69.6 & 27.1 & 50.1 & 61.4 & 60.5 & 42.2 & 39.3 & 61.7 & 52.8 & 44.9 & 41.5 & 58.8 & 52.9 & \textbf{52.5} \\
      \end{tabular}
    }
  \end{threeparttable}
  \end{center}
  \vspace{-0.12in}
\end{table*}


\textbf{Robustness under Blind-Spot Adaptation~} Existing entropy-based TTA methods largely rely on reliable sample selection strategies to stabilize adaptation~\citep{niu2023sar,lee2024deyo}. However, in many real-world scenarios, adaptation inevitably involves numerous erroneous pseudo labels that cannot be identified, especially when the model's initial accuracy is very low. This raises a critical concern: How safe is existing TTA in the real world? To stress-test this vulnerability, we construct a \textit{blind-spot subset} consisting only of samples misclassified by the NoAdapt model for each domain and adapt exclusively on this subset, before evaluation on the full dataset.  As shown in Table~\ref{tab:imagenet-c-blindspot}, prior methods suffer from severe instability, often collapse and underperform the NoAdapt baseline after TTA, \eg, DeYO deteriorates accuracy on \textit{12} out of 20 scenarios. 
In contrast, \methodname introduces an efficient asymmetry to prevent undesired trivial solutions from optimization, delivering consistent accuracy gains \textit{even when adapting with incorrect labels} \eg, the average accuracy of 29.0\% (NoAdapt) \textit{vs.} 52.0\% (\methodname), drastically breaking prior performance ceiling ($\approx$33.6\%). This underscores \methodname as a more principled and robust solution for real-world TTA.

\begin{wrapfigure}{r}{0.33\textwidth}
\vspace{-0.18in}
\centering
\includegraphics[width=.95\linewidth]{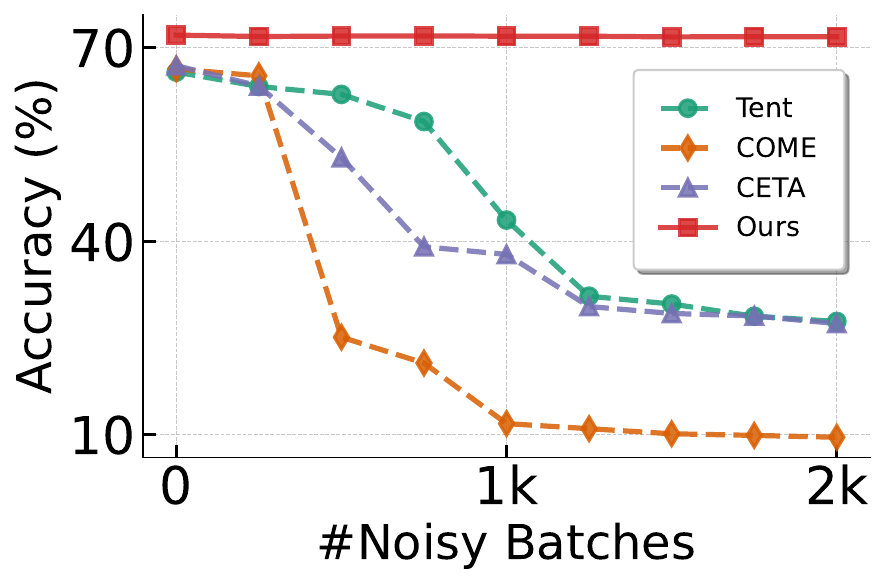}
\vspace{-0.11in}
\caption{Resistance to learning from noise. Models pre-adapt on $N$ pure Gaussian noise, then run TTA on ImageNet-C (level 5).}
\label{fig:pure_noise}
\vspace{-0.2in}
\end{wrapfigure}

\textbf{Resistance to Learning from Noise}
In dynamic real-world applications, models may frequently encounter test data that are severely corrupted and \textit{non-semantic}, such as extreme occluded frames and pure sensor noise where no valid label exists. Minimizing entropy on these data can then be misled to ``confidently learn nonsense'', risking model degeneration. To simulate this, we first pre-adapt models on varying amounts of pure Gaussian noise, then perform TTA on ImageNet-C (Fog, level 5) with ViT-Base under the mid scenario~\citep{wang2021tent}. As shown in Figure~\ref{fig:pure_noise}, existing entropy-based methods degrade sharply post-adaptation to noise, \eg, 67.3\%$\rightarrow$27.2\% on CETA, indicating an overfit to non-semantic patterns. In contrast, \methodname absorbs and regularizes non-generalizable shortcuts through the asymmetric alignment (c.f. Section~\ref{sec:empric_theory_evidence}), maintaining consistently high and stable accuracy ($\approx72\%$) regardless of the exposure to noisy inputs. This robustness implies that \methodname can be safely deployed even when streams contain meaningless or corrupted inputs, a property crucial for reliable deployment in real-world scenarios.
\begin{table*}[t]
\setlength{\tabcolsep}{8pt}
\caption{Comparisons with SOTAs for \textbf{\textsc{online adaptive reasoning}} in mathematical reasoning benchmarks regarding \textbf{Accuracy (\%)}.}
    \vspace{-0.1in}
    \label{tab:math-reasoning}
\newcommand{\tabincell}[2]{\begin{tabular}{@{}#1@{}}#2\end{tabular}}
 \begin{center}
     \begin{threeparttable}
      \resizebox{0.9\linewidth}{!}{ 
        \begin{tabular}{l|cccc|>{\columncolor{blue!8}}c}
          \multicolumn{1}{l|}{Model+Method} & \multicolumn{1}{c}{Math-500} & \multicolumn{1}{c}{CollegeMath} & \multicolumn{1}{c}{AIME24} & \multicolumn{1}{c}{Minerva} & \multicolumn{1}{|c}{Average} \\
          \midrule
          Llama3.1-8B~\citep{dubey2024llama} & 49.20 & 25.00 & 3.33 & 20.96 & 24.62 \\
          $\bullet$~~Tent~\citep{wang2021tent} & 50.00$_{(+0.80)}$ & 24.17$_{(-0.83)}$ & 6.67$_{(+3.34)}$ & 20.59$_{(-0.37)}$ & 25.36$_{(+0.74)}$ \\
          $\bullet$~~SAR~\citep{niu2023sar} & 49.20$_{(+0.00)}$ &	24.25$_{(-0.75)}$ &	6.67$_{(+3.34)}$ &	21.32$_{(+0.36)}$ &	25.36$_{(+0.74)}$  \\
          $\bullet$~~EATA~\citep{niu2022eata} & 48.80$_{(-0.40)}$ & 24.83$_{(-0.17)}$ & 6.67$_{(+3.34)}$ & 20.59$_{(-0.37)}$ & 25.22$_{(+0.60)}$ \\
          $\bullet$~~COME~\citep{zhang2025come} & 49.80$_{(+0.60)}$ & 25.42$_{(+0.42)}$ & 6.67$_{(+3.34)}$ & 22.74$_{(+1.78)}$ & 26.16$_{(+1.54)}$ \\
          $\bullet$~~TLM~\citep{Hu2025TLM} & 50.00$_{(+0.80)}$ & 25.58$_{(+0.58)}$ & 6.67$_{(+3.34)}$ & 19.49$_{(-1.47)}$ & 25.44$_{(+0.82)}$ \\
          \rowcolor{black!8} $\bullet$~~\textit{ZeroSiam} (ours) & 52.60$_{(+3.40)}$ & 26.25$_{(+1.25)}$ & 13.33$_{(+10.00)}$ & 22.06$_{(+1.10)}$ & \textbf{28.56$_{(+3.94)}$}
        \end{tabular}
      }
    \end{threeparttable}
	 \end{center}
\vspace{-0.1in}
\end{table*}

\begin{table*}[t]
  \setlength{\tabcolsep}{6pt}
  \renewcommand{\arraystretch}{1.02}
  \caption{Comparisons with SOTAs on ImageNet-C (level 5)'s \textbf{\textsc{blind-spot subset}} (samples initially misclassified by the NoAdapt model) with \textbf{\textsc{Batch Size=1}} regarding \textbf{Accuracy (\%)}. 
    }
    \vspace{-0.15in}
  \label{tab:imagenet-c-blindspot}
  \newcommand{\tabincell}[2]{\begin{tabular}{@{}#1@{}}#2\end{tabular}}
  \begin{center}
  \begin{threeparttable}
  \LARGE
    \resizebox{1.0\linewidth}{!}{
      \begin{tabular}{l|cccc|cccc|cccc|cccc|cccc|>{\columncolor{blue!8}}c}
        \multicolumn{1}{c}{} & \multicolumn{4}{c}{ResNet50-GN} & \multicolumn{4}{c}{ViT-Base} & \multicolumn{4}{c}{ViT-Small} & \multicolumn{4}{c}{ConvNeXt-Tiny} & \multicolumn{4}{c}{Swin-Tiny} \\
        Method & $\mathcal{N}$ & $\mathcal{B}$ & $\mathcal{W}$ & $\mathcal{D}$ & $\mathcal{N}$ & $\mathcal{B}$ & $\mathcal{W}$ & $\mathcal{D}$ & $\mathcal{N}$ & $\mathcal{B}$ & $\mathcal{W}$ & $\mathcal{D}$ & $\mathcal{N}$ & $\mathcal{B}$ & $\mathcal{W}$ & $\mathcal{D}$ & $\mathcal{N}$ & $\mathcal{B}$ & $\mathcal{W}$ & $\mathcal{D}$ & Avg. \\
        \midrule
        NoAdapt & 18.6 & 19.4 & 47.7 & 33.9 & 8.1 & 28.4 & 36.1 & 41.7 & 1.8 & 23.3 & 27.8 & 33.6 & 23.7 & 24.6 & 52.2 & 35.3 & 25.9 & 21.6 & 50.6 & 25.8 & 29.0 \\
        Tent & \drop{0.2} & \drop{5.2} & \drop{16.8} & \drop{28.1} & \drop{0.2} & 42.9 & \drop{20.0} & 52.0 & \drop{0.1} & 40.0 & 36.3 & 42.8 & \drop{17.9} & \drop{19.7} & \drop{36.3} & \drop{31.0} & \drop{0.2} & \drop{8.0} & \drop{22.2} & \drop{13.0} & 21.7 \\
        SAR & \drop{17.9} & \drop{18.4} & \drop{35.6} & \drop{31.6} & 31.1 & 55.0 & 54.0 & 56.4 & 2.0 & 30.4 & 42.0 & 43.3 & 26.4 & 24.6 & \drop{50.7} & \drop{35.1} & 27.0 & 22.2 & \drop{49.3} & \drop{23.8} & 33.8 \\
        EATA & 19.5 & \drop{19.0} & \drop{44.2} & 40.2 & 26.3 & 43.5 & 53.2 & 57.5 & 2.0 & 26.5 & 32.8 & 41.7 & 25.8 & 24.7 & \drop{51.6} & 35.7 & 27.4 & 22.4 & \drop{49.1} & 26.8 & 33.5 \\
        COME & 18.6 & 19.4 & \drop{47.6} & \drop{33.8} & \drop{0.1} & 44.8 & 70.9 & 69.7 & \drop{1.0} & 37.0 & 55.9 & 46.2 & 41.4 & \drop{16.4} & \drop{31.1} & 36.1 & 40.5 & \drop{9.5} & \drop{25.3} & 26.9 & 33.6 \\
        DeYO & \drop{0.5} & \drop{9.4} & \drop{21.0} & 40.0 & \drop{0.2} & 47.2 & 72.6 & 71.4 & \drop{0.2} & 38.2 & 60.2 & 61.7 & 35.2 & \drop{12.2} & \drop{35.0} & \drop{33.0} & \drop{0.2} & \drop{8.8} & \drop{25.7} & \drop{23.9} & 29.8 \\
        \rowcolor{black!8} \textit{ZeroSiam} & 39.9 & 36.6 & 53.9 & 52.7 & 53.0 & 60.4 & 71.6 & 70.2 & 26.1 & 49.8 & 61.8 & 61.1 & 43.3 & 40.3 & 61.9 & 55.0 & 46.5 & 42.1 & 58.8 & 54.7 & \textbf{52.0} \\
      \end{tabular}
    }
  \end{threeparttable}
  \end{center}
  \vspace{-0.2in}
\end{table*}


\subsection{Additional Discussions}


\textbf{Efficacy under Mild Test Setting} We further evaluate \methodname in the mild setting~\citep{wang2021tent}, where data comes with shuffled labels. As shown in Table~\ref{tab:imagenet-c-mild}, \methodname delivers superior performance compared to prior methods, \eg, the average accuracy of 50.1\% (\methodname) \textit{vs.} 43.0\% (COME), suggesting our efficacy across a wide range of scenarios. Moreover, as in Table~\ref{tab:efficiency_comparison}, \methodname also achieves a lower latency compared to prior SOTAs, \eg, 193s (\methodname) \textit{vs.} 280s (DeYO), highlighting the advantages of \methodname regarding both TTA stability and efficiency.
\blue{We also offer more results under diverse adaptation epochs over the entire test set in Appendix~\ref{sec:suppl:different-epochs}.}


\textbf{Ablation on Predictor Design}
We further examine other predictor designs in Table~\ref{tab:predictor_design}. Interestingly, replacing the learnable predictor with a fixed, randomly initialized predictor ($\theta_h=\mathbf{I}+0.1\,\mathbf{W}$, $\mathbf{W}\!\sim\!\mathcal{N}(0,\mathbf{I})$) already improves upon Tent, \ie, 47.3\%$\rightarrow$60.7\% \wrt accuracy, confirming the value of breaking symmetry alone in collapse mitigation. However, making the predictor deeper and nonlinear brings no further gain (64.0\% vs. 64.1\%), since the predictor in \methodname is to absorb biased shortcuts for regularization rather than learning rich representations, which do not need large representation capacity.
Therefore, we use the single linear predictor for all other experiments.

\textbf{Integration with Prior Methods} We further evaluate the efficacy of our \methodname when integrated with state-of-the-art methods. From Table~\ref{tab:zerosiam-integration}, \methodname consistently enhances the performance of prior methods, \eg, improving the average accuracy by +8.4\% on EATA and +7.4\% on DeYO, suggesting the efficacy of our asymmetric architecture as a plug-and-play component. However, incorporating with EATA and DeYO does not ensure an improvement over \methodname. This is because while EATA and DeYO aim to filter partial noisy samples, \methodname adapts reliably even with incorrect pseudo labels, as verified in Table~\ref{tab:imagenet-c-blindspot}. Thus, \methodname may not directly benefit from the traditional sample selection design and encourages more exploration. We leave it for future work.

\begin{wrapfigure}{r}{0.35\textwidth}
\vspace{-0.17in}
\centering
\includegraphics[width=1.0\linewidth]{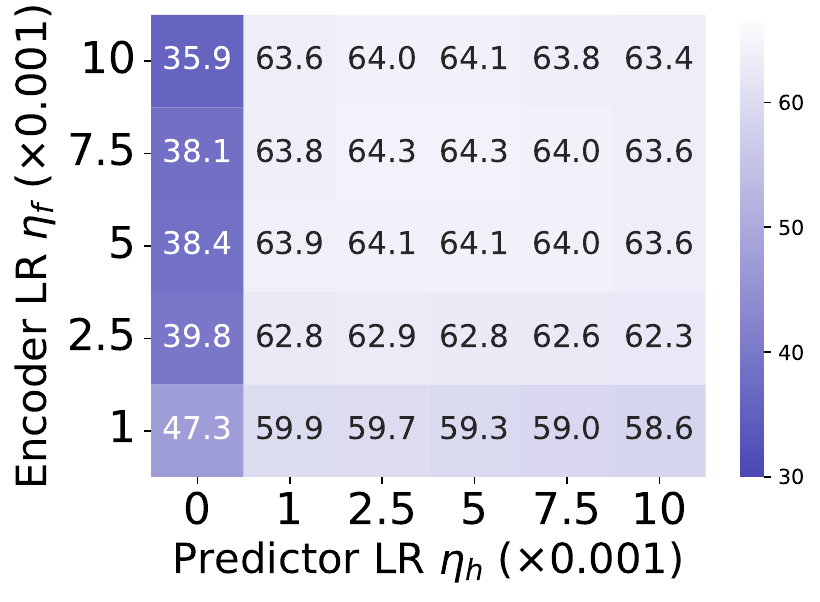}
\vspace{-0.26in}
\caption{Sensitivity to learning rates. Results are reported on Im-ageNet-C (level 5) with ViT-Base under label shifts \wrt Accuracy.}
\vspace{-0.2in}
\label{fig:hyperparameter_sensitivity}
\end{wrapfigure}

\textbf{Sensitivity to Learning Rates~} 
We further examine the sensitivity of \methodname\ to learning rates. When the predictor learning rate is set to zero, the predictor becomes a frozen identity and \methodname degenerates to Tent. From Figure~\ref{fig:hyperparameter_sensitivity}, we have the following observations: 1) our \methodname consistently outperforms Tent across a wide range of encoder learning rates, \ie, by $11.3\%{-}28.2\%$ in accuracy, suggesting the benefit of our asymmetric design. 2) For a fixed encoder learning rate, varying \(\eta_h\in\{1,2.5,5,7.5,10\}\times0.001\) changes accuracy by only $\approx1\%$, highlighting our effectiveness does not rely on careful hyperparameter tuning. 
3) overall, \methodname achieves both high accuracy and a broad performance plateau under diverse learning rate configurations, \eg, above 62.3\% when $\eta_f\in[2.5,10]{\times}10^{-3}$ and $\eta_h\in[1,10]{\times}10^{-3}$, consistently outperforming the accuracy of 58.0\% in SAR. These results suggest that \methodname\ is not only effective but also robust to learning rate choices, making it practical and easy to use for online deployment.
\blue{We also provide more sensitivity analysis of learning rates on Swin-Tiny and further introduce a data-free approach for determining predictor learning rate $\eta_h$ in Appendix~\ref{sec:suppl:learning-rate-on-swin} and~\ref{sec:suppl:data-free}, respectively.}


\begin{table*}[t]
  \setlength{\tabcolsep}{6pt}
  \renewcommand{\arraystretch}{1.02}
  \caption{Comparisons with state-of-the-art methods on ImageNet-C (severity level 5) under a \textbf{\textsc{mild scenario}} regarding \textbf{Accuracy (\%)}. ``$\mathcal{N}$/$\mathcal{B}$/$\mathcal{W}$/$\mathcal{D}$" denote \textit{Noise/Blur/Weather/Digital} corruptions.}
  \vspace{-0.15in}
  \label{tab:imagenet-c-mild}
  \newcommand{\tabincell}[2]{\begin{tabular}{@{}#1@{}}#2\end{tabular}}
  \begin{center}
  \begin{threeparttable}
  \LARGE
    \resizebox{1.0\linewidth}{!}{
      \begin{tabular}{l|cccc|cccc|cccc|cccc|cccc|>{\columncolor{blue!8}}c}
        \multicolumn{1}{c}{} & \multicolumn{4}{c}{ResNet50-GN} & \multicolumn{4}{c}{ViT-Base} & \multicolumn{4}{c}{ViT-Small} & \multicolumn{4}{c}{ConvNeXt-Tiny} & \multicolumn{4}{c}{Swin-Tiny} \\
        Method & $\mathcal{N}$ & $\mathcal{B}$ & $\mathcal{W}$ & $\mathcal{D}$ & $\mathcal{N}$ & $\mathcal{B}$ & $\mathcal{W}$ & $\mathcal{D}$ & $\mathcal{N}$ & $\mathcal{B}$ & $\mathcal{W}$ & $\mathcal{D}$ & $\mathcal{N}$ & $\mathcal{B}$ & $\mathcal{W}$ & $\mathcal{D}$ & $\mathcal{N}$ & $\mathcal{B}$ & $\mathcal{W}$ & $\mathcal{D}$ & Avg. \\
        \midrule
        NoAdapt & 18.6 & 19.4 & 47.7 & 33.9 & 8.1 & 28.4 & 36.1 & 41.6 & 1.8 & 23.4 & 27.7 & 33.5 & 23.7 & 24.8 & 52.1 & 35.4 & 25.8 & 21.7 & 50.5 & 25.9 & 29.0 \\
        Tent & \drop{5.5} & \drop{17.7} & \drop{33.1} & 39.0 & 29.0 & 51.3 & 44.7 & 62.1 & \drop{0.5} & 37.2 & 41.4 & 47.2 & 32.1 & \drop{24.0} & \drop{51.9} & 36.6 & \drop{24.2} & 23.1 & \drop{42.6} & 26.9 & 33.5 \\
        SAR & 29.9 & 24.6 & \drop{41.2} & 41.8 & 34.7 & 52.4 & 62.5 & 62.6 & \drop{1.5} & 38.6 & 44.1 & 49.0 & 34.2 & 25.0 & \drop{45.4} & 37.8 & 32.4 & 25.1 & \drop{47.3} & 29.0 & 38.0 \\
        EATA & 38.6 & 32.8 & 56.9 & 51.4 & 50.1 & 57.4 & 68.2 & 67.2 & 22.2 & 44.7 & 56.4 & 56.2 & 36.2 & 30.6 & 55.5 & 42.6 & 41.8 & 37.5 & 56.5 & 47.1 & 47.5 \\
        COME & \drop{18.4} & \drop{19.3} & \drop{47.6} & \drop{33.6} & 51.6 & 58.9 & 70.1 & 69.0 & \drop{0.3} & 41.2 & 56.0 & 58.4 & 41.1 & 26.3 & 60.5 & 43.4 & 42.2 & 25.5 & 58.7 & 38.6 & 43.0 \\
        DEYO & 40.8 & 30.4 & \drop{44.0} & 51.4 & 53.3 & 55.0 & 71.1 & 69.6 & \drop{0.2} & 44.7 & 57.8 & 59.0 & 37.3 & \drop{22.1} & \drop{44.3} & 36.2 & 36.1 & 24.6 & 51.0 & 41.1 & 43.5 \\
        \rowcolor{black!8} \textit{ZeroSiam} & 41.6 & 37.1 & 59.7 & 54.2 & 51.0 & 58.4 & 69.6 & 68.3 & 24.2 & 48.1 & 59.6 & 58.8 & 40.2 & 35.4 & 59.5 & 49.0 & 43.1 & 38.2 & 56.6 & 49.8 & \textbf{50.1} \\
      \end{tabular}
    }
  \end{threeparttable}
  \end{center}
\end{table*}

\begin{table}[t]
    \centering
    \begin{minipage}[t]{0.29\textwidth}
    \setlength{\tabcolsep}{5pt}
    \caption{Ablation on predictor design in \methodname on ImageNet-C (level 5) with ViT-Base under label shifts.}
    \vspace{-0.1in}
    \label{tab:predictor_design}
    \newcommand{\tabincell}[2]{\begin{tabular}{@{}#1@{}}#2\end{tabular}}
    \begin{center}
    \begin{threeparttable}
        \resizebox{1.0\linewidth}{!}{
     	\begin{tabular}{llr}
     	 Method & Predictor & Acc.\\
     	\midrule
            Tent & - & 47.3  \\
            \methodname & FC & \textbf{64.1}  \\
                \smallbullet~ Exp1 & \textit{fixed} random FC & 60.7 \\
            \smallbullet~ Exp2 & FC+ReLU+FC & 64.0 \\
    	\end{tabular}
    	}
    \end{threeparttable}
    \end{center}
    \end{minipage}
    \hfill
    \begin{minipage}[t]{0.32\textwidth}
    \setlength{\tabcolsep}{3.5pt}
    \caption{\methodname's efficacy when used with prior methods. Experiments are on ImageNet-C (level 5) under label shifts.}
    \vspace{-0.1in}
        \label{tab:zerosiam-integration}
    \newcommand{\tabincell}[2]{\begin{tabular}{@{}#1@{}}#2\end{tabular}}
     \begin{center}
         \begin{threeparttable}
          \resizebox{\linewidth}{!}{ 
            \begin{tabular}{l|ccc|c}
              \multicolumn{1}{l|}{Method} & \multicolumn{1}{c}{ViT-B} & \multicolumn{1}{c}{R-50} & \multicolumn{1}{c}{Con-T} & \multicolumn{1}{|c}{Avg.} \\
              \midrule
              EATA & 49.4 & 31.6 & 40.7 & 40.6 \\
                \rowcolor{black!8} 
                \textbf{+}~\textit{\methodname} & 57.1 & 48.3 & 41.7 & 49.0 \\
               DeYO & 62.3 & 43.9 & 33.2 & 46.5 \\
            \rowcolor{black!8}
            \textbf{+}~\textit{\methodname} & 65.0 & 50.7 & 46.1 & 53.9 \\
            \end{tabular}
          }
        \end{threeparttable}
    	 \end{center}
    \end{minipage}
    \hfill
    \begin{minipage}[t]{0.34\textwidth}
    \setlength{\tabcolsep}{4.2pt}
    \caption{Generality of \methodname across objective and divergence choices on ImageNet-C (level 5) with ViT-Base under label shifts.}
    \vspace{-0.1in}
    \label{tab:zerosiam-generality}
\newcommand{\tabincell}[2]{\begin{tabular}{@{}#1@{}}#2\end{tabular}}
 \begin{center}
     \begin{threeparttable}
      \resizebox{\linewidth}{!}{ 
        \begin{tabular}{l|cccccc}
          \multicolumn{1}{l|}{Loss} & \multicolumn{1}{c}{N/A} & \multicolumn{1}{c}{sKL} & \multicolumn{1}{c}{KL} & \multicolumn{1}{c}{rKL} & \multicolumn{1}{c}{JS} & \multicolumn{1}{c}{MSE}\\
          \midrule
          SLR & 44.0 & 62.2 & 63.2 & 61.0 & 63.4 & 61.5 \\
          CE & 43.2 & 61.6 & 61.4 & 61.9 & 61.4 & 62.1 \\
          $-p^2$ & 45.8 & 63.3 & 63.4 & 62.1 & 63.0 & 63.6 \\
        \rowcolor{black!8} 
        Tent & 47.3 & 64.1 & 64.0 & 64.1 & 63.8 & 64.1 \\
        \end{tabular}
      }
    \end{threeparttable}
	 \end{center}

    \end{minipage}
    \vspace{-0.1in}
\end{table}



\blue{
\textbf{Sensitivity of $\alpha$ in \methodname} To avoid heavy tuning in practical TTA use cases, we consistently set $\alpha=1.0$ in \methodname. From the perspective of learning objective, $\alpha = 1.0$ strikes a balanced and interpretable interaction between the two objectives: 
1) comparable gradient scales: both entropy loss and the divergence loss operate in \textit{probability space} and are also \textit{per-sample normalized}, which yield comparable gradient scales for TTA, without any one dominating optimization;
2) simplified and interpretable formulations: consider Eqn.~(\ref{eq:zerosiam-loss}) with $\alpha=1.0$ detailed as $H(p^{o})+D_{KL}(p^{o}||\text{sg}[p^r])+D_{KL}(\text{sg}[p^r]||p^{o})$. By expanding the terms, we derive an overall formulation of $-p^{o}\log \mathrm{sg}[p^r] - \mathrm{sg}[p^r]\log p^{o}$, which expands to a symmetric cross-entropy between online and target branches. This symmetry encourages the two branches to exchange reliable signals while preventing either from drifting too far—an effect we find important for stable adaptation.

\begin{table}[h]
    \vspace{-0.02in}
    \caption{\blue{Sensitivity of $\alpha$ in \methodname. We report \textbf{Accuracy (\%)} on ImageNet-C across severities with ViT-Base under \textbf{\textsc{online imbalanced label shifts}}, where the imbalance ratio is $\infty$.}}
     \vspace{-0.06in}
    \label{tab:sensitivity_alpha}
\newcommand{\tabincell}[2]{\begin{tabular}{@{}#1@{}}#2\end{tabular}}
 \begin{center}
 \begin{threeparttable}
    \resizebox{0.85\linewidth}{!}{
 	\begin{tabular}{l|cc|ccccc}
 	 Method & No adapt & Tent & $\alpha=0.5$ & $\alpha=1.0$ & $\alpha=1.5$ & $\alpha=2.0$ & $\alpha=2.5$ \\
 	\midrule
    \blue{Severity Level=1} & \blue{67.9} & \blue{76.8} & \blue{78.8} & \blue{78.9} & \blue{78.6} & \blue{78.0} & \blue{77.0}  \\
    \blue{Severity Level=3} & \blue{53.8} & \blue{69.5} & \blue{73.9} & \blue{74.1} & \blue{73.8} & \blue{73.2} & \blue{71.8}  \\
    Severity Level=5 & 29.9 & 47.3 & 63.9 & 64.1 & 63.9 & 63.3 & 62.4  \\	\end{tabular}
	}
	 \end{threeparttable}
	 \end{center}
  \vspace{-0.07in}
\end{table}

we provide a sensitivity analysis of $\alpha$ under different domain shift severities. As shown in Table~\ref{tab:sensitivity_alpha}, by aligning the gradient scale of both terms, ZeroSiam achieves consistently the best results with $\alpha=1$ across severities of domain shift, outperforming Tent's accuracy by 4.6\% (at level 3) and 16.8\% (at level 5). Extensive results across various test scenarios (Tables~\ref{tab:imagenet-c-label-shift}-\ref{tab:imagenet-c-mild}) also validate the efficacy of this setup, making it a reasonable default to avoid careful hyperparameter tuning in TTA.

}

\textbf{Generality of \methodname's architecture} 
\methodname introduces a general asymmetric architecture without limiting the use of self-training objectives\footnote{In Table~\ref{tab:zerosiam-generality}, SLR denotes the soft likelihood ratio~\citep{marsden2024universal}, CE denotes cross-entropy loss with its own predicted labels, and $-p^2$ denotes the negative squared-probability loss, \ie, $-\sum_k p^2_k$.}
and divergence terms. As shown in Table~\ref{tab:zerosiam-generality}, \methodname remains effective across a wide range of combinations and consistently outperforms the single-branch baselines, \eg, 44.0\% (SLR)  \textit{vs.} 63.2\% (SLR+KL). This arises because collapsed solutions and non-generalizable shortcuts remain pervasive across many unsupervised objectives, 
suggesting a promising direction for extending \methodname to a wider range of tasks in the future.
\blue{For entropy minimization, we use symmetric KL as the divergence with $\alpha=1$ as discussed in Table~\ref{tab:sensitivity_alpha}.}

\section{Conclusion}


\blue{In this paper, we explore asymmetry as a new principled test-time entropy minimization mechanism that improves learning stability by inherently avoiding collapsed trivial solutions.} To this end, we devise a lightweight Siamese architecture for asymmetric entropy minimization (\textit{termed} \methodname) to prevent the model from being exposed to collapsed solutions, using an injected learnable predictor. The predictor converts collapsed solutions into explicit discrepancies, which are then eliminated by an alignment term. We empirically and theoretically demonstrate that \methodname not only prevents collapse but also absorbs and regularizes biased \blue{shortcut} signals during learning, thereby boosting the performance even when collapse does not occur.
Extensive experiments verify that \methodname performs more stably over prior methods with negligible overhead, proving effective in both vision adaptation and LLM reasoning tasks across challenging models and test scenarios.

\section*{Acknowledgment}
This research was supported, in part, by the Joint WeBank-NTU Research Institute on Fintech, Nanyang Technological University, Singapore, and Ministry of Education, Singapore, under its Academic Research Fund Tier 1.

\section*{Ethics Statement}

This research focuses on improving the stability of test-time entropy minimization through an asymmetric Siamese architecture, to advance robustness and reliability in vision and language models for academic and socially beneficial purposes. The proposed approach is evaluated solely on widely used, publicly available models and benchmark datasets under their respective licenses. The study does not involve human subjects or sensitive personal data, and thus does not raise privacy, fairness, or discrimination concerns. No conflicts of interest or external sponsorship influence this work, and all experiments were conducted in accordance with established standards of research integrity.

\section*{Reproducibility Statement}

In this work, we implement all methods (all compared methods and our \methodname) with different models (ResNet50-GN, ViT-Base, ViT-Small, ConvNeXt-Tiny, Swin-Tiny, and Llama3.1-8B-Instruct) on the ImageNet-C, Math-500, College-Math, AIME-24, and Minerva datasets. Reproducing all the results in our paper depends on the following three aspects:
\begin{enumerate}[leftmargin=*]
    \item \textbf{\textsc{Datasets.}} The first paragraph of Section~\ref{sec:main_exp} and Appendix~\ref{sec:more_dataset_details} provide the details of the adopted datasets and the download \texttt{URL}s.
    
    \item \textbf{\textsc{Models.}} All adopted models (with the pre-trained weights) for test-time adaptation are publicly available. Specifically, all vision models: \{ResNet50-GN, ViT-Base, ViT-Small, ConvNeXt-Tiny, Swin-Tiny\} are from \texttt{timm} repository~\citep{rw2019timm}, and Llama3.1-8B-Instruct is from its official repository. Appendix~\ref{sec:more_exp_details} provides the download \texttt{URL}s of them.
    
    \item \textbf{\textsc{Protocols of each method.}} The second paragraph of Section~\ref{sec:main_exp} and Appendix~\ref{sec:more_exp_details} provides the implementation details of all compared methods and our \methodname. We reproduce all compared methods based on the code from their official GitHub, for which the download \texttt{url} is provided (in Appendix~\ref{sec:more_exp_details}) following each method introduction.
\end{enumerate}

\bibliography{iclr2026_conference}
\bibliographystyle{iclr2026_conference}

\clearpage
\appendix
\clearpage
\setcounter{table}{0}
\setcounter{figure}{0} 
\renewcommand{\thetable}{\Alph{table}}
\renewcommand{\thefigure}{\Alph{figure}}
\renewcommand\theHtable{Appendix.\thetable}
\renewcommand\theHfigure{Appendix.\thefigure}
\newtcolorbox[auto counter]{mybox}[1][]{
  title=My Box \thetcbcounter,
  label=mybox:\thetcbcounter,
  #1
}

\newpage
\appendix
\begin{leftline}
	{
		\LARGE{\textsc{Appendix}}
	}
\end{leftline}

\etocdepthtag.toc{mtappendix}
\etocsettagdepth{mtchapter}{none}
\etocsettagdepth{mtappendix}{subsection}

{
    \hypersetup{linkcolor=black}
    \footnotesize\tableofcontents
}
\clearpage

\section{Related Work}\label{sec:related-work}
In this section, we connect test-time entropy minimization and self-supervised learning through their shared challenge of collapse, and relate our approach to these areas.

\textbf{Test-Time Entropy minimization} seeks to promote confident predictions by minimizing the prediction entropy on test data. 
Tent~\citep{wang2021tent} first exploits this scheme for test-time model adaptation to novel and out-of-distribution environments. Unlike \textit{test-time training} approaches~\citep{sun2020test,liu2021ttt++,bartler2022mt3,gandelsman2022tmae}, which train an extra auxiliary self-supervised branch to produce learning signals from test data, test-time entropy minimization~\citep{wang2021tent,niu2022eata,marsden2024universal,lee2024deyo,zhang2025come,hu2025beyond,zhang2025otvp} removes reliance on the source training process and adapts an arbitrary model on-the-fly by using its own predicted labels for self-training, rendering it practical in real-world applications~\citep{niu2024foa,liang2025comprehensive}. Recently, test-time entropy optimization has attracted growing interest in various fields, such as incentivizing knowledge in large language models during inference~\citep{jang2025self,zuo2025ttrl,Hu2025TLM}, and adapting a fixed policy network to evolving environments in real-time~\citep{xu2025test}.

Nevertheless, as highlighted by~\citet{niu2023sar}, entropy minimization is inherently unstable and prone to collapse, \eg, converging towards constant one-hot outputs that trivially minimize the entropy loss without meaningful predictions. To improve stability, subsequent methods explored strategies such as sample filtering~\citep{niu2023sar,lee2024deyo,Hu2025TLM}, uncertainty quantification~\citep{zhang2025come}, and parameter selection~\citep{chen2024cross,marsden2024universal,zhang2025dpcore}. Despite these advances, collapsed constant solutions remain valid optima in prior methods, so stability often hinges on heuristics, thresholds, or extra compute, leaving methods sensitive to architectures and domains, where robust architectural design is largely underexplored. Our \methodname fills this gap by introducing asymmetry that rules out trivial minima in test-time entropy minimization, while remaining efficient by avoiding augmentations and extra encoder passes.

\textbf{Self-supervised Learning}
aims to learn discriminative and consistent representations by creating pretext tasks such as contrastive learning~\citep{oord2018representation,saunshi2019theoretical,he2020momentum,chen2020simple}. 
A central challenge in SSL is trivial collapse, where all outputs degenerate to a constant to maximize representation similarity. Contrastive methods such as SimCLR~\citep{chen2020simple} and MoCo~\citep{he2020momentum} prevent collapse by pushing apart negative pairs, but rely on large batch sizes, memory banks, or careful negative mining strategies. To remove dependency on negative samples, BYOL~\citep{grill2020bootstrap} and SimSiam~\citep{chen2021exploring} introduce asymmetric architectures, preventing collapse with a \textit{predictor–stop-gradient asymmetry} to break the stability of the collapsed constant solution. Although subsequent methods such as Barlow Twins~\citep{zbontar2021barlow} and VicReg~\citep{bardes2022vicreg} achieve stability without symmetry-breaking design by maximizing feature diversity within a batch, they implicitly assume large batches and i.i.d. streams, which often fail during testing. Inspired by SimSiam, we revisit asymmetry in the context of single-branch, entropy-based TTA, and show how it can be instantiated in a minimal and efficient manner. 


\clearpage
\section{Theoretical Analysis}\label{suppl:sec:proofs}
\textbf{Notations.} Let $\mathcal{L} = H(p^o) + \alpha \, D\!\left(p^o \,\|\, \mathrm{sg}[p^r]\right)$ be the optimization objective of \methodname, where $H(\cdot)$ is the entropy loss, $D(\cdot)$ the alignment regularizer, and $p^o, p^r \in \Delta^{|\mathcal{C}|-1}$ are the probability distributions induced by the online and target branches.
Let the encoder $f$ and classifier $g$ be pretrained, and $h$ a lightweight \textit{linear} predictor. At initialization, we set $h$ as the identity mapping $\mathbf{I}$, so that the online and target branches coincide and both lie in the non-collapse region $\{ p \in \Delta^{|\mathcal{C}|-1} | \min_{c} p_c \ge \delta >0 \}$.

\subsection{Assumptions}
\begin{ass} (\textbf{Lipschitz continuity of the encoder)} \label{proofs: ass1}
    Let $f(\cdot;\theta_{f})$ be an encoder parameterized by $\theta_{f}$. We assume $f$ is Lipschitz continuous with respect to its parameters. Specifically, there exists a constant $L_f > 0$ such that for any $\theta_{1}, \theta_{2}$ and any input $\mathbf{x}$ in the domain,
    \begin{equation*}
        \| f_{\theta_{1}}(\mathbf{x}) - f_{\theta_{2}}(\mathbf{x}) \|_{2} \le L_{f} \| \theta_{1} - \theta_{2} \| .
    \end{equation*}
\end{ass}

Assumption \ref{proofs: ass1} bounds the sensitivity of the encoder $f$ to parameter perturbations, thereby ensuring stability for subsequent optimization analysis, and is commonly employed in prior theoretical studies on representation learning and model convergence.


\begin{ass} (\textbf{Standard optimization assumptions}) \label{proofs: ass2}
    Let $\mathcal{L}(\cdot)$ denote the objective function, $\phi = (\theta_{h}, \theta_{f})$ the set of optimization parameters, and $\eta_{h}, \eta_{f}$ the learning rates for the predictor and encoder, respectively. Our analysis relies on the following standard assumptions, which are common in the optimization literature:\\
    (1) \textit{Smoothness}: $\mathcal{L}$ is $\rho$-smooth. Specifically, there exists $\rho > 0$ such that for all $\phi_{1}, \phi_{2}$, 
     \begin{equation*}
         \| \nabla_{\phi}\mathcal{L}(\phi_{1}) - \nabla_{\phi}\mathcal{L}(\phi_{2}) \| \le \rho \| \phi_{1} - \phi_{2} \| ;
     \end{equation*}
    (2) \textit{Descent condition}: The learning rates satisfy $\max(\eta_{h}, \eta_{f}) < \frac{1}{\rho}$, ensuring monotonic descent.
\end{ass}

Assumption~\ref{proofs: ass2} constrains the appropriate range of learning rates to guarantee monotonic descent of $\mathcal{L}(\cdot)$, which is a common assumption in optimization research.

\subsection{Proof of Lemma \ref{proofs: lemma1}}

\begin{lemma} \textbf{(Gradient dominance in entropy descent)} \label{proofs: lemma1}
    Let $H(\cdot)$ denote the entropy function, $g(\cdot;\theta_g), h(\cdot;\theta_h)$ and $f(\cdot;\theta_f)$ denote the classifier, the predictor and the encoder, $\phi = (\theta_{h}, \theta_{f})$ the set of optimization parameters, and $\eta_{h}, \eta_{f}$ the learning rates for the predictor and encoder, respectively. Then the discrete change in entropy of the target branch satisfies:
    \begin{equation*}
        \left \| \nabla_{\phi}H(p^{o}(t)) \right \| \ge \frac{1}{L_{H} \left \| g \right \|_2 L_{f} \cdot \max \left ( \eta_{h},\eta_{f} \right )} \cdot \left | H(p^{r}(t+1)) - H(p^{r}(t)) \right | ,
    \end{equation*}
    where $L_{H}, L_{f}$ are the Lipschitz constants of $H(\cdot)$ and $f$, $\| g \|_{2}$ denotes the spectral norm of the classifier weight vector, and $t$ is the optimization iteration step.
\end{lemma}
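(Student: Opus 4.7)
The plan is to chain two Lipschitz estimates with the one-step gradient update rule, exploiting the structural asymmetry that $p^{r}$ depends on parameters only through $\theta_{f}$ while $p^{o}$ depends on both $\theta_{f}$ and $\theta_{h}$. Concretely, under $\alpha=0$ the updates are $\theta_{f}(t+1)=\theta_{f}(t)-\eta_{f}\nabla_{\theta_{f}}H(p^{o}(t))$ and $\theta_{h}(t+1)=\theta_{h}(t)-\eta_{h}\nabla_{\theta_{h}}H(p^{o}(t))$, so the entire change in $p^{r}$ over one step is attributable to the $\theta_{f}$-update driven by the online-branch gradient. This observation is what lets a bound on the scalar change $|H(p^{r}(t+1))-H(p^{r}(t))|$ be traced all the way back to $\|\nabla_{\phi}H(p^{o}(t))\|$.

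First I would apply the Lipschitz property of the entropy functional to get $|H(p^{r}(t+1))-H(p^{r}(t))|\le L_{H}\|p^{r}(t+1)-p^{r}(t)\|$. Next, because the target branch is $p^{r}=\mathrm{softmax}(g\,f(x;\theta_{f}))$ with fixed classifier $g$ and softmax that is $1$-Lipschitz in the relevant norm, I would compose this with Assumption~\ref{proofs: ass1} to obtain $\|p^{r}(t+1)-p^{r}(t)\|\le \|g\|_{2}L_{f}\,\|\theta_{f}(t+1)-\theta_{f}(t)\|$. Substituting the $\theta_{f}$-update and using the trivial projection bound $\|\nabla_{\theta_{f}}H(p^{o})\|\le\|\nabla_{\phi}H(p^{o})\|$ together with $\eta_{f}\le\max(\eta_{h},\eta_{f})$, the chain yields
\begin{equation*}
|H(p^{r}(t+1))-H(p^{r}(t))|\;\le\;L_{H}\,\|g\|_{2}\,L_{f}\,\max(\eta_{h},\eta_{f})\,\|\nabla_{\phi}H(p^{o}(t))\|,
\end{equation*}
which after dividing by the constant on the right gives the stated inequality.

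The main potential obstacle is ensuring consistency of the norms across the chain so that no extra constants sneak in: the entropy Lipschitz constant $L_{H}$ is measured against a particular norm on the simplex, and the softmax-Jacobian bound and the spectral norm $\|g\|_{2}$ must be taken in compatible norms for the product $L_{H}\|g\|_{2}L_{f}$ to be tight. A secondary issue is that the Lipschitz inequality $\|p^{r}(t+1)-p^{r}(t)\|\le \|g\|_{2}L_{f}\|\theta_{f}(t+1)-\theta_{f}(t)\|$ uses the $1$-Lipschitz property of softmax; if one instead works with a Taylor expansion, a higher-order remainder appears and must be absorbed, which is where Assumption~\ref{proofs: ass2} (smoothness and the descent condition $\max(\eta_{h},\eta_{f})<1/\rho$) would be invoked to keep the step small enough that the remainder can be dominated by the first-order term. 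Beyond this bookkeeping, the argument is essentially mechanical.
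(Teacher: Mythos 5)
Your proposal is correct and follows essentially the same chain as the paper's proof: bound $|H(p^{r}(t+1))-H(p^{r}(t))|$ by $L_{H}\|p^{r}(t+1)-p^{r}(t)\|_2$, pass through the fixed classifier and $1$-Lipschitz softmax to get the factor $\|g\|_2$, invoke Assumption~\ref{proofs: ass1} for $L_f$, substitute the $\theta_f$-update driven by $\nabla_{\theta_f}H(p^{o}(t))$, and finish with $\eta_f\le\max(\eta_h,\eta_f)$ and $\|\nabla_{\theta_f}H(p^{o})\|\le\|\nabla_{\phi}H(p^{o})\|$. The paper additionally makes the entropy Lipschitz constant explicit ($L_H=\sqrt{|\mathcal{C}|}(|\log\delta|+1)$ on the non-collapse region), but otherwise the arguments coincide.
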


\begin{proof}
    Consider the online and target branches:
    \begin{equation*}
        \begin{cases}
          p^o = \text{softmax}(g(h(f(\mathbf{x};\theta_f);\theta_h);\theta_g)) & \text{(online branch)} , \\
          p^r = \text{softmax}(g(f(\mathbf{x};\theta_f);\theta_g)) & \text{(target branch)} .
        \end{cases}
    \end{equation*}
    By design, the encoder $f$ of the target branch is synchronized with the online branch after each update:
    \begin{equation*}
        \theta_{f}^{r} (t+1) = \theta_{f}^{o} (t+1) = \theta_{f}^{o} (t) - \eta_{f} \nabla_{\theta_{f}}H(p^{o}(t)) .
    \end{equation*}
    Thus, we bound the entropy change of the target branch $|H(p^{r}(t+1)) - H(p^{r}(t))|$ by tracing the propagation of parameter updates through the online branch.
    
    According to the Assumption \ref{proofs: ass1}, the encoder $f$ is $L_f$-Lipschitz continuous \wrt $\theta_f$, we have
    \begin{equation}
        \| \mathbf{z}(t+1) - \mathbf{z}(t) \|_{2} = \| f(\mathbf{x};\theta_{f}(t+1)) - f(\mathbf{x};\theta_{f}(t)) \|_{2} \le L_{f} \| \theta_{f}(t+1) - \theta_{f}(t) \| .
    \label{eqn: l_f}
    \end{equation}
    where $\mathbf{z} = f(\mathbf{x};\theta_f)$ is the output feature of the encoder.
    
    Note that in the non-collapse region $\{ p \in \Delta^{|\mathcal{C}|-1} | \min_{c} p_c \ge \delta >0 \}$, the entropy function $H(\cdot)$ is Lipschitz continuous with constant $L_{H} = \sqrt{|\mathcal{C}|} (|\log \delta| + 1)$, thus:
    \begin{equation}
        |H(p(t+1)) - H(p(t))| \le L_{H} \| p(t+1) - p(t) \|_{2} .
    \label{eqn: l_H}
    \end{equation}
    
    Since the pre-trained classifier $g$ is fixed, and the softmax function is $1$-Lipschitz, combining Eqn. (\ref{eqn: l_f}) and Eqn. (\ref{eqn: l_H}), we have the following chain:
    \begin{equation*}
        \begin{aligned}
            |H(p^{r}(t+1)) - H(p^{r}(t))| &\le L_{H} \| p^{r}(t+1) - p^{r}(t) \|_{2} \\
            &\le L_{H} \| g \|_{2} \| \mathbf{z}^{r}(t+1) - \mathbf{z}^{r}(t) \|_{2} \\
            &\le L_{H} \| g \|_{2} L_{f} \| \theta_{f}^{r}(t+1) - \theta_{f}^{r}(t) \| \\
            &= L_{H} \| g \|_{2} L_{f} \eta_{f} \| \nabla_{\theta_{f}}H(p^{o}(t)) \| ,
        \end{aligned}
    \end{equation*}
    where $\| g \|_{2}$ is the spectral norm of the classifier weight vector.
    
    Let $\phi = (\theta_{h}, \theta_{f})$ denote the set of optimization parameters, and $\eta_{h}, \eta_{f}$ the learning rates for the predictor and encoder. Since:
    \begin{gather*}
        \eta_{f} \le \max(\eta_{h}, \eta_{f}) , \\
        \| \nabla_{\theta_{f}}H(p^{o}(t)) \| \le 
        \begin{Vmatrix}
        \begin{bmatrix}
         \nabla_{\theta_{f}}H(p^{o}(t)) \\
         \nabla_{\theta_{h}}H(p^{o}(t))
        \end{bmatrix}
        \end{Vmatrix}
        = \| \nabla_{\phi}H(p^{o}(t)) \| ,
    \end{gather*}
    we obtain:
    \begin{equation*}
        \left \| \nabla_{\phi}H(p^{o}(t)) \right \| \ge \frac{1}{L_{H} \left \| g \right \|_2 L_{f} \cdot \max \left ( \eta_{h},\eta_{f} \right )} \cdot \left | H(p^{r}(t+1)) - H(p^{r}(t)) \right | .
    \end{equation*}
\end{proof}

\subsection{Proof of Lemma \ref{proofs: lemma2}}

\begin{lemma} \textbf{(Collapse Direction with Maximal Negative Curvature)} \label{proofs: lemma2}
    Let $H(\cdot)$ denote the entropy function, $u = g(h(f(\mathbf{x};\theta_f);\theta_h);\theta_g)$ the uncertainty logits, and $p = \text{softmax}(u)$ the induced probability distribution. For any class $c \in \mathcal{C}$, there exists a direction $\mathbf{v} = (\mathbf{v}_h, \mathbf{v}_f) \in \mathbb{R}^{\dim(\phi)}$ such that:\\
    (1) $p_c$ increases monotonically along $\mathbf{v}$; \\
    (2) As $p_c \to 1$, it holds that $\mathbf{v}^\top \nabla^{2}_{\phi}H \mathbf{v} \to \lambda_{\min} (\nabla^{2}_{\phi}H)$; \\
    (3) For any $\mathbf{w} \bot \mathbf{v}$, we have $\mathbf{v}^\top \nabla^{2}_{\phi}H \mathbf{v} \le \mathbf{w}^\top \nabla^{2}_{\phi}H \mathbf{w}$.
\end{lemma}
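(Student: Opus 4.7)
The plan is to build $\mathbf{v}$ explicitly by lifting a natural collapse direction from logit space into parameter space. Since the predictor $h$ is linear and the classifier $g$ is fixed, the logits satisfy $u = \theta_g\,\theta_h\,\mathbf{z}$ with $\mathbf{z}=f(\mathbf{x};\theta_f)$, so the Jacobian $J := \partial u/\partial\phi$ is explicit (linear in the $\theta_h$-block, chain-ruled through $f$ in the $\theta_f$-block). In logit space, the steepest-ascent direction of $p_c$ is $\tilde{\mathbf{v}} \propto \mathbf{e}_c - p$, since $\nabla_u p_c = p_c(\mathbf{e}_c - p)$. I would then define $\mathbf{v} := J^{+}\tilde{\mathbf{v}}$ (the minimum-norm preimage), which is well-defined as long as $\tilde{\mathbf{v}}$ lies in the range of $J$---a mild nondegeneracy condition guaranteed by a full-rank classifier and nonzero feature $\mathbf{z}$.

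For conclusion (1), I would verify monotonicity by a direct first-order calculation. Moving along $\mathbf{v}$ induces, to leading order, $\dot u = J\mathbf{v} = \tilde{\mathbf{v}}$, hence
\begin{equation*}
\dot p_c \;=\; \nabla_u p_c^{\top}\,\tilde{\mathbf{v}} \;=\; p_c(\mathbf{e}_c-p)^{\top}(\mathbf{e}_c-p) \;=\; p_c\,\|\mathbf{e}_c - p\|^2 \;>\; 0
\end{equation*}
whenever $p_c<1$, so $p_c$ increases monotonically along the flow generated by $\mathbf{v}$ all the way to the collapse vertex.

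For (2) and (3), expand the parameter-space Hessian by the chain rule,
\begin{equation*}
\nabla^2_\phi H \;=\; J^{\top}\,(\nabla^2_u H)\,J \;+\; \sum_{k}\,(\partial H/\partial u_k)\,\nabla^2_\phi u_k ,
\end{equation*}
and note that at collapse $p\to\mathbf{e}_c$ one has $\nabla_u H \to 0$ (the entropy attains its simplex-boundary minimum), so the second term is negligible and $\nabla^2_\phi H \approx J^{\top}(\nabla^2_u H)\,J$. Writing $\epsilon := 1 - p_c$, an asymptotic expansion of the softmax-entropy Hessian reveals that, among unit directions in logit space, its Rayleigh quotient is minimized along $\mathbf{e}_c - p$ with minimum value approaching $\lambda_{\min}(\nabla^2_u H)$ as $\epsilon \to 0$. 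Pulling back via $J^{\top}$ and invoking Courant--Fischer then yields $\mathbf{v}^{\top}\nabla^2_\phi H\,\mathbf{v} \to \lambda_{\min}(\nabla^2_\phi H)$, and for any $\mathbf{w}\perp\mathbf{v}$ the Rayleigh quotient satisfies $\mathbf{w}^{\top}\nabla^2_\phi H\,\mathbf{w} \ge \mathbf{v}^{\top}\nabla^2_\phi H\,\mathbf{v}$, which is exactly (3).

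The hard part is the asymptotic spectral analysis of $\nabla^2_u H$ near the vertex $\mathbf{e}_c$---which is delicate because of the logarithmic singularities---combined with the pullback: one must check that the minimum-eigenvector alignment survives conjugation by $J^{\top}$, i.e., that directions orthogonal to $\mathbf{v}$ in $\phi$-space map under $J$ into the orthogonal complement of $\tilde{\mathbf{v}}$ in logit space. A rank/conditioning assumption on $J$ (compatible with Assumption~\ref{proofs: ass1}) will likely be required; otherwise one must factor out $\ker(J)$, which contributes zero quadratic form, and argue on the range of $J^{\top}$. Controlling the rate at which $\|\nabla_u H\|$ vanishes against the second-order contribution of $\nabla^2_\phi u_k$ is the other technical bottleneck needed to justify dropping the second term in the Hessian expansion.
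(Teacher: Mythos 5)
Your outline coincides with the paper's own route: construct a parameter-space direction whose first-order effect on the logits pushes $u_c$ up, expand $\nabla^2_\phi H$ by the chain rule into the pullback term $J^\top(\nabla^2_u H)J$ plus a $\nabla_u H$-weighted term, kill the latter using $\nabla_u H \to 0$ as $p \to \mathbf{e}_c$, and read off (2)--(3) from the near-vertex behaviour of the logit-space entropy Hessian. Your part (1) is fine and in fact cleaner than the paper's (the identity $\nabla_u p_c = p_c(\mathbf{e}_c - p)$ gives the strictly positive rate $p_c\|\mathbf{e}_c - p\|^2$, whereas the paper argues via an explicit $\Delta u$ with a large amplitude $a$ so that $[\Delta u]_c$ dominates). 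The construction differs in one respect worth noting: you lift $\tilde{\mathbf{v}} \propto \mathbf{e}_c - p$ through the pseudoinverse $J^{+}$, which forces you to assume $\tilde{\mathbf{v}} \in \mathrm{range}(J)$ and, for (3), a conditioning statement about how orthogonality in $\phi$-space interacts with $J$; the paper avoids both by building $\mathbf{v}=(\mathbf{v}_h,\mathbf{v}_f)$ explicitly from the classifier row and a feature direction, $\mathbf{v}_h = a\,\mathbf{W}_k^\top r$, $\mathbf{v}_f = \nabla_{\theta_f}(r^\top \mathbf{z})$, so no rank hypothesis on $J$ enters.

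The genuine gap is that your proposal stops exactly where the paper's proof does its real work. For (2), the paper writes down the logit Hessian in closed form, $[\nabla^2_u H]_{ij} = p_i(\delta_{ij}-p_j)(H-\log p_i - 1)$, sets $\mu = 1-p_c$, and carries out the asymptotics showing $\partial^2 H/\partial u_c^2 \approx -\mu < 0$ while the remaining diagonal and cross terms are of positive sign or higher order; this computation is what certifies that the collapse direction carries the minimal (negative) curvature, and it is precisely the step you defer with ``an asymptotic expansion \ldots reveals.'' Likewise for (3), the paper's argument reduces the quadratic form along $\mathbf{v}$ to $e_c^\top \nabla^2_u H\, e_c = \partial^2 H/\partial u_c^2 < 0$ (after the gradient-weighted term vanishes) and compares against $\Delta u_{\mathbf{w}}$ not parallel to $e_c$, whereas you invoke Courant--Fischer contingent on an eigenvector-alignment-under-pullback property that you yourself flag as unverified. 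As written, conclusions (2) and (3) in your proposal are therefore assumed rather than proved: to complete it you would need to (i) actually perform the near-vertex spectral expansion of $\nabla^2_u H$ (handling the $\mu\log(1/\mu)$-order off-diagonal entries, which is where the delicacy lies), and (ii) either prove the range/conditioning property of $J$ you postulate or replace the pseudoinverse lift by an explicit construction in the style of the paper.
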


\begin{proof}
    (1) 
    Let $\mathbf{z} = f(\mathbf{x};\theta_f) \in \mathbb{R}^d$ be the encoder output, $u = g(h(\mathbf{z};\theta_h);\theta_g) \in \mathbb{R}^{|\mathcal{C}|}$ the uncertainty logits, and $\mathbf{W}, \mathbf{P}$ the weight vector of $g, h$, respectively. Define $\mathbf{v} = (\mathbf{v}_h, \mathbf{v}_f)$ such that:
    \begin{equation*}
        \begin{cases}
         \mathbf{v}_{h} = a \cdot \mathbf{W}_k^\top r , \\
         \mathbf{v}_{f} = \nabla_{\theta_{f}} (r^\top \mathbf{z}) ;
        \end{cases}
    \end{equation*}
    where $r$ is a feature direction satisfying $r^\top \mathbf{z} \ne 0$ and $a > 0$. $\mathbf{v}_{f}$ is the direction in normalization layer parameters that amplifies the feature component along $r$.

    This joint direction $\mathbf{v} = (\mathbf{v}_h, \mathbf{v}_f)$ induces a change in logits:
    \begin{equation*}
        \Delta u = \mathbf{W} (\mathbf{v}_{h}) \mathbf{z} + \mathbf{W} \mathbf{P} (\nabla_{\theta_{f}} \mathbf{z} \mathbf{v}_{f}) \approx a (r^\top \mathbf{z}) \mathbf{W} \mathbf{W}_{k}^\top + \mathbf{W} \mathbf{P} r .
    \end{equation*}
    By choosing $r^\top \mathbf{z} > 0$ and sufficiently large $a$, we ensure:
    \begin{equation*}
        [\Delta u]_c > 0,~~~[\Delta u]_j = \mathcal{O}(1)~~\text{for}~~j \ne c .
    \end{equation*}
    Thus, moving along $\mathbf{v}$ increases $u_c$, and hence $p_c = \text{softmax}(u_c)$, monotonically.

    (2) 
    The Hessian of entropy $H(\cdot)$ \wrt logits $u$ is:
    \begin{equation*}
        [\nabla^2_u H]_{ij} = \frac{\partial^2 H}{\partial u_i \partial u_j} = p_i (\delta_{ij} - p_j)(H - \log p_i - 1) .
    \end{equation*}
    As $p_c \to 1$, let $\mu = 1 - p_c \to 0^+$, and for $p_j \approx \frac{\mu}{|\mathcal{C}|-1}$ for $j \ne c$. Then in the Hessian matrix
    \begin{equation*}
        \begin{cases}
          \frac{\partial^2 H}{\partial u_c^2} = p_c (1 - p_c)(H - \log p_c - 1) , & \text{Diagonal for}~c , \\
          \frac{\partial^2 H}{\partial u_j^2} = p_j (1 - p_j)(H - \log p_j - 1) , & \text{Diagonal for}~j \ne c , \\
          \frac{\partial^2 H}{\partial u_i \partial u_j} = - p_i p_j (H - \log p_i -\log p_j - 1) , & \text{Off-diagonal} ,
        \end{cases}
    \end{equation*}
    we obtain the following approximation:
    \begin{equation*}
        \begin{cases}
         \frac{\partial^2 H}{\partial u_c^2} \approx (1 - \mu) \mu (H + \mu - 1) \approx \mu (\mu \log \frac{|\mathcal{C}|-1}{\mu} + \mu - 1) \approx - \mu , \\
         \frac{\partial^2 H}{\partial u_j^2} \approx \frac{\mu}{|\mathcal{C}|-1} (\mu \log \frac{|\mathcal{C}|-1}{\mu} - \log \frac{\mu}{|\mathcal{C}|-1} - 1) \approx \frac{\mu}{|\mathcal{C}|-1} \log \frac{|\mathcal{C}|-1}{\mu} > 0 , \\
         \frac{\partial^2 H}{\partial u_i \partial u_j} \approx - p_i p_j (-\log p_j) > 0 .
        \end{cases}
    \end{equation*}
    That is, as $p_c \to 1$, we have
    \begin{equation*}
        \mathbf{v}^\top \nabla^{2}_{\phi}H \mathbf{v} \to \lambda_{\min} (\nabla^{2}_{\phi}H) .
    \end{equation*}

    (3) 
    The parameter space Hessian quadratic form along $\mathbf{v}$ is:
    \begin{equation*}
        \mathbf{v}^\top \nabla^{2}_{\phi} H \mathbf{v} = \Delta u^\top \nabla^{2}_{u} H \Delta u + \sum_{i} \frac{\partial H}{\partial u_i} \mathbf{v}^\top \nabla_{\phi}^2 u_i \mathbf{v} .
    \end{equation*}
    As $p_c \to 1$, $\nabla_{u} H \to 0$, so the higher-order terms vanishes. Since $\Delta u = \frac{\partial u}{\partial \phi} \propto e_c$, we have:
    \begin{equation*}
        \mathbf{v}^\top \nabla^{2}_{\phi} H \mathbf{v} \propto e_c^\top \nabla^{2}_{u} H e_c = \frac{\partial^2 H}{\partial u_c^2} < 0 .
    \end{equation*}
    For any $\mathbf{w} \bot \mathbf{v}$, $\Delta u_{\mathbf{w}} = \frac{\partial u}{\partial \phi} \mathbf{w}$ is not parallel to $e_c$, so:
    \begin{equation*}
        \mathbf{w}^\top \nabla^{2}_{\phi}H \mathbf{w} \ge \mathbf{v}^\top \nabla^{2}_{\phi}H \mathbf{v} .
    \end{equation*}
\end{proof}

\subsection{Proof of Theorem \ref{proofs: thm1}}

\textbf{Theorem \ref{proofs: thm1}. \textit{(Optimization and Stability of \methodname)}}
\emph{
    Consider the \methodname objective $\mathcal{L} = H(p^o) + \alpha \, D\!\left(p^o \,\|\, \mathrm{sg}[p^r]\right)$, where $H(\cdot)$ denotes the entropy loss, $D(\cdot)$ the alignment regularizer, and $p^o, p^r \in \Delta^{|\mathcal{C}|-1}$ are the probability distributions induced by the online and target branches. Given the encoder $f$, classifier $g$ and predictor $h$. Under Assumptions~\ref{proofs: ass1} and~\ref{proofs: ass2}, the following hold:  \\
    (1) For $\alpha = 0$, the entropy variation satisfies $|\Delta H(p^o)| > |\Delta H(p^r)|$, and the Hessian of \(H(p^o)\) attains its minimal eigenvalue along collapse directions $\mathbf{v}$:
    \begin{equation*}
        \lambda_{\min}\!\left(\nabla^2 H(p^o)\right) = \mathbf{v}^{\top}\nabla^2 H(p^o)\mathbf{v} .
    \end{equation*} 
    (2) For $\alpha > 0$, the predictor $h$ serves as a filtering mechanism that suppresses gradient update directions corresponding to over-amplified logits, and the system converges to a stable equilibrium: there exists $h_{\min} > 0$ such that
    \begin{equation*}
        H(p^o) > h_{\min},~p^o \to p^r .
    \end{equation*}
}

\begin{proof}
    (1) 
    By Lemma \ref{proofs: lemma1}, the target branch entropy variation is determined by the gradient of the online branch:
    \begin{equation}
        \| \nabla_{\phi}H(p^{o}(t)) \| \ge \frac{1}{L \cdot \eta_{\max}} \cdot | H(p^{r}(t+1)) - H(p^{r}(t)) | ,
    \label{eqn: delta_Hr}
    \end{equation}
    where $L = L_{H} \| g \|_2 L_{f}$ and $\eta_{\max} = \max (\eta_{h},\eta_{f})$.
    
    Under the Assumption \ref{proofs: ass2} that $\mathcal{L} = H(\cdot)$ is $\rho$-smoothness and $\eta_{\max} < \frac{1}{\rho}$, the descent lemma gives:
    \begin{equation*}
        H(p^o(t+1)) \le H(p^o(t)) - \nabla_{\phi} H^\top D_{\eta} \nabla_{\phi} H + \frac{\rho}{2} \| D_{\eta} \nabla_{\phi} H \|^2 ,
    \end{equation*}
    where $D_{\eta} = \text{diag}(\eta_f \mathbf{I}_{d_f}, \eta_h \mathbf{I}_{d_h})$. So we have:
    \begin{equation}
        |\Delta H^o| = H(p^o(t+1)) - H(p^o(t)) \ge \eta_{\min} \| \nabla_{\phi} H \|^2,~\eta_{\min} = \min (\eta_{h},\eta_{f}) .
    \label{eqn: delta_Ho}
    \end{equation}
    Combining Eqn. (\ref{eqn: delta_Hr}) and Eqn. (\ref{eqn: delta_Ho}), we obtain:
    \begin{equation*}
        \frac{|\Delta H^o|}{|\Delta H^r|} \ge \frac{\eta_{\min} \| \nabla_{\phi} H \|^2}{L \cdot \eta_{\max} \| \nabla_{\phi}H \|} = \frac{\eta_{\min}}{\eta_{\max}} \cdot \frac{\| \nabla_{\phi}H \|}{L} .
    \end{equation*}
    Since $L, \| \nabla_{\phi}H \| > 0$ and bounded in non-collapse regions, achieving $\frac{|\Delta H^o|}{|\Delta H^r|} > 1$ requires:
    \begin{equation*}
        \frac{\eta_{\min}}{\eta_{\max}} > \frac{L}{\| \nabla_{\phi}H \|} .
    \end{equation*}
    In practice, we set $\eta_h = k \cdot \eta_f$ with $k > 1$, so there always exists a $k$ such that
    \begin{equation*}
        |\Delta H(p^o)| > |\Delta H(p^r)| .
    \end{equation*}

    We further construct the direction $\mathbf{v} = (\mathbf{v}_h, \mathbf{v}_f) \in \mathbb{R}^{\dim(\phi)}$ according to Lemma \ref{proofs: lemma2}, the corresponding directional derivative is given by:
    \begin{equation*}
        D_{\mathbf{v}} H = \nabla_{\phi} H^\top \mathbf{v} \propto [\nabla_{u} H]_c ,
    \end{equation*}
    where $u$ is the uncertainty logits. And the second-order directional derivative is:
    \begin{equation*}
        D_{\mathbf{v}}^2 H = \mathbf{v}^\top \nabla_{\phi}^2 H \mathbf{v} = \lambda_{\min} < 0 .
    \end{equation*}
    This implies that along $\mathbf{v}$, the loss decreases fastest at second order~\citep{jin2017escape}.

    Moreover, by Lemma \ref{proofs: lemma2}, $p_c$ increases monotonically along $\mathbf{v}$, thus $[\nabla_{u} H]_c = p_c (H - \log p_c - 1)$ increases synchronously, creating a self-reinforcing feedback loop that accelerates collapse.

    (2) 
    Consider the optimization objective:
    \begin{equation*}
        \mathcal{L} = H(p^o) + \alpha \, D\!(p^o \,\|\, \mathrm{sg}[p^r]) , 
    \end{equation*}
    where $D\!(p^o \,\|\, \mathrm{sg}[p^r]) = \KL \!(p^o \,\|\, \mathrm{sg}[p^r]) + \KL \!(\mathrm{sg}[p^r] \,\|\, p^o)$ are symmetric KL loss.

    Under the stop-gradient setting on the target branch, the gradient of the shared encoder $f$ is solely determined by the online branch. We have:
    \begin{equation*}
        \nabla_{\mathbf{z}} \mathcal{L} = (\frac{\partial u^o}{\partial \mathbf{z}}) \nabla_{u^o} \mathcal{L} = \mathbf{P}^\top \mathbf{W}^\top [\nabla_{u^o} H(p^o) + 2 \alpha (p^o - p^r)] ,
    \end{equation*}
    where $\mathbf{z} = f(\mathbf{x};\theta_f)$ is the encoder output, $\mathbf{W}, \mathbf{P}$ the weight vector of the classifier $g$ and the predictor $h$, respectively.
    Consider a parameter update unit direction $\mathbf{v_{h}}$, $\| \mathbf{v_{h}} \|_2 = 1$, then:
    \begin{equation*}
        <\nabla_{\mathbf{z}} \mathcal{L}, \mathbf{v_{h}}> = \mathbf{v_{h}}^\top \nabla_{\mathbf{z}} \mathcal{L} = \mathbf{v_{h}}^\top (\frac{\partial u^o}{\partial \mathbf{z}}) \nabla_{u^o} \mathcal{L} = \mathbf{P}^\top \mathbf{W}^\top [\nabla_{u^o} H(p^o) + 2 \alpha (p^o - p^r)]
    \end{equation*}
    During the minimization of $\mathcal{L}$, the predictor $h$ encourages the update direction that ensuring
    \begin{equation*}
        p^o \to p^r ,
    \end{equation*}
    and suppresses gradient update directions corresponding to over-amplified logits.
    
    We further analyze the convergence properties of the optimization system.

    By the definition of entropy and KL divergence, we have:
    \begin{equation*}
        H(p^o) = -\sum_{c=1}^{|\mathcal{C}|} p^o_c \log p^o_c = -\sum_{c=1}^{|\mathcal{C}|}  p^o_c \log p^r_c - \KL (p^o || p^r) .
    \end{equation*}
    Applying Gibbs’ inequality, $-\sum_{c=1}^{|\mathcal{C}|}  p^o_c \log p^r_c \ge H(p^r)$, we obtain:
    \begin{equation*}
        H(p^o) \ge H(p^r) - \KL(p^o || p^r) .
    \end{equation*}
    Since $D_{\text{SKL}}(p^o || p^r) \ge \KL(p^o || p^r)$, it follows that:
    \begin{equation}
        H(p^o) \ge H(p^r) - D_{\text{SKL}}(p^o || p^r) .
    \label{eqn: bound_hpo}
    \end{equation}
    When the target branch is still in the non-collapse region $\{ p^r \in \Delta^{|\mathcal{C}|-1} | \min_{c} p^r_c \ge \delta >0 \}$, the entropy satisfies:
    \begin{equation}
        H(p^r) \ge -\log (1 - (1 - |\mathcal{C}|) \delta) >0 .
    \label{eqn: bound_hpc}
    \end{equation}
    Define the Lyapunov function $V(t) = \mathcal{L}(t)$. Since $H(p^o) \ge 0$ and $D_{\text{SKL}}(p^o || p^r) \ge 0$, we have
    \begin{equation*}
        V(t) > 0 .
    \end{equation*}
    Moreover, under the Assumption \ref{proofs: ass2}, the descent lemma guarantees:
    \begin{equation*}
        V(t+1) \ge V(t) - \frac{\eta_{\min}}{2} \| \nabla_{\phi} \mathcal{L}(t) \|^2 \le V(t) .
    \end{equation*}
    Thus, $V(t)$ is monotonically decreasing and bounded below. According to the Monotone Convergence Theorem, $V(t) \to V^* > 0$.
    In particular:
    \begin{equation}
        D_{\text{SKL}}(p^o(t) || p^r(t)) \le \frac{V(0)}{\alpha} .
    \label{eqn: bound_skl}
    \end{equation}
    Combining Eqn. (\ref{eqn: bound_hpo}), Eqn. (\ref{eqn: bound_hpc}), and Eqn. (\ref{eqn: bound_skl}):
    \begin{equation*}
        H(p^o) \ge H(p^r) - D_{\text{SKL}}(p^o || p^r) \ge -\log (1 - (1 - |\mathcal{C}|) \delta) - \frac{V(0)}{\alpha} .
    \end{equation*}
    Define
    \begin{equation*}
        h_{\min} = -\log (1 - (1 - |\mathcal{C}|) \delta) - \frac{V(0)}{\alpha} .
    \end{equation*}
    Let the hyperparameter $\alpha > \frac{V(0)}{-\log (1 - (1 - |\mathcal{C}|) \delta)}$, then $h_{\min} > 0$ .

    Overall, we have proven that the predictor $h$ serves as a filtering mechanism that suppresses gradient update directions corresponding to over-amplified logits, and the system converges to a stable equilibrium: there exists $h_{\min} > 0$ such that
    \begin{equation*}
        H(p^o) > h_{\min},~p^o \to p^r .
    \end{equation*}
\end{proof}

\clearpage

\section{More Implementation Details}\label{sec:more_details}

\subsection{More Details on Dataset}\label{sec:more_dataset_details}

For vision adaptation, we evaluate the out-of-distribution generalization ability of all methods on a large-scale and widely used benchmark, namely \textbf{ImageNet-C}\footnote{\url{https://zenodo.org/record/2235448\#.YzQpq-xBxcA}}~\citep{hendrycks2019benchmarking}. ImageNet-C is constructed by corrupting the original ImageNet~\citep{deng2009imagenet} test set. The corruption (as shown in Figure~\ref{fig:corruption_types}) consists of 15 different types, \ie, Gaussian noise, shot noise, impulse noise, defocus blur, glass blur, motion blur, zoom blur, snow, frost, fog, brightness, contrast, elastic transformation, pixelation, and JPEG compression, where each corruption type has 5 severity levels and the larger severity level means more severe distribution shift. 

\begin{figure*}[h]
\centering\includegraphics[width=0.48\linewidth]{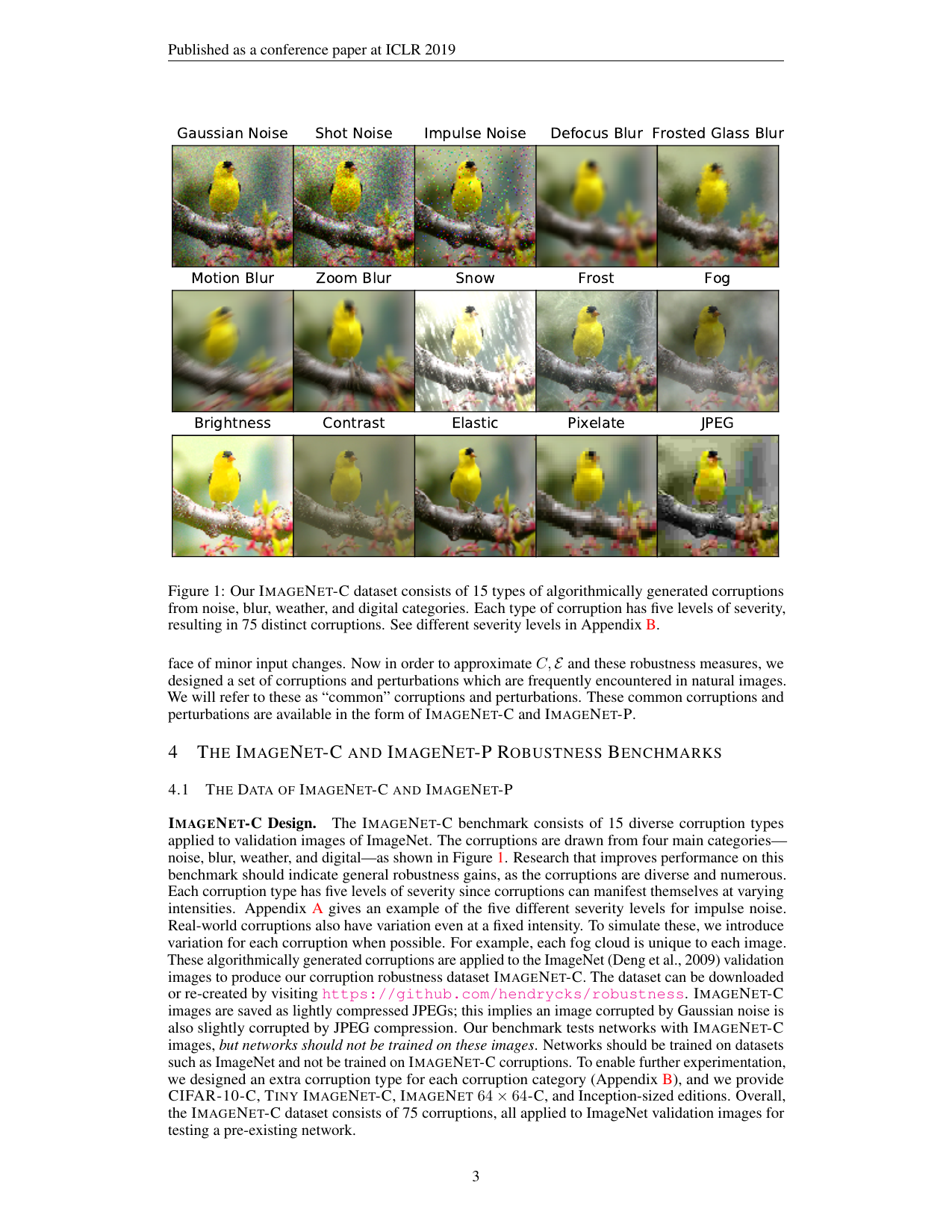}\caption{Visualizations of different corruption types in ImageNet-C benchmark.}
\label{fig:corruption_types}
\end{figure*}

For natural language reasoning, we evaluate the effectiveness of online reasoning incentivization on a comprehensive mathematical reasoning benchmark, including Math-500~\citep{lightman2023let}, CollegeMath~\citep{tang2024mathscale}, AIME24~\citep{codeforcesamerican}, and Minerva~\citep{lewkowycz2022solving}.
Specifically, Math-500 is derived from the larger MATH benchmark~\citep{hendrycks2measuring}, consisting of difficult mathematics problems originally taken from high school competitions. It spans domains such as pre-algebra, algebra, number theory, and calculus, emphasizing abstract reasoning and multi-step problem solving.
CollegeMath is a college-level mathematics dataset constructed from nine open-access college textbooks, covering seven core areas: algebra, pre-calculus, calculus, vector calculus, probability, linear algebra, and differential equations. 
The full dataset contains 2818 test questions, from which we sample 1200 for efficient evaluation.
AIME24 consists of 30 advanced problems from the 2024 American Invitational Mathematics Examination (AIME) I and II, designed to test extended chain-of-thought reasoning.
Minerva includes 272 undergraduate-level mathematics and science questions from MIT OpenCourseWare (OCW), aimed at assessing models’ reasoning ability in scientific problem-solving contexts.

\subsection{More Experimental Protocols on Methods}\label{sec:more_exp_details}

All pre-trained models involved in our paper are publicly available,
including ResNet50-GN\footnote{\url{https://github.com/rwightman/pytorch-image-models/releases/download/v0.1-rsb-weights/resnet50\_gn\_a1h2-8fe6c4d0.pth}}, ViT-Base\footnote{\url{https://storage.googleapis.com/vit\_models/augreg/B\_16-i21k-300ep-lr\_0.001-aug\_medium1-wd\_0.1-do\_0.0-sd\_0.0--imagenet2012-steps\_20k-lr\_0.01-res\_224.npz}}, ViT-Small\footnote{\url{https://storage.googleapis.com/vit_models/augreg/S_16-i21k-300ep-lr_0.001-aug_light1-wd_0.03-do_0.0-sd_0.0--imagenet2012-steps_20k-lr_0.03-res_224.npz}}, ConvNeXt-Tiny\footnote{\url{https://github.com/rwightman/pytorch-image-models/releases/download/v0.1-rsb-weights/convnext_tiny_hnf_a2h-ab7e9df2.pth}} and Swin-Tiny\footnote{\url{https://github.com/SwinTransformer/storage/releases/download/v1.0.0/swin_tiny_patch4_window7_224.pth}} obtained from \texttt{timm} repository~\citep{rw2019timm} for image classification, and Llama3.1-8B-Instruct\footnote{\url{https://huggingface.co/meta-llama/Llama-3.1-8B-Instruct}}~\citep{dubey2024llama} for natural language reasoning. In the following, we provide the implementation details of our proposed method and all comparative methods evaluated in our experiments, which help reproduce our results.

\textbf{\methodname (Ours).} We use symmetric KL as the divergence term in Eqn.~(\ref{eq:zerosiam-loss}), and $\alpha$ is fixed to 1 without tuning. For vision adaptation, we use SGD as the update rule, with a momentum of 0.9, batch size of 64 (except for the experiments of batch size = 1), learning rate $\eta_f$ of 0.0025 / 0.005 / 0.001 / 0.00025 / 0.0005 with learning rate $\eta_h$ of 0.025 / 0.005 / 0.01 / 0.0025 / 0.0025 for ResNet50-GN / ViT-Base / ViT-Small / ConvNeXt-Tiny / Swin-Tiny, respectively. The learning rate for batch size = 1 is scaled down by 16 for ResNet50-GN and 32 for other models following SAR. The trainable parameters are all the affine parameters of normalization layers. For natural language reasoning, following the configuration for TLM, we use AdamW as the update rule, with $\beta_1=0.9$, $\beta_2=0.999$, weight decay = 0.0, and optimize the prediction entropy of the first 8 output tokens. Both learning rates $\eta_f$ and $\eta_h$ are set to $7.5{\times}10^{-6}$ / $5 {\times} 10^{-6}$ / $1.25 {\times} 10^{-5}$ / $5 {\times} 10^{-6}$ for Math-500 / CollegeMath / AIME24 / Minerva. Trainable parameters are the LoRA parameters with a rank of 8.


\textbf{Tent\footnote{\url{https://github.com/DequanWang/tent}}~\citep{wang2021tent}.} We follow all hyper-parameters that are set in Tent unless it does not provide. Specifically, for vision adaptation, we use SGD as the update rule, with a momentum of 0.9, batch size of 64 (except for the experiments of batch size = 1), and learning rate of 0.00025 / 0.001 / 0.0001 / 0.000025 / 0.00005 for ResNet50-GN / ViT-Base / ViT-Small / ConvNeXt-Tiny / Swin-Tiny, respectively. The learning rate for batch size = 1 is scaled down to (0.00025/32) for ResNet50-GN, (0.001/64) for ViT-Base, (0.0001/64) for ViT-Small, (0.000025/64) for ConvNeXt-Tiny and (0.00005/64) for Swin-Tiny. The trainable parameters are all the affine parameters of normalization layers. 
For natural language reasoning, we use AdamW as the update rule, with $\beta_1=0.9$, $\beta_2=0.999$, weight decay = 0.0, and optimize the prediction entropy of the first 8 output tokens. The learning rate is $7.5{\times}10^{-6}$ / $5 {\times} 10^{-6}$ / $1.25 {\times} 10^{-5}$ / $5 {\times} 10^{-6}$ for Math-500 / CollegeMath / AIME24 / Minerva. The trainable parameters are the LoRA parameters with a rank of 8.

\textbf{SAR\footnote{\url{https://github.com/mr-eggplant/SAR}}~\citep{niu2023sar}.} We follow all hyper-parameters that are set in SAR unless it does not provide. Specifically, for vision adaptation, we use SGD as the update rule, with a momentum of 0.9, batch size of 64 (except for the experiments of batch size = 1), and learning rate of 0.00025 / 0.001 / 0.0001 / 0.000025 / 0.00005 for ResNet50-GN / ViT-Base / ViT-Small / ConvNeXt-Tiny / Swin-Tiny. The learning rate for batch size = 1 is scaled down to (0.00025/16) for ResNet50-GN, (0.001/32) for ViT-Base, (0.0001/32) for ViT-Small, (0.000025/32) for ConvNeXt-Tiny and (0.00005/32) for Swin-Tiny. The entropy threshold $E_0$ is set to 0.4$\times\ln{C}$, where $C$ is the number of task classes. The trainable parameters are the affine parameters of norm layers from layer 1 to layer 3 in ResNet50-GN, from blocks 1 to blocks 8 in ViT-Base and ViT-Small, and all affine parameters in other models. 
For natural language reasoning, $E_0$ is set to 0.4. We use AdamW as the update rule, with $\beta_1=0.9$, $\beta_2=0.999$, weight decay = 0.0, and optimize the prediction entropy of the first 8 output tokens. The learning rate is $7.5{\times}10^{-6}$ / $5 {\times} 10^{-6}$ / $1.25 {\times} 10^{-5}$ / $5 {\times} 10^{-6}$ for Math-500 / CollegeMath / AIME24 / Minerva, trainable parameters are the LoRA parameters with a rank of 8.

\textbf{EATA\footnote{\url{https://github.com/mr-eggplant/EATA}}~\citep{niu2022eata}.} We follow all hyper-parameters that are set in EATA unless it does not provide. Specifically, for vision adaptation, the entropy constant $E_0$ (for reliable sample identification) is set to 0.4$\times\ln{C}$, where $C$ is the number of task classes. The $\epsilon$ for redundant sample identification is set to 0.05. The trade-off parameter $\beta$ for entropy loss and regularization loss is set to 2,000. The number of pre-collected in-distribution test samples for Fisher importance calculation is 2,000. The update rule is SGD, with a momentum of 0.9, batch size of 64 (except for the experiments of batch size = 1), and learning rate of 0.00025 / 0.001 / 0.0001 / 0.000025 / 0.00005 for ResNet50-GN / ViT-Base / ViT-Small / ConvNeXt-Tiny / Swin-Tiny. The learning rate for batch size = 1 is scaled down to (0.00025/32) for ResNet50-GN, (0.001/64) for ViT-Base, (0.0001/64) for ViT-Small, (0.000025/64) for ConvNeXt-Tiny and (0.00005/64) for Swin-Tiny.  The trainable parameters are all affine parameters of normalization layers. 
For natural language reasoning, $E_0$ is set to 0.4, while the anti-forgetting regularizer is not applied due to a lack of knowledge for in-distribution data that the large language model trains on. We use AdamW as the update rule, with $\beta_1=0.9$, $\beta_2=0.999$, weight decay = 0.0, and optimize the prediction entropy of the first 8 output tokens. The learning rate is $7.5{\times}10^{-6}$ / $5 {\times} 10^{-6}$ / $1.25 {\times} 10^{-5}$ / $5 {\times} 10^{-6}$ for Math-500 / CollegeMath / AIME24 / Minerva. The trainable parameters are the LoRA parameters with a rank of 8.

\textbf{DeYO\footnote{\url{https://github.com/Jhyun17/DeYO}}~\citep{lee2024deyo}.} We follow all hyper-parameters that are set in DeYO unless it does not provide. Specifically, the entropy constant $E_0$ (for reliable sample identification) is set to $0.4\times\ln 1000$, and the factor $\tau_{\text{Ent}}$ is set to 0.5$\times\ln{1000}$. The Pseudo-Label Probability Difference (PLPD) threshold $\tau_{\text{PLPD}}$ is set to 0.2. The update rule is SGD, with a momentum of 0.9, batch size of 64 (except for the
experiments of batch size = 1), and learning rate of 0.00025 / 0.001 / 0.0001 / 0.000025 / 0.00005 for ResNet50-GN / ViT-Base / ViT-Small / ConvNeXt-Tiny / Swin-Tiny. The learning rate for batch size = 1 is scaled down to (0.00025/16) for ResNet50-GN, (0.001/32) for ViT-Base, (0.0001/32) for ViT-Small, (0.000025/32) for ConvNeXt-Tiny and (0.00005/32) for Swin-Tiny. Trainable parameters are the affine parameters of norm layers from layer 1 to 3 in ResNet50-GN, from blocks 1 to 8 in ViT-Base and ViT-Small, and all affine parameters in other models.

\textbf{COME\footnote{\url{https://github.com/BlueWhaleLab/COME}}~\citep{zhang2025come}.} For vision adaptation, we implement COME based on DeYO for comparisons, following the hyper-parameters set in DeYO unless otherwise specified. Specifically, the entropy constant $E_0$ (for reliable sample identification) is set to $0.4\times\ln 1000$, and the factor $\tau_{\text{Ent}}$ is set to 0.5$\times\ln{1000}$. The Pseudo-Label Probability Difference (PLPD) threshold $\tau_{\text{PLPD}}$ is set to 0.2. The uncertainty mass is set to $C$, where $C$ is the number of task classes. The update rule is SGD, with a momentum of 0.9, batch size of 64 (except for the
experiments of batch size = 1), and learning rate of 0.00025 / 0.001 / 0.0001 / 0.000025 / 0.00005 for ResNet50-GN / ViT-Base / ViT-Small / ConvNeXt-Tiny / Swin-Tiny. The learning rate for batch size = 1 is scaled down to (0.00025/16) for ResNet50-GN, (0.001/32) for ViT-Base, (0.0001/32) for ViT-Small, (0.000025/32) for ConvNeXt-Tiny and (0.00005/32) for Swin-Tiny. The trainable parameters are the affine parameters of norm layers from layer 1 to layer 3 in ResNet50-GN, from blocks 1 to blocks 8 in ViT-Base and ViT-Small, and all affine parameters in other models.
For natural language reasoning, COME is implemented based on Tent, and uncertainty mass is set to $|Z|$, where $|Z|$ denotes the vocabulary size. We use AdamW as the update rule, with $\beta_1=0.9$, $\beta_2=0.999$, weight decay = 0.0, and optimize the prediction entropy of the first 8 output tokens. The learning rate is $7.5{\times}10^{-6}$ / $5 {\times} 10^{-6}$ / $1.25 {\times} 10^{-5}$ / $5 {\times} 10^{-6}$ for Math-500 / CollegeMath / AIME24 / Minerva. The trainable parameters are the LoRA parameters with a rank of 8.

\textbf{TLM~\citep{Hu2025TLM}\footnote{\url{https://github.com/Fhujinwu/TLM}}.}
We follow all hyper-parameters that are set in TLM unless it does not provide. Specifically, we compare with TLM on the natural language reasoning task, the perplexity threshold $\mathcal{P_0}$ is set to $e^3$. The update rule is AdamW, with $\beta_1=0.9$, $\beta_2=0.999$, weight decay = 0.0. The learning rate is $7.5{\times}10^{-6}$ / $5 {\times} 10^{-6}$ / $1.25 {\times} 10^{-5}$ / $5 {\times} 10^{-6}$ for Math-500 / CollegeMath / AIME24 / Minerva. The trainable parameters are the LoRA parameters with a rank of 8.



\clearpage


\section{More experimental results}\label{suppl:sec:more-results}
In this section, we provide the detailed results of Tables~\ref{tab:imagenet-c-label-shift}-\ref{tab:imagenet-c-blindspot} in the main paper and include additional experiments under both mild and wild testing scenarios to enable a comprehensive comparison.

\subsection{Detailed and Additional Results under Wild Test Scenarios}
We first provide the detailed results of TTA under label shifts and a single sample. From Tables~\ref{tab:imagenet-c-label-shift-details} \& \ref{tab:imagenet-c-bs1-details}, \methodname achieves superior performance across nearly all cases, \eg, 51.6\% (Ours) \textit{vs.} 43.9\% (DeYO) on ResNet50-GN under label shifts, and 51.3\% (Ours) \textit{vs.} 42.3\% (COME) on ViT-Small under the batch size of 1. Similar results are observed even when the data stream exhibits both label shifts and data scarcity, as shown in Table~\ref{tab:imagenet-c-label-shift+bs1-details}, suggesting our effectiveness across test settings.

\subsection{Detailed Results for TTA on the Blind-Spot Subset}
From Table~\ref{tab:imagenet-c-blindspot-details}, prior methods tend to collapse when adapting on the blind-spot subset, \eg, 18.9\% (DeYO) \textit{vs.} 30.6\% (NoAdapt) \wrt the average accuracy on R50-GN. In contrast, even under such a challenging setting, \methodname achieves consistent improvement across all domains and models, suggesting that \methodname substantially expands the scope and reliability of TTA in the real world.

\subsection{Detailed Results under the Mild Test Scenario}
\methodname is not only effective under challenging test scenarios, but also enhances TTA performance significantly on the mild test scenario~\citep{wang2021tent}, where data comes with shuffled labels. As shown in Table~\ref{tab:imagenet-c-mild-details}, \methodname demonstrates superior performance on 4 out of 5 models and also achieves comparative results on ViT-Base, \eg, the average accuracy of 48.6\% (Ours) \textit{vs.} 41.7\% (DeYO) on ResNet50-GN, and 62.6\% (Ours) \textit{vs.} 62.8\% (DeYO) on ViT-Base. This suggests that our \methodname's design helps improve the stability and efficacy of TTA across a wide range of scenarios. 

\begin{table*}[h]
    \caption{Detailed results of TTA under \textbf{\textsc{imbalanced label shifts}}, Table~\ref{tab:imagenet-c-label-shift} in the main paper.}
    \vspace{-0.05in}
    \label{tab:imagenet-c-label-shift-details}
    \begin{center}
    \begin{threeparttable}
    \large
    \resizebox{1.0\linewidth}{!}{
    \begin{tabular}{l|ccc|cccc|cccc|cccc|>{\columncolor{blue!8}}c}
    \multicolumn{1}{c}{} & \multicolumn{3}{c}{Noise} & \multicolumn{4}{c}{Blur} & \multicolumn{4}{c}{Weather} & \multicolumn{4}{c}{Digital} \\
    Model+Method & Gauss. & Shot & Impul. & Defoc. & Glass & Motion & Zoom & Snow & Frost & Fog & Brit. & Contr. & Elastic & Pixel & JPEG & Avg. \\
    \midrule
    ResNet50-GN & 18.0 & 19.8 & 17.9 & 19.8 & 11.4 & 21.4 & 24.9 & 40.4 & 47.3 & 33.6 & 69.3 & 36.3 & 18.6 & 28.4 & 52.3 & 30.6 \\
    ~~$\bullet~$Tent & 2.6 & 3.3 & 2.7 & 13.9 & 7.9 & 19.5 & 17.0 & 16.5 & 21.9 & 1.8 & 70.5 & 42.2 & 6.6 & 49.4 & 53.7 & 22.0 \\
    ~~$\bullet~$SAR & 33.1 & 36.5 & 35.5 & 19.2 & 19.5 & 33.3 & 27.7 & 23.9 & 45.3 & 50.1 & 71.9 & 46.7 & 7.1 & 52.1 & 56.3 & 37.2 \\
    ~~$\bullet~$EATA & 27.0 & 28.3 & 28.1 & 14.9 & 17.1 & 24.4 & 25.3 & 32.2 & 32.0 & 39.8 & 66.7 & 33.6 & 24.5 & 41.9 & 38.4 & 31.6 \\
    ~~$\bullet~$COME & 17.7 & 19.7 & 17.7 & 19.7 & 11.2 & 21.2 & 24.8 & 40.2 & 47.8 & 33.7 & 68.9 & 35.6 & 18.6 & 27.7 & 51.8 & 30.4 \\
    ~~$\bullet~$DeYO & 42.5 & 44.9 & 43.8 & 22.2 & 16.3 & 41.0 & 13.2 & 52.2 & 51.5 & 39.7 & 73.4 & 52.6 & 46.9 & 59.3 & 59.3 & 43.9 \\
    \rowcolor{black!8}~~$\bullet~$ZeroSiam (ours) & 42.9 & 45.1 & 43.1 & 35.1 & 36.1 & 44.5 & 49.2 & 58.1 & 55.1 & 62.9 & 72.3 & 54.8 & 52.5 & 61.7 & 60.1 & \textbf{51.6} \\
    \cmidrule{1-17}
    ViT-Base & 9.4 & 6.7 & 8.3 & 29.1 & 23.4 & 34.0 & 27.0 & 15.8 & 26.3 & 47.4 & 54.7 & 43.9 & 30.5 & 44.5 & 47.6 & 29.9 \\
    ~~$\bullet~$Tent & 32.7 & 1.4 & 34.6 & 54.4 & 52.3 & 58.2 & 52.2 & 7.7 & 12.0 & 69.3 & 76.1 & 66.1 & 56.7 & 69.4 & 66.4 & 47.3 \\
    ~~$\bullet~$SAR & 46.5 & 43.1 & 48.9 & 55.3 & 54.3 & 58.9 & 54.8 & 53.6 & 46.2 & 69.7 & 76.2 & 66.2 & 60.9 & 69.6 & 66.6 & 58.0 \\
    ~~$\bullet~$EATA & 35.9 & 34.6 & 36.7 & 45.3 & 47.2 & 49.3 & 47.7 & 56.5 & 55.4 & 62.2 & 72.2 & 21.7 & 56.2 & 64.7 & 63.7 & 50.0 \\
    ~~$\bullet~$COME & 45.3 & 52.1 & 52.0 & 56.1 & 57.4 & 62.5 & 60.7 & 66.6 & 64.0 & 71.6 & 77.1 & 65.9 & 61.9 & 72.9 & 69.2 & 62.4 \\
    ~~$\bullet~$DeYO & 53.5 & 36.0 & 54.6 & 57.6 & 58.7 & 63.7 & 46.2 & 67.6 & 66.0 & 73.2 & 77.9 & 66.7 & 69.0 & 73.5 & 70.3 & 62.3 \\
    \rowcolor{black!8}~~$\bullet~$ZeroSiam (ours) & 52.3 & 52.6 & 53.4 & 57.7 & 58.7 & 62.7 & 61.1 & 67.6 & 66.0 & 73.3 & 78.0 & 67.0 & 67.9 & 73.5 & 70.1 & \textbf{64.1} \\
    \cmidrule{1-17}
    ViT-Small & 2.0 & 2.0 & 1.5 & 24.2 & 17.1 & 30.3 & 22.0 & 9.5 & 19.2 & 37.9 & 44.4 & 30.1 & 24.8 & 38.2 & 41.0 & 22.9 \\
    ~~$\bullet~$Tent & 0.3 & 0.4 & 0.2 & 43.1 & 37.0 & 45.4 & 36.3 & 5.5 & 24.2 & 58.4 & 65.8 & 54.6 & 26.6 & 56.6 & 54.9 & 34.0 \\
    ~~$\bullet~$SAR & 1.6 & 2.0 & 1.3 & 44.3 & 39.0 & 46.5 & 39.0 & 16.6 & 45.9 & 58.6 & 65.8 & 54.7 & 44.4 & 56.9 & 55.2 & 38.1 \\
    ~~$\bullet~$EATA & 15.6 & 15.5 & 17.8 & 42.4 & 40.3 & 44.8 & 41.6 & 46.2 & 48.7 & 58.9 & 65.3 & 52.9 & 50.5 & 57.2 & 56.2 & 43.6 \\
    ~~$\bullet~$COME & 0.1 & 0.2 & 0.1 & 47.4 & 47.2 & 52.8 & 10.5 & 35.5 & 53.8 & 63.9 & 70.7 & 58.1 & 56.7 & 63.4 & 60.2 & 41.4 \\
    ~~$\bullet~$DeYO & 0.1 & 0.2 & 0.1 & 48.4 & 47.7 & 53.4 & 16.2 & 47.0 & 54.6 & 65.0 & 71.0 & 59.3 & 58.5 & 64.1 & 61.0 & 43.1 \\
    \rowcolor{black!8}~~$\bullet~$ZeroSiam (ours) & 25.7 & 25.9 & 27.9 & 48.2 & 48.3 & 53.5 & 51.5 & 55.7 & 55.6 & 65.4 & 71.2 & 58.4 & 59.5 & 64.2 & 61.2 & \textbf{51.5} \\
    \cmidrule{1-17}
    ConvNeXt-Tiny & 21.4 & 27.7 & 22.1 & 24.5 & 11.0 & 32.5 & 31.2 & 44.5 & 52.5 & 39.5 & 72.0 & 44.8 & 23.1 & 17.7 & 55.8 & 34.7 \\
    ~~$\bullet~$Tent & 18.1 & 23.8 & 26.6 & 24.3 & 4.2 & 33.9 & 29.2 & 41.6 & 46.3 & 39.7 & 73.2 & 51.6 & 18.4 & 10.7 & 56.9 & 33.2 \\
    ~~$\bullet~$SAR & 34.8 & 36.7 & 34.9 & 27.1 & 4.1 & 35.0 & 30.8 & 44.3 & 47.6 & 8.4 & 73.1 & 51.1 & 19.4 & 24.7 & 56.7 & 35.2 \\
    ~~$\bullet~$EATA & 34.4 & 36.8 & 35.0 & 26.7 & 19.1 & 37.0 & 33.3 & 47.7 & 48.4 & 50.6 & 73.0 & 51.9 & 28.7 & 31.4 & 56.9 & 40.7 \\
    ~~$\bullet~$COME & 42.6 & 44.5 & 42.4 & 38.7 & 1.2 & 46.0 & 11.8 & 57.1 & 56.9 & 61.8 & 75.6 & 60.6 & 5.3 & 51.9 & 61.9 & 43.9 \\
    ~~$\bullet~$DeYO & 33.2 & 38.0 & 36.7 & 27.9 & 2.6 & 36.4 & 9.2 & 52.7 & 45.3 & 2.9 & 74.4 & 56.6 & 4.7 & 19.5 & 58.4 & 33.2 \\
    \rowcolor{black!8}~~$\bullet~$ZeroSiam (ours) & 42.3 & 44.5 & 42.1 & 37.4 & 33.5 & 45.3 & 44.0 & 56.5 & 55.4 & 62.9 & 74.7 & 59.4 & 44.3 & 51.7 & 60.0 & \textbf{50.3} \\
    \cmidrule{1-17}
    Swin-Tiny & 26.6 & 27.9 & 22.8 & 21.7 & 13.2 & 26.5 & 25.4 & 41.0 & 45.8 & 47.7 & 67.6 & 39.1 & 22.6 & 8.4 & 33.3 & 31.3 \\
    ~~$\bullet~$Tent & 16.1 & 14.1 & 10.5 & 17.1 & 16.9 & 23.5 & 20.0 & 30.4 & 28.3 & 11.3 & 69.7 & 47.8 & 9.3 & 1.1 & 43.2 & 24.0 \\
    ~~$\bullet~$SAR & 29.8 & 28.0 & 29.8 & 19.8 & 10.6 & 24.4 & 21.2 & 33.1 & 34.3 & 22.3 & 67.9 & 45.9 & 13.2 & 4.8 & 42.9 & 28.5 \\
    ~~$\bullet~$EATA & 36.2 & 37.1 & 33.6 & 26.3 & 29.8 & 38.2 & 37.3 & 47.3 & 44.5 & 51.4 & 70.3 & 42.4 & 40.9 & 34.8 & 46.5 & 41.1 \\
    ~~$\bullet~$COME & 41.8 & 43.6 & 43.2 & 36.5 & 9.8 & 46.7 & 2.4 & 54.6 & 52.3 & 60.0 & 72.0 & 47.4 & 4.3 & 3.1 & 55.8 & 38.2 \\
    ~~$\bullet~$DeYO & 35.3 & 38.5 & 35.6 & 18.9 & 24.6 & 37.8 & 9.4 & 43.7 & 41.8 & 52.3 & 70.6 & 50.7 & 21.5 & 15.6 & 50.8 & 36.5 \\
    \rowcolor{black!8}~~$\bullet~$ZeroSiam (ours) & 44.4 & 45.8 & 45.7 & 36.4 & 38.3 & 47.0 & 47.0 & 54.3 & 51.9 & 58.7 & 72.4 & 54.2 & 52.7 & 51.8 & 55.5 & \textbf{50.4} \\
    \end{tabular}
    }
    \end{threeparttable}
    \end{center}
\end{table*}


\begin{table*}[t]
    \caption{Detailed results of TTA with \textbf{\textsc{batch size=1}}, \ie, Table~\ref{tab:imagenet-c-bs1} in the main paper.}
    \vspace{-0.1in}
    \label{tab:imagenet-c-bs1-details}
    \begin{center}
    \begin{threeparttable}
    \large
    \resizebox{1.0\linewidth}{!}{
    \begin{tabular}{l|ccc|cccc|cccc|cccc|>{\columncolor{blue!8}}c}
    \multicolumn{1}{c}{} & \multicolumn{3}{c}{Noise} & \multicolumn{4}{c}{Blur} & \multicolumn{4}{c}{Weather} & \multicolumn{4}{c}{Digital} \\
    Model+Method & Gauss. & Shot & Impul. & Defoc. & Glass & Motion & Zoom & Snow & Frost & Fog & Brit. & Contr. & Elastic & Pixel & JPEG & Avg. \\
    \midrule
    ResNet50-GN & 18.0 & 19.8 & 17.9 & 19.8 & 11.4 & 21.4 & 24.9 & 40.4 & 47.3 & 33.6 & 69.3 & 36.3 & 18.6 & 28.4 & 52.3 & 30.6 \\
    ~~$\bullet~$Tent & 2.5 & 2.9 & 2.5 & 13.5 & 3.6 & 18.6 & 17.6 & 15.3 & 23.0 & 1.4 & 70.4 & 42.2 & 6.2 & 49.2 & 53.8 & 21.5 \\
    ~~$\bullet~$SAR & 23.4 & 26.6 & 23.0 & 18.4 & 15.4 & 28.6 & 30.4 & 44.9 & 44.7 & 25.7 & 72.3 & 44.5 & 14.8 & 47.0 & 56.1 & 34.5 \\
    ~~$\bullet~$EATA & 24.8 & 28.3 & 25.7 & 18.1 & 17.3 & 28.5 & 29.3 & 44.5 & 44.3 & 41.6 & 70.9 & 44.6 & 27.0 & 46.8 & 55.7 & 36.5 \\
    ~~$\bullet~$COME & 17.8 & 19.7 & 17.7 & 19.7 & 11.3 & 21.4 & 24.9 & 40.3 & 47.6 & 33.6 & 68.9 & 35.7 & 18.5 & 27.9 & 51.8 & 30.5 \\
    ~~$\bullet~$DeYO & 41.8 & 44.7 & 43.0 & 22.5 & 24.7 & 41.8 & 24.4 & 54.5 & 52.2 & 20.7 & 73.5 & 53.5 & 48.5 & 60.2 & 59.8 & 44.4 \\
    \rowcolor{black!8}~~$\bullet~$ZeroSiam (ours) & 41.7 & 44.4 & 41.8 & 32.8 & 35.8 & 45.2 & 49.5 & 58.4 & 55.5 & 63.8 & 73.7 & 54.8 & 55.1 & 63.6 & 61.0 & \textbf{51.8} \\
    \cmidrule{1-17}
    ViT-Base & 9.5 & 6.7 & 8.2 & 29.0 & 23.4 & 33.9 & 27.1 & 15.9 & 26.5 & 47.2 & 54.7 & 44.1 & 30.5 & 44.5 & 47.8 & 29.9 \\
    ~~$\bullet~$Tent & 42.2 & 1.0 & 43.3 & 52.4 & 48.2 & 55.5 & 50.5 & 16.5 & 16.9 & 66.4 & 74.9 & 64.7 & 51.6 & 67.0 & 64.3 & 47.7 \\
    ~~$\bullet~$SAR & 40.8 & 36.4 & 41.5 & 53.7 & 50.7 & 57.5 & 52.8 & 59.1 & 50.7 & 68.1 & 74.6 & 65.7 & 57.9 & 68.9 & 65.9 & 56.3 \\
    ~~$\bullet~$EATA & 29.7 & 25.1 & 34.6 & 44.7 & 39.2 & 48.3 & 42.4 & 37.5 & 45.9 & 60.0 & 65.9 & 61.2 & 46.4 & 58.2 & 59.6 & 46.6 \\
    ~~$\bullet~$COME & 51.7 & 51.4 & 52.1 & 57.6 & 58.2 & 63.3 & 41.2 & 67.1 & 64.8 & 72.8 & 77.7 & 68.1 & 67.5 & 73.0 & 69.9 & 62.4 \\
    ~~$\bullet~$DeYO & 54.0 & 52.1 & 55.1 & 58.8 & 59.5 & 64.2 & 53.5 & 68.2 & 66.4 & 73.7 & 78.3 & 68.2 & 68.9 & 73.8 & 70.8 & \textbf{64.4} \\
    \rowcolor{black!8}~~$\bullet~$ZeroSiam (ours) & 52.1 & 52.7 & 52.8 & 57.8 & 58.3 & 63.0 & 60.6 & 67.0 & 65.7 & 73.0 & 77.9 & 67.6 & 67.9 & 72.8 & 69.9 & 63.9 \\
    \cmidrule{1-17}
    ViT-Small & 2.0 & 1.9 & 1.5 & 24.3 & 17.1 & 30.2 & 21.8 & 9.5 & 19.3 & 37.9 & 44.4 & 29.9 & 24.9 & 38.2 & 41.3 & 22.9 \\
    ~~$\bullet~$Tent & 21.3 & 27.7 & 22.0 & 24.5 & 10.6 & 32.3 & 30.9 & 44.6 & 52.4 & 39.6 & 72.3 & 44.5 & 23.4 & 17.6 & 55.6 & 34.6 \\
    ~~$\bullet~$SAR & 26.9 & 27.7 & 23.0 & 21.7 & 13.2 & 26.4 & 25.3 & 41.2 & 45.8 & 47.7 & 67.8 & 39.2 & 22.5 & 8.4 & 33.2 & 31.3 \\
    ~~$\bullet~$EATA & 2.4 & 2.5 & 1.8 & 32.8 & 25.4 & 36.3 & 28.2 & 17.1 & 28.2 & 48.7 & 53.5 & 46.1 & 34.1 & 46.7 & 47.7 & 30.1 \\
    ~~$\bullet~$COME & 0.5 & 3.5 & 0.4 & 48.4 & 47.3 & 52.9 & 15.4 & 38.2 & 53.8 & 64.3 & 70.7 & 58.8 & 55.8 & 63.6 & 60.4 & 42.3 \\
    ~~$\bullet~$DeYO & 0.4 & 0.8 & 0.3 & 49.0 & 47.4 & 53.0 & 20.1 & 46.4 & 54.6 & 64.7 & 70.9 & 59.3 & 57.3 & 64.1 & 61.0 & 43.3 \\
    \rowcolor{black!8}~~$\bullet~$ZeroSiam (ours) & 26.0 & 26.9 & 28.3 & 48.6 & 48.2 & 52.7 & 50.7 & 54.4 & 55.1 & 65.0 & 71.0 & 59.1 & 58.6 & 63.4 & 60.8 & \textbf{51.3} \\
    \cmidrule{1-17}
    ConvNeXt-Tiny & 21.3 & 27.7 & 22.0 & 24.5 & 10.6 & 32.3 & 30.9 & 44.6 & 52.4 & 39.6 & 72.3 & 44.5 & 23.4 & 17.6 & 55.6 & 34.6 \\
    ~~$\bullet~$Tent & 30.3 & 33.5 & 32.0 & 24.7 & 7.3 & 33.6 & 30.0 & 43.7 & 47.3 & 43.7 & 72.9 & 49.7 & 21.4 & 18.5 & 56.3 & 36.3 \\
    ~~$\bullet~$SAR & 30.3 & 33.6 & 29.8 & 25.9 & 11.1 & 33.4 & 30.8 & 44.9 & 47.8 & 38.3 & 72.9 & 46.2 & 22.6 & 18.1 & 56.2 & 36.1 \\
    ~~$\bullet~$EATA & 26.6 & 30.5 & 26.9 & 25.2 & 11.4 & 33.1 & 31.0 & 44.8 & 50.1 & 40.0 & 72.2 & 47.8 & 23.5 & 18.5 & 55.8 & 35.8 \\
    ~~$\bullet~$COME & 41.1 & 43.2 & 41.1 & 37.3 & 2.1 & 44.5 & 17.9 & 56.4 & 55.8 & 60.7 & 75.2 & 59.6 & 8.6 & 44.7 & 61.1 & 43.3 \\
    ~~$\bullet~$DeYO & 36.8 & 39.4 & 37.0 & 29.5 & 2.9 & 38.0 & 14.6 & 51.6 & 46.9 & 4.5 & 74.0 & 55.9 & 7.8 & 19.5 & 57.9 & 34.4 \\
    \rowcolor{black!8}~~$\bullet~$ZeroSiam (ours) & 41.5 & 43.6 & 41.5 & 37.2 & 32.5 & 44.8 & 42.7 & 55.6 & 54.7 & 62.2 & 74.3 & 59.0 & 42.3 & 50.5 & 59.3 & \textbf{49.4} \\
    \cmidrule{1-17}
    Swin-Tiny & 26.9 & 27.7 & 23.0 & 21.7 & 13.2 & 26.4 & 25.3 & 41.2 & 45.8 & 47.7 & 67.8 & 39.2 & 22.5 & 8.4 & 33.2 & 31.3 \\
    ~~$\bullet~$Tent & 23.8 & 26.2 & 22.5 & 19.4 & 18.4 & 26.9 & 27.3 & 40.4 & 36.0 & 22.8 & 69.2 & 46.5 & 16.5 & 2.4 & 41.4 & 29.3 \\
    ~~$\bullet~$SAR & 25.4 & 27.1 & 23.0 & 21.6 & 14.8 & 27.3 & 27.5 & 38.9 & 37.5 & 44.1 & 67.7 & 38.4 & 18.9 & 8.3 & 39.5 & 30.7 \\
    ~~$\bullet~$EATA & 32.8 & 34.5 & 31.7 & 23.4 & 17.5 & 32.0 & 28.2 & 43.5 & 43.6 & 45.4 & 69.7 & 45.8 & 27.2 & 10.5 & 39.6 & 35.0 \\
    ~~$\bullet~$COME & 38.6 & 41.4 & 39.0 & 34.4 & 14.9 & 44.6 & 7.3 & 52.9 & 51.0 & 56.9 & 72.4 & 51.7 & 15.7 & 33.8 & 53.3 & 40.5 \\
    ~~$\bullet~$DeYO & 35.5 & 38.5 & 35.3 & 22.0 & 23.9 & 37.1 & 15.3 & 42.4 & 40.4 & 47.6 & 70.2 & 50.8 & 23.7 & 18.0 & 48.8 & 36.6 \\
    \rowcolor{black!8}~~$\bullet~$ZeroSiam (ours) & 44.0 & 45.4 & 45.2 & 34.7 & 39.0 & 46.3 & 46.0 & 53.9 & 51.3 & 57.8 & 72.1 & 55.0 & 51.7 & 50.3 & 54.6 & \textbf{49.8} \\
    \end{tabular}
    }
    \end{threeparttable}
    \end{center}
\end{table*}
\begin{table*}[t]
    \caption{Additional results of TTA under \textbf{\textsc{label shifts}} with \textbf{\textsc{batch size=1}} \wrt \textbf{Accuracy(\%)}.}
    \vspace{-0.1in}
    \label{tab:imagenet-c-label-shift+bs1-details}
    \begin{center}
    \begin{threeparttable}
    \large
    \resizebox{1.0\linewidth}{!}{
    \begin{tabular}{l|ccc|cccc|cccc|cccc|>{\columncolor{blue!8}}c}
    \multicolumn{1}{c}{} & \multicolumn{3}{c}{Noise} & \multicolumn{4}{c}{Blur} & \multicolumn{4}{c}{Weather} & \multicolumn{4}{c}{Digital} \\
    Model+Method & Gauss. & Shot & Impul. & Defoc. & Glass & Motion & Zoom & Snow & Frost & Fog & Brit. & Contr. & Elastic & Pixel & JPEG & Avg. \\
    \midrule
    ResNet50-GN & 18.0 & 19.8 & 17.9 & 19.8 & 11.4 & 21.4 & 24.9 & 40.4 & 47.3 & 33.6 & 69.3 & 36.3 & 18.6 & 28.4 & 52.3 & 30.6 \\
    ~~$\bullet~$Tent & 1.2 & 1.5 & 1.3 & 10.0 & 2.1 & 14.3 & 11.2 & 8.2 & 11.7 & 0.8 & 70.1 & 41.6 & 3.4 & 49.5 & 52.3 & 18.6 \\
    ~~$\bullet~$SAR & 23.4 & 26.5 & 23.8 & 18.3 & 15.4 & 28.4 & 29.5 & 44.3 & 44.5 & 31.6 & 72.3 & 44.5 & 14.9 & 46.8 & 56.1 & 34.7 \\
    ~~$\bullet~$EATA & 19.2 & 21.7 & 19.4 & 17.9 & 13.0 & 23.5 & 25.7 & 39.7 & 43.6 & 34.5 & 69.4 & 38.4 & 20.0 & 34.9 & 53.2 & 31.6 \\
    ~~$\bullet~$COME & 17.8 & 19.7 & 17.8 & 19.7 & 11.2 & 21.3 & 24.9 & 40.3 & 47.7 & 33.6 & 69.0 & 35.7 & 18.6 & 27.9 & 51.7 & 30.5 \\
    ~~$\bullet~$DeYO & 41.0 & 43.8 & 42.2 & 22.9 & 23.3 & 41.4 & 15.9 & 54.0 & 52.3 & 20.7 & 73.4 & 53.6 & 48.1 & 60.1 & 59.8 & 43.5 \\
    \rowcolor{black!8}~~$\bullet~$ZeroSiam (ours) & 40.3 & 42.4 & 40.5 & 32.6 & 33.2 & 41.2 & 44.3 & 53.6 & 53.1 & 59.5 & 72.7 & 52.1 & 46.8 & 58.7 & 58.7 & \textbf{48.6} \\
    \cmidrule{1-17}
    ViT-Base & 9.4 & 6.7 & 8.3 & 29.1 & 23.4 & 34.0 & 27.0 & 15.8 & 26.3 & 47.4 & 54.7 & 43.9 & 30.5 & 44.5 & 47.6 & 29.9 \\
    ~~$\bullet~$Tent & 29.2 & 7.9 & 21.1 & 49.3 & 46.9 & 53.7 & 47.5 & 18.0 & 19.2 & 63.7 & 71.1 & 61.4 & 51.8 & 63.6 & 62.2 & 44.4 \\
    ~~$\bullet~$SAR & 24.8 & 18.3 & 24.1 & 38.2 & 31.9 & 42.1 & 35.5 & 31.1 & 37.5 & 52.3 & 61.0 & 52.3 & 36.3 & 52.0 & 52.4 & 39.3 \\
    ~~$\bullet~$EATA & 30.7 & 26.8 & 31.3 & 42.9 & 39.5 & 47.8 & 35.6 & 38.0 & 43.7 & 60.2 & 65.8 & 57.7 & 46.9 & 59.4 & 58.0 & 45.6 \\
    ~~$\bullet~$COME & 51.1 & 47.9 & 52.5 & 57.2 & 58.0 & 63.1 & 60.5 & 67.3 & 64.8 & 72.8 & 77.6 & 67.6 & 67.4 & 73.4 & 69.8 & \textbf{63.4} \\
    ~~$\bullet~$DeYO & 53.2 & 36.2 & 54.5 & 58.2 & 59.3 & 64.3 & 41.8 & 68.6 & 66.7 & 73.8 & 78.3 & 67.4 & 69.3 & 74.1 & 71.0 & 62.4 \\
    \rowcolor{black!8}~~$\bullet~$ZeroSiam (ours) & 49.9 & 50.2 & 51.0 & 56.0 & 56.4 & 60.8 & 58.4 & 65.5 & 64.4 & 71.9 & 77.4 & 66.1 & 65.9 & 71.5 & 68.7 & 62.3 \\
    \cmidrule{1-17}
    ViT-Small & 2.0 & 2.0 & 1.5 & 24.2 & 17.1 & 30.3 & 22.0 & 9.5 & 19.2 & 37.9 & 44.4 & 30.1 & 24.8 & 38.2 & 41.0 & 22.9 \\
    ~~$\bullet~$Tent & 0.3 & 0.3 & 0.2 & 43.4 & 37.2 & 45.7 & 36.6 & 4.6 & 22.3 & 58.4 & 66.0 & 54.7 & 27.0 & 56.8 & 55.3 & 33.9 \\
    ~~$\bullet~$SAR & 2.2 & 2.3 & 1.7 & 34.5 & 26.0 & 39.2 & 27.4 & 19.7 & 32.1 & 45.0 & 53.7 & 43.4 & 31.9 & 44.2 & 45.9 & 29.9 \\
    ~~$\bullet~$EATA & 2.1 & 2.3 & 1.7 & 29.1 & 21.8 & 34.6 & 25.0 & 14.7 & 24.8 & 42.8 & 49.5 & 39.4 & 28.7 & 42.7 & 44.2 & 26.9 \\
    ~~$\bullet~$COME & 0.5 & 3.5 & 0.4 & 47.8 & 46.9 & 52.7 & 16.9 & 37.3 & 53.6 & 64.3 & 70.6 & 58.8 & 55.8 & 63.5 & 60.4 & 42.2 \\
    ~~$\bullet~$DeYO & 0.4 & 0.8 & 0.3 & 48.7 & 47.3 & 53.2 & 21.2 & 46.6 & 54.2 & 64.9 & 70.9 & 59.4 & 57.5 & 64.1 & 61.0 & 43.4 \\
    \rowcolor{black!8}~~$\bullet~$ZeroSiam (ours) & 23.7 & 23.6 & 25.0 & 46.0 & 44.9 & 49.6 & 46.8 & 50.8 & 52.8 & 62.7 & 69.1 & 56.8 & 54.8 & 61.1 & 58.7 & \textbf{48.4} \\
    \cmidrule{1-17}
    ConvNeXt-Tiny & 21.4 & 27.7 & 22.1 & 24.5 & 11.0 & 32.5 & 31.2 & 44.5 & 52.5 & 39.5 & 72.0 & 44.8 & 23.1 & 17.7 & 55.8 & 34.7 \\
    ~~$\bullet~$Tent & 18.2 & 24.8 & 26.0 & 24.4 & 4.3 & 34.0 & 29.1 & 41.8 & 46.3 & 42.4 & 73.2 & 51.8 & 18.2 & 10.4 & 56.9 & 33.5 \\
    ~~$\bullet~$SAR & 30.2 & 33.6 & 30.2 & 25.8 & 11.1 & 33.3 & 31.0 & 44.9 & 47.6 & 37.8 & 73.0 & 46.2 & 22.7 & 18.2 & 56.3 & 36.1 \\
    ~~$\bullet~$EATA & 24.0 & 28.9 & 24.5 & 24.7 & 11.2 & 32.5 & 31.0 & 44.6 & 51.4 & 39.7 & 72.1 & 45.9 & 23.2 & 18.0 & 55.8 & 35.2 \\
    ~~$\bullet~$COME & 41.4 & 43.4 & 41.3 & 37.3 & 2.4 & 44.7 & 18.4 & 56.4 & 55.7 & 61.1 & 75.3 & 59.6 & 11.0 & 44.3 & 61.2 & 43.6 \\
    ~~$\bullet~$DeYO & 36.9 & 39.4 & 37.0 & 29.2 & 3.2 & 37.7 & 14.8 & 51.6 & 47.6 & 5.4 & 74.0 & 55.8 & 8.3 & 23.3 & 58.0 & 34.8 \\
    \rowcolor{black!8}~~$\bullet~$ZeroSiam (ours) & 41.3 & 43.3 & 41.1 & 36.3 & 31.4 & 43.8 & 42.2 & 54.9 & 53.7 & 61.5 & 74.2 & 58.2 & 41.0 & 49.8 & 59.2 & \textbf{48.8} \\
    \cmidrule{1-17}
    Swin-Tiny & 26.6 & 27.9 & 22.8 & 21.7 & 13.2 & 26.5 & 25.4 & 41.0 & 45.8 & 47.7 & 67.6 & 39.1 & 22.6 & 8.4 & 33.3 & 31.3 \\
    ~~$\bullet~$Tent & 14.6 & 13.1 & 11.1 & 17.4 & 14.3 & 21.2 & 20.9 & 31.4 & 24.2 & 12.1 & 69.7 & 48.1 & 9.1 & 1.0 & 42.7 & 23.4 \\
    ~~$\bullet~$SAR & 30.5 & 32.2 & 29.2 & 23.0 & 17.1 & 29.9 & 27.9 & 41.9 & 39.9 & 47.5 & 68.6 & 40.4 & 21.5 & 8.9 & 40.4 & 33.3 \\
    ~~$\bullet~$EATA & 27.7 & 29.1 & 25.0 & 21.7 & 14.2 & 27.7 & 25.9 & 40.7 & 43.5 & 47.4 & 68.1 & 39.9 & 23.4 & 9.1 & 34.3 & 31.9 \\
    ~~$\bullet~$COME & 38.6 & 41.0 & 39.2 & 34.3 & 21.6 & 44.6 & 5.0 & 53.1 & 51.1 & 57.2 & 72.2 & 51.5 & 21.3 & 18.3 & 53.4 & 40.2 \\
    ~~$\bullet~$DeYO & 35.7 & 37.6 & 35.8 & 21.0 & 26.0 & 36.8 & 14.7 & 42.3 & 42.0 & 49.6 & 70.4 & 50.9 & 24.4 & 26.2 & 48.8 & 37.5 \\
    \rowcolor{black!8}~~$\bullet~$ZeroSiam (ours) & 42.8 & 44.4 & 43.7 & 33.3 & 36.1 & 44.2 & 43.9 & 52.1 & 49.6 & 55.9 & 71.6 & 53.5 & 49.3 & 47.7 & 53.4 & \textbf{48.1} \\
    \end{tabular}
    }
    \end{threeparttable}
    \end{center}
\end{table*}
\begin{table*}[t]
    \caption{Detailed results of TTA on the \textbf{\textsc{blind-spot subset}}, \ie, Table~\ref{tab:imagenet-c-blindspot} in the main paper.}
    \vspace{-0.1in}
    \label{tab:imagenet-c-blindspot-details}
    \begin{center}
    \begin{threeparttable}
    \large
    \resizebox{1.0\linewidth}{!}{
    \begin{tabular}{l|ccc|cccc|cccc|cccc|>{\columncolor{blue!8}}c}
    \multicolumn{1}{c}{} & \multicolumn{3}{c}{Noise} & \multicolumn{4}{c}{Blur} & \multicolumn{4}{c}{Weather} & \multicolumn{4}{c}{Digital} \\
    Model+Method & Gauss. & Shot & Impul. & Defoc. & Glass & Motion & Zoom & Snow & Frost & Fog & Brit. & Contr. & Elastic & Pixel & JPEG & Avg. \\
    \midrule
    ResNet50-GN & 18.0 & 19.8 & 17.9 & 19.8 & 11.4 & 21.4 & 24.9 & 40.4 & 47.3 & 33.6 & 69.3 & 36.3 & 18.6 & 28.4 & 52.3 & 30.6 \\
    ~~$\bullet~$Tent & 0.2 & 0.2 & 0.2 & 10.5 & 2.3 & 5.8 & 2.3 & 1.6 & 2.3 & 0.2 & 63.3 & 16.7 & 0.5 & 48.3 & 46.8 & 13.4 \\
    ~~$\bullet~$SAR & 17.0 & 19.6 & 16.9 & 15.8 & 11.4 & 22.2 & 24.0 & 5.8 & 42.0 & 25.4 & 69.0 & 38.9 & 1.3 & 31.9 & 54.2 & 26.4 \\
    ~~$\bullet~$EATA & 18.6 & 21.2 & 18.6 & 15.6 & 13.4 & 22.7 & 24.4 & 37.1 & 38.8 & 33.2 & 67.8 & 41.6 & 19.1 & 47.1 & 53.0 & 31.5 \\
    ~~$\bullet~$COME & 18.0 & 19.8 & 17.9 & 19.8 & 11.3 & 21.4 & 24.9 & 40.4 & 47.3 & 33.6 & 69.2 & 36.2 & 18.6 & 28.3 & 52.2 & 30.6 \\
    ~~$\bullet~$DeYO & 0.4 & 0.6 & 0.4 & 7.9 & 0.2 & 28.1 & 1.6 & 5.5 & 7.1 & 0.1 & 71.2 & 47.8 & 1.3 & 59.7 & 51.4 & 18.9 \\
    \rowcolor{black!8}~~$\bullet~$ZeroSiam (ours) & 38.0 & 42.7 & 39.0 & 27.9 & 34.0 & 41.1 & 43.3 & 52.1 & 44.3 & 59.1 & 60.2 & 51.2 & 53.8 & 60.3 & 45.4 & \textbf{46.2} \\
    \cmidrule{1-17}
    ViT-Base & 9.5 & 6.7 & 8.2 & 29.0 & 23.4 & 33.9 & 27.1 & 15.9 & 26.5 & 47.2 & 54.7 & 44.1 & 30.5 & 44.5 & 47.8 & 29.9 \\
    ~~$\bullet~$Tent & 0.2 & 0.2 & 0.1 & 56.5 & 54.0 & 60.4 & 0.5 & 1.4 & 1.0 & 0.2 & 77.5 & 67.4 & 0.3 & 71.6 & 68.7 & 30.7 \\
    ~~$\bullet~$SAR & 42.2 & 7.1 & 43.9 & 55.3 & 51.3 & 59.4 & 54.0 & 19.7 & 47.6 & 71.5 & 77.4 & 67.1 & 19.4 & 71.3 & 67.8 & 50.3 \\
    ~~$\bullet~$EATA & 26.4 & 20.3 & 32.2 & 45.7 & 37.5 & 48.6 & 42.3 & 37.2 & 47.5 & 62.2 & 65.7 & 63.1 & 46.8 & 60.1 & 59.8 & 46.4 \\
    ~~$\bullet~$COME & 0.1 & 0.1 & 0.1 & 56.9 & 58.4 & 63.5 & 0.5 & 68.3 & 64.7 & 73.4 & 77.1 & 67.1 & 69.1 & 73.4 & 69.5 & 49.5 \\
    ~~$\bullet~$DeYO & 0.1 & 0.3 & 0.2 & 59.4 & 60.4 & 65.1 & 3.8 & 69.7 & 67.6 & 74.5 & 78.5 & 68.7 & 70.8 & 75.0 & 71.2 & 51.0 \\
    \rowcolor{black!8}~~$\bullet~$ZeroSiam (ours) & 52.6 & 53.2 & 53.4 & 57.7 & 58.6 & 63.4 & 61.7 & 67.8 & 66.5 & 73.8 & 78.2 & 67.8 & 69.2 & 73.7 & 70.2 & \textbf{64.5} \\
    \cmidrule{1-17}
    ViT-Small & 2.0 & 1.9 & 1.5 & 24.3 & 17.1 & 30.2 & 21.8 & 9.5 & 19.3 & 37.9 & 44.4 & 29.9 & 24.9 & 38.2 & 41.3 & 22.9 \\
    ~~$\bullet~$Tent & 0.1 & 0.2 & 0.1 & 44.6 & 36.6 & 45.6 & 33.2 & 1.3 & 16.5 & 59.5 & 68.0 & 56.5 & 0.5 & 58.2 & 56.2 & 31.8 \\
    ~~$\bullet~$SAR & 2.1 & 2.2 & 1.6 & 31.1 & 23.7 & 39.2 & 27.5 & 9.8 & 40.0 & 51.8 & 66.4 & 46.7 & 23.4 & 53.2 & 49.8 & 31.2 \\
    ~~$\bullet~$EATA & 2.2 & 2.2 & 1.6 & 28.4 & 20.3 & 33.2 & 24.2 & 11.8 & 23.0 & 44.8 & 51.4 & 47.4 & 28.3 & 45.4 & 45.6 & 27.3 \\
    ~~$\bullet~$COME & 0.1 & 2.6 & 0.1 & 47.3 & 48.0 & 52.4 & 0.2 & 31.2 & 55.6 & 65.0 & 71.7 & 59.5 & 0.4 & 64.4 & 60.6 & 37.3 \\
    ~~$\bullet~$DeYO & 0.1 & 0.3 & 0.1 & 49.4 & 48.9 & 54.0 & 0.5 & 45.9 & 56.3 & 66.2 & 72.3 & 60.6 & 58.6 & 65.5 & 61.9 & 42.7 \\
    \rowcolor{black!8}~~$\bullet~$ZeroSiam (ours) & 25.1 & 25.4 & 27.6 & 48.2 & 47.9 & 52.5 & 50.6 & 54.3 & 55.5 & 65.6 & 71.8 & 59.2 & 59.4 & 64.6 & 61.2 & \textbf{51.3} \\
    \cmidrule{1-17}
    ConvNeXt-Tiny & 21.3 & 27.7 & 22.0 & 24.5 & 10.6 & 32.3 & 30.9 & 44.6 & 52.4 & 39.6 & 72.3 & 44.5 & 23.4 & 17.6 & 55.6 & 34.6 \\
    ~~$\bullet~$Tent & 15.5 & 12.9 & 25.2 & 21.3 & 2.1 & 30.0 & 25.3 & 30.3 & 40.4 & 1.8 & 72.6 & 50.1 & 14.3 & 3.2 & 56.4 & 26.8 \\
    ~~$\bullet~$SAR & 24.3 & 30.2 & 24.8 & 24.7 & 10.7 & 32.5 & 30.6 & 42.9 & 50.1 & 37.1 & 72.7 & 45.1 & 21.8 & 17.7 & 55.7 & 34.7 \\
    ~~$\bullet~$EATA & 23.7 & 29.1 & 24.5 & 24.6 & 11.1 & 32.5 & 30.7 & 44.3 & 50.4 & 39.9 & 72.0 & 46.2 & 23.2 & 17.9 & 55.7 & 35.0 \\
    ~~$\bullet~$COME & 41.1 & 42.7 & 40.4 & 29.4 & 0.4 & 33.3 & 2.4 & 25.1 & 24.9 & 0.2 & 74.2 & 57.7 & 0.6 & 27.6 & 58.3 & 30.6 \\
    ~~$\bullet~$DeYO & 34.9 & 35.5 & 35.3 & 22.0 & 0.9 & 23.0 & 3.0 & 38.6 & 29.1 & 0.3 & 72.1 & 52.3 & 1.8 & 22.1 & 55.6 & 28.4 \\
    \rowcolor{black!8}~~$\bullet~$ZeroSiam (ours) & 42.4 & 44.9 & 42.5 & 37.2 & 33.9 & 45.5 & 44.5 & 56.3 & 53.3 & 64.4 & 73.7 & 60.5 & 44.7 & 56.3 & 58.5 & \textbf{50.6} \\
    \cmidrule{1-17}
    Swin-Tiny & 26.9 & 27.7 & 23.0 & 21.7 & 13.2 & 26.4 & 25.3 & 41.2 & 45.8 & 47.7 & 67.8 & 39.2 & 22.5 & 8.4 & 33.2 & 31.3 \\
    ~~$\bullet~$Tent & 0.2 & 0.4 & 0.1 & 14.0 & 5.9 & 7.2 & 5.0 & 9.6 & 8.9 & 2.6 & 67.7 & 16.1 & 1.4 & 0.3 & 34.4 & 11.6 \\
    ~~$\bullet~$SAR & 27.3 & 29.0 & 24.8 & 21.6 & 14.9 & 26.7 & 25.7 & 36.6 & 45.3 & 47.5 & 67.8 & 39.4 & 10.5 & 8.6 & 36.5 & 30.8 \\
    ~~$\bullet~$EATA & 27.9 & 29.0 & 25.3 & 20.7 & 14.6 & 28.4 & 25.7 & 40.1 & 40.7 & 47.4 & 68.1 & 40.8 & 23.6 & 9.2 & 33.8 & 31.7 \\
    ~~$\bullet~$COME & 38.1 & 41.3 & 42.0 & 26.2 & 0.5 & 10.8 & 0.5 & 10.3 & 8.0 & 10.2 & 72.8 & 52.0 & 0.1 & 0.5 & 55.1 & 24.6 \\
    ~~$\bullet~$DeYO & 0.2 & 0.1 & 0.2 & 13.8 & 6.4 & 11.4 & 3.7 & 11.6 & 16.9 & 4.3 & 70.2 & 44.4 & 3.7 & 0.4 & 47.1 & 15.6 \\
    \rowcolor{black!8}~~$\bullet~$ZeroSiam (ours) & 45.1 & 47.5 & 47.0 & 35.2 & 40.3 & 46.1 & 46.9 & 56.1 & 51.9 & 58.5 & 68.6 & 54.8 & 54.7 & 52.9 & 56.3 & \textbf{50.8} \\
    \end{tabular}
    }
    \end{threeparttable}
    \end{center}
\end{table*}

\begin{table*}[t]
    \caption{Detailed results of TTA under the \textbf{\textsc{mild scenario}}, \ie, Table~\ref{tab:imagenet-c-mild} in the main paper.}
    \vspace{-0.1in}
    \label{tab:imagenet-c-mild-details}
    \begin{center}
    \begin{threeparttable}
    \large
    \resizebox{1.0\linewidth}{!}{
    \begin{tabular}{l|ccc|cccc|cccc|cccc|>{\columncolor{blue!8}}c}
    \multicolumn{1}{c}{} & \multicolumn{3}{c}{Noise} & \multicolumn{4}{c}{Blur} & \multicolumn{4}{c}{Weather} & \multicolumn{4}{c}{Digital} \\
    Model+Method & Gauss. & Shot & Impul. & Defoc. & Glass & Motion & Zoom & Snow & Frost & Fog & Brit. & Contr. & Elastic & Pixel & JPEG & Avg. \\
    \midrule
    ResNet50-GN & 18.0 & 19.8 & 17.9 & 19.8 & 11.4 & 21.4 & 24.9 & 40.4 & 47.3 & 33.6 & 69.3 & 36.3 & 18.6 & 28.4 & 52.3 & 30.6 \\
    ~~$\bullet~$Tent & 5.1 & 6.1 & 5.4 & 15.0 & 10.8 & 22.1 & 22.9 & 25.8 & 33.2 & 3.1 & 70.4 & 42.7 & 11.0 & 48.1 & 54.2 & 25.1 \\
    ~~$\bullet~$SAR & 28.5 & 31.4 & 29.7 & 18.7 & 19.5 & 30.2 & 30.1 & 43.3 & 43.7 & 7.0 & 70.8 & 44.0 & 19.0 & 48.8 & 55.2 & 34.7 \\
    ~~$\bullet~$EATA & 37.3 & 39.1 & 39.4 & 28.1 & 27.1 & 36.8 & 39.1 & 50.8 & 49.0 & 55.8 & 72.0 & 50.1 & 41.8 & 55.7 & 58.1 & 45.4 \\
    ~~$\bullet~$COME & 17.9 & 19.7 & 17.7 & 19.7 & 11.3 & 21.4 & 24.9 & 40.4 & 47.5 & 33.6 & 69.1 & 35.9 & 18.6 & 27.9 & 52.1 & 30.5 \\
    ~~$\bullet~$DeYO & 39.6 & 42.1 & 40.6 & 22.2 & 23.3 & 38.3 & 37.7 & 50.4 & 49.4 & 3.0 & 73.0 & 50.3 & 41.9 & 55.5 & 57.7 & 41.7 \\
    \rowcolor{black!8}~~$\bullet~$ZeroSiam (ours) & 40.5 & 42.4 & 41.7 & 31.4 & 31.8 & 40.8 & 44.4 & 53.5 & 53.3 & 59.1 & 72.9 & 51.5 & 47.4 & 58.9 & 58.8 & \textbf{48.6} \\
    \cmidrule{1-17}
    ViT-Base & 9.4 & 6.7 & 8.3 & 29.1 & 23.4 & 34.0 & 27.0 & 15.8 & 26.3 & 47.4 & 54.7 & 43.9 & 30.5 & 44.5 & 47.6 & 29.9 \\
    ~~$\bullet~$Tent & 42.4 & 1.4 & 43.3 & 52.2 & 47.6 & 55.4 & 49.9 & 19.7 & 18.1 & 66.1 & 74.9 & 64.7 & 52.5 & 66.8 & 64.1 & 47.9 \\
    ~~$\bullet~$SAR & 44.3 & 14.0 & 45.7 & 52.9 & 49.8 & 56.0 & 51.1 & 58.3 & 50.6 & 66.5 & 74.6 & 64.3 & 55.5 & 66.5 & 64.0 & 54.3 \\
    ~~$\bullet~$EATA & 49.7 & 49.4 & 51.3 & 55.9 & 55.5 & 60.5 & 57.8 & 63.4 & 63.0 & 70.4 & 76.0 & 67.0 & 64.7 & 69.3 & 67.9 & 61.5 \\
    ~~$\bullet~$COME & 50.9 & 51.6 & 52.1 & 56.9 & 57.4 & 62.2 & 59.2 & 66.1 & 64.3 & 72.5 & 77.4 & 67.9 & 66.3 & 72.4 & 69.2 & \textbf{63.1} \\
    ~~$\bullet~$DeYO & 52.9 & 53.1 & 53.8 & 58.2 & 58.6 & 63.0 & 40.2 & 67.4 & 65.7 & 73.2 & 78.0 & 68.0 & 67.7 & 73.1 & 69.8 & 62.8 \\
    \rowcolor{black!8}~~$\bullet~$ZeroSiam (ours) & 50.5 & 51.0 & 51.5 & 57.1 & 56.8 & 61.2 & 58.6 & 64.9 & 64.5 & 71.9 & 77.2 & 67.6 & 65.6 & 71.5 & 68.5 & 62.6 \\
    \cmidrule{1-17}
    ViT-Small & 2.0 & 2.0 & 1.5 & 24.2 & 17.1 & 30.3 & 22.0 & 9.5 & 19.2 & 37.9 & 44.4 & 30.1 & 24.8 & 38.2 & 41.0 & 22.9 \\
    ~~$\bullet~$Tent & 0.5 & 0.6 & 0.4 & 39.5 & 33.0 & 42.5 & 34.0 & 10.3 & 37.1 & 54.5 & 63.6 & 51.7 & 31.5 & 53.1 & 52.3 & 33.6 \\
    ~~$\bullet~$SAR & 1.4 & 2.1 & 1.1 & 40.8 & 34.9 & 43.2 & 35.5 & 16.3 & 41.8 & 54.7 & 63.7 & 51.8 & 38.5 & 53.4 & 52.5 & 35.4 \\
    ~~$\bullet~$EATA & 22.4 & 20.6 & 23.6 & 45.0 & 43.0 & 47.2 & 43.5 & 48.5 & 49.9 & 60.7 & 66.4 & 56.6 & 52.7 & 58.3 & 57.2 & 46.4 \\
    ~~$\bullet~$COME & 0.2 & 0.4 & 0.2 & 47.7 & 46.3 & 51.3 & 19.6 & 37.9 & 52.7 & 63.6 & 69.8 & 58.3 & 54.1 & 61.8 & 59.2 & 41.6 \\
    ~~$\bullet~$DeYO & 0.2 & 0.3 & 0.2 & 47.9 & 45.8 & 51.4 & 33.6 & 44.5 & 53.0 & 63.5 & 70.1 & 58.5 & 55.3 & 62.4 & 59.8 & 43.1 \\
    \rowcolor{black!8}~~$\bullet~$ZeroSiam (ours)& 23.8 & 23.4 & 25.2 & 47.6 & 46.0 & 50.6 & 48.1 & 51.9 & 53.3 & 63.2 & 69.9 & 58.1 & 55.7 & 61.8 & 59.4 & \textbf{49.2} \\
    \cmidrule{1-17}
    ConvNext-Tiny & 21.4 & 27.7 & 22.1 & 24.5 & 11.0 & 32.5 & 31.2 & 44.5 & 52.5 & 39.5 & 72.0 & 44.8 & 23.1 & 17.7 & 55.8 & 34.7 \\
    ~~$\bullet~$Tent & 30.8 & 33.6 & 32.0 & 24.7 & 7.5 & 33.6 & 30.0 & 43.7 & 47.3 & 43.6 & 72.9 & 49.5 & 21.5 & 19.3 & 56.3 & 36.4 \\
    ~~$\bullet~$SAR & 33.4 & 35.6 & 33.7 & 27.1 & 8.1 & 34.1 & 30.7 & 44.6 & 47.3 & 17.0 & 72.8 & 48.8 & 22.0 & 24.4 & 56.1 & 35.7 \\
    ~~$\bullet~$EATA & 35.3 & 37.4 & 35.8 & 31.0 & 20.7 & 37.1 & 33.7 & 48.5 & 49.1 & 51.3 & 73.2 & 53.0 & 29.5 & 31.3 & 56.6 & 41.6 \\
    ~~$\bullet~$COME & 40.4 & 42.6 & 40.4 & 36.2 & 2.3 & 43.2 & 23.4 & 54.6 & 54.2 & 58.1 & 75.1 & 58.6 & 10.6 & 44.7 & 59.9 & 43.0 \\
    ~~$\bullet~$DeYO & 36.3 & 39.0 & 36.4 & 29.1 & 2.7 & 37.1 & 19.6 & 50.1 & 48.3 & 4.8 & 73.8 & 54.9 & 8.6 & 23.7 & 57.4 & 34.8 \\
    \rowcolor{black!8}~~$\bullet~$ZeroSiam (ours) & 39.6 & 41.7 & 39.3 & 33.8 & 27.3 & 42.0 & 38.7 & 53.2 & 51.8 & 59.3 & 73.8 & 57.0 & 36.5 & 44.2 & 58.2 & \textbf{46.4} \\
    \cmidrule{1-17}
    Swin-Tiny & 26.6 & 27.9 & 22.8 & 21.7 & 13.2 & 26.5 & 25.4 & 41.0 & 45.8 & 47.7 & 67.6 & 39.1 & 22.6 & 8.4 & 33.3 & 31.3 \\
    ~~$\bullet~$Tent & 23.7 & 26.2 & 22.7 & 19.4 & 18.6 & 27.1 & 27.4 & 40.5 & 36.3 & 24.4 & 69.2 & 46.3 & 16.9 & 2.8 & 41.5 & 29.5 \\
    ~~$\bullet~$SAR & 31.8 & 32.8 & 32.5 & 22.3 & 20.2 & 29.6 & 28.4 & 42.6 & 39.8 & 37.6 & 69.2 & 46.8 & 19.9 & 6.0 & 43.2 & 33.5 \\
    ~~$\bullet~$EATA & 40.9 & 42.9 & 41.6 & 33.5 & 33.6 & 42.6 & 40.1 & 50.1 & 49.1 & 54.9 & 71.8 & 52.6 & 44.8 & 40.0 & 51.1 & 46.0 \\
    ~~$\bullet~$COME & 41.2 & 43.2 & 42.1 & 35.2 & 15.0 & 44.7 & 7.0 & 52.9 & 51.9 & 57.8 & 72.1 & 51.1 & 23.8 & 25.2 & 54.3 & 41.2 \\
    ~~$\bullet~$DeYO & 35.2 & 37.0 & 36.0 & 22.2 & 25.0 & 35.8 & 15.4 & 42.0 & 43.8 & 48.0 & 70.1 & 50.3 & 31.6 & 34.5 & 48.0 & 38.3 \\
    \rowcolor{black!8}~~$\bullet~$ZeroSiam (ours) & 42.4 & 44.0 & 43.0 & 31.7 & 34.9 & 43.6 & 42.6 & 51.3 & 48.9 & 54.7 & 71.5 & 53.4 & 47.5 & 45.8 & 52.3 & \textbf{47.2} \\
    \end{tabular}
    }
    \end{threeparttable}
    \end{center}
\end{table*}

\clearpage

\section{Additional Discussions}

\subsection{\methodname's Efficacy beyond Collapse Prevention}
\methodname regularizes test-time entropy minimization from favoring non-generalizable shortcuts (c.f. Section~\ref{sec:empric_theory_evidence}), thereby improving its performance even in cases when no collapse occurs. As shown in Figure~\ref{fig:motivation-figures-suppl}, Tent drastically reduces entropy and eventually closes it to zero by inflating the logit norms and aligning all logits toward a dominant mode. Despite the reduction of entropy, it does not involve meaningful learning, and Tent thus fails to improve the accuracy on the first 250 batches of samples, where such biased signals can further degrade performance during a prolonged adaptation, as shown in Figure~\ref{fig:motivation-figures-suppl}. In contrast, \methodname maintains a stable logit norm and center dominance ratio during entropy minimization, which successfully enhances TTA efficacy beyond collapse prevention, as shown in Figure~\ref{fig:motivation-figures-suppl} (a). Moreover, from Figure~\ref{fig:motivation-figures-suppl} (d), \methodname does not greedily minimize the entropy loss. Instead, \methodname converges to a low but non-zero entropy loss, while still enabling consistent accuracy improvement, \ie, from 250 to 750 batches as in Figure~\ref{fig:motivation-figures-suppl} (a). Such convergence toward non-zero loss is also aligned with Theory~\ref{proofs: thm1}, which shows that \methodname inherently defines a lower bound of $h_{min}$ for entropy optimization, preventing the model from degenerating into collapsed constant one-hot outputs that trivially minimize (\ie, zero out) the entropy loss.

\begin{figure*}[h]
    \centering
    \includegraphics[width=1.\linewidth]{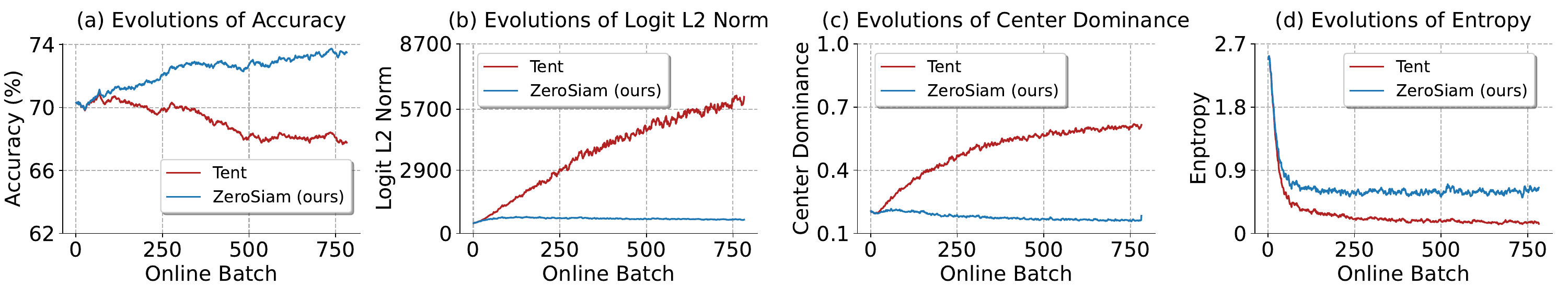}
    \caption{Empirical evidence of \methodname for boosting stability and efficacy. (a-d) record the ODD accuracy, logits $L_2$ norm, center dominance, and entropy in model predictions under a mild test scenario~\citep{wang2021tent}. Experiments are run on ImageNet-C (Bright, level 5) with ViT-Base.
    }
    \label{fig:motivation-figures-suppl}
\end{figure*}

\subsection{Evolutions of Divergence Loss in \methodname}
We provide additional results of how the divergence loss evolves during adaptation to supplement Figure~\ref{fig:motivation-figures}. As shown in Figure~\ref{fig:similarity_loss}, the divergence loss increases more rapidly and substantially under a more imbalanced stream. Such a phenomenon is also aligned with changes of the predictor as shown in Figure~\ref{fig:motivation-figures} (a), where a larger divergence from an identity mapping results in a larger similarity loss. Overall, these results suggest \textit{an adaptive regularization strength} according to the degree of collapse risk in the scenario, indicating the efficacy of our asymmetric entropy optimization design. 

\begin{figure*}[h]
\centering\includegraphics[width=0.6\linewidth]{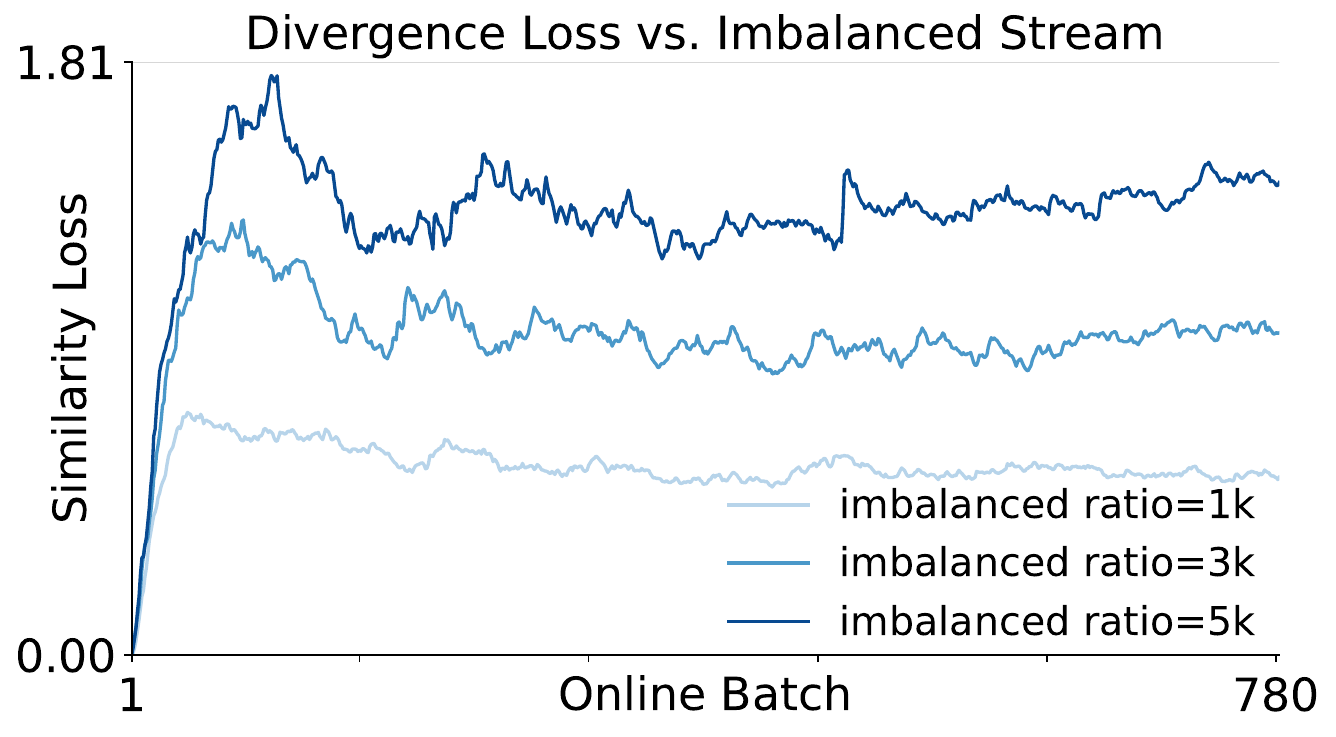}\caption{The evolution of divergence loss during TTA. Results are reported on ImageNet-C (Snow, level 5) with ResNet50-GN under online streams with varying imbalanced ratios~\citep{niu2023sar}. }
\label{fig:similarity_loss}
\end{figure*}

\subsection{\methodname for Saving a Collapsed Model}
Interestingly, we further demonstrate that \methodname can sometimes save a model that has already collapsed. To illustrate this, we first run Tent on an imbalanced data stream until collapse occurs, and then apply \methodname. As shown in Figure~\ref{fig:acc_recovery}, \methodname helps restore strong accuracy after 300 batches of adaptation. This recovery is driven by the asymmetric alignment in \methodname, which makes collapsed solutions no longer a stable minimum, encouraging the model to escape the collapsed mode and re-cluster samples. However, this does not fully explain how class-wise clusters that are consistent with pre-training re-emerge. We hypothesize this is because only the affine parameters in the normalization layers are updated during testing, which introduces a small adjustment to the model that makes performance recovery possible. Notably, we observe that the predictor in \methodname has to be randomly initialized (\eg, $\theta_h=\mathbf{I}+0.1\,\mathbf{W}$, $\mathbf{W}\!\sim\!\mathcal{N}(0,\mathbf{I})$) so as to deviate from the identity and also remain learnable during TTA to enable the collapse recovery effects. Even so, recovery succeeds in only 4 out of 7 domains, suggesting that while promising, collapse recovery with \methodname remains an open question, and we leave it for future work.

\begin{figure*}[h]
\centering
\includegraphics[width=0.6\linewidth]{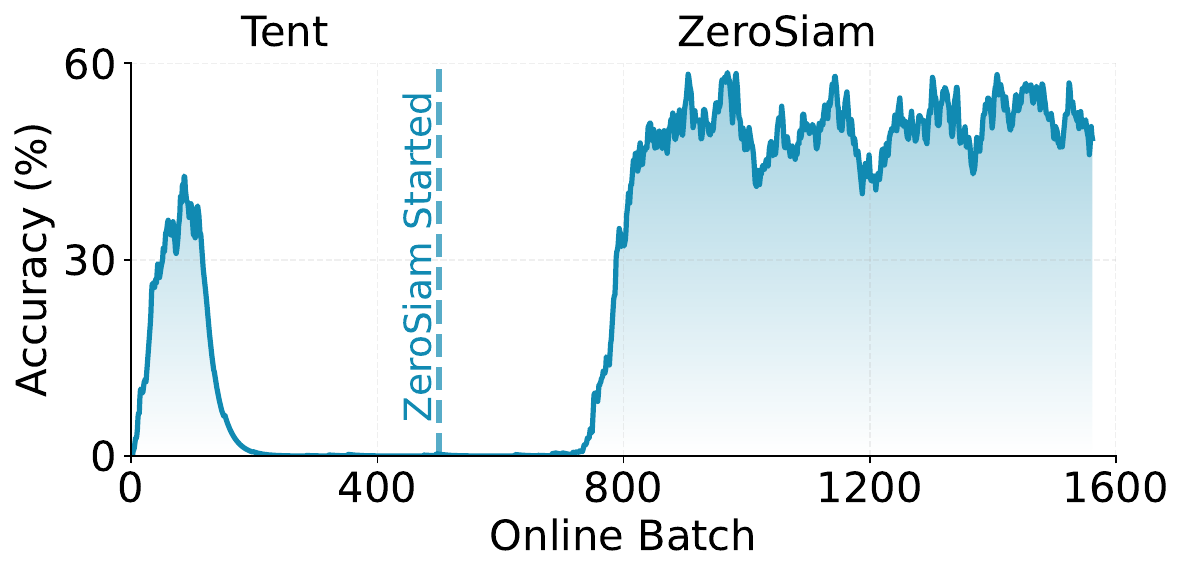}
\caption{Efficacy of \methodname for saving a collapsed model. Results are reported on ImageNet-C (Shot, level 5) with ViT-Base under \textbf{\textsc{online imbalanced label shifts}} (imbalance ratio = $\infty$).}
\label{fig:acc_recovery}
\end{figure*}

\subsection{Comparisons between the Entropy and Divergence Term during TTA}
As shown in Figure~\ref{fig:entropy_divergence_compare}, the alignment loss $D(p^o||\mathrm{sg}[p^{r}])$ does not grow infinitely. Instead, the alignment loss quickly increases then converges, while the entropy term $H(p^{o})$ decreases and stabilizes. Throughout adaptation, the alignment term exhibits a smaller loss value than the entropy term, which provides an appropriate regularization strength that prevents the risk of collapse, without hindering the effectiveness of the entropy-based adaptation.

\begin{figure*}[h]
\centering
\includegraphics[width=0.6\linewidth]{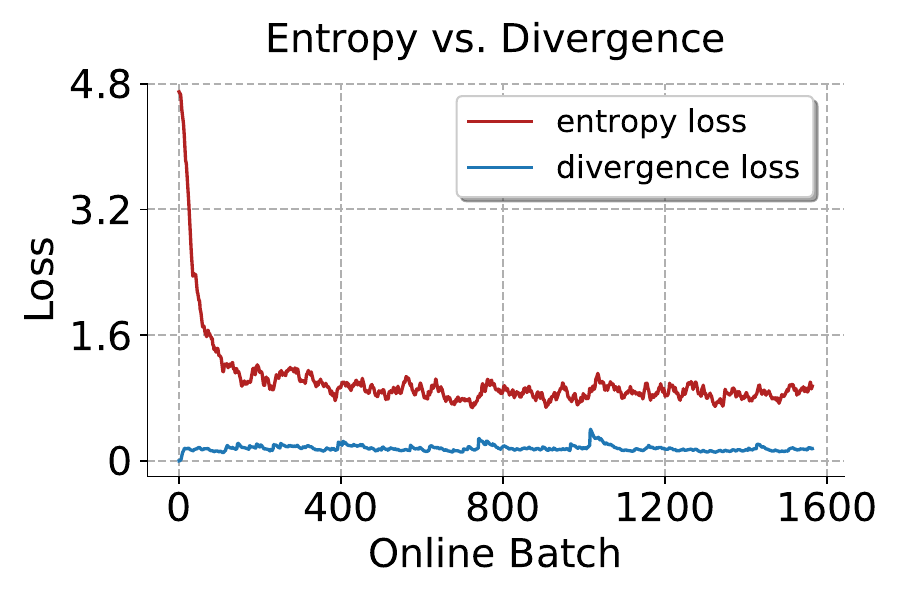}
\caption{Comparisons between entropy and divergence during TTA on ImageNet-C (Gaussian, level 5) under \textbf{\textsc{online imbalanced label shifts}} (imbalance ratio = $\infty$) with ViT-Base.}
\label{fig:entropy_divergence_compare}
\end{figure*}

\subsection{More Sensitivity Analysis of Learning Rates in \methodname}
To demonstrate \methodname's robustness, we further supplement the sensitivity analysis of learning rates on Swin-Tiny, a challenging architecture on which Tent frequently suffers from collapse. 
\label{sec:suppl:learning-rate-on-swin}

From Figure~\ref{fig:swin-lr-heatmap}, \methodname consistently outperforms Tent across a wide range of encoder learning rates, confirming the stability trends observed in Figure~\ref{fig:hyperparameter_sensitivity} across architectures. Moreover, on smaller and more collapse-prone models, \methodname can benefit from a larger predictor learning rate $\eta_h$, \eg, yielding an accuracy of 53.3\% with $\eta_f=10\times5e-5$ and $\eta_h=40\times\eta_f$; while still outperforming previous methods across broad learning rate configurations, \eg, achieving an accuracy above 45\% when $\eta_f\in[5,20]\times5e-5$ and $\eta_h \in [1,40]\times\eta_f$, showing a substantial gain compared to the accuracy of 38.2\% in COME and 37.7\% in DeYO. These results collectively demonstrate the efficacy of \methodname across learning rate choices. In practice, we did not extensively tune the learning rates, where we simply use $\eta_f=10\times5e-5$ and $\eta_h=5\times\eta_f$ on \methodname, which yields an accuracy of 50.4\% for comparisons with other methods.

\begin{figure*}[h]
\centering
\includegraphics[width=0.5\linewidth]{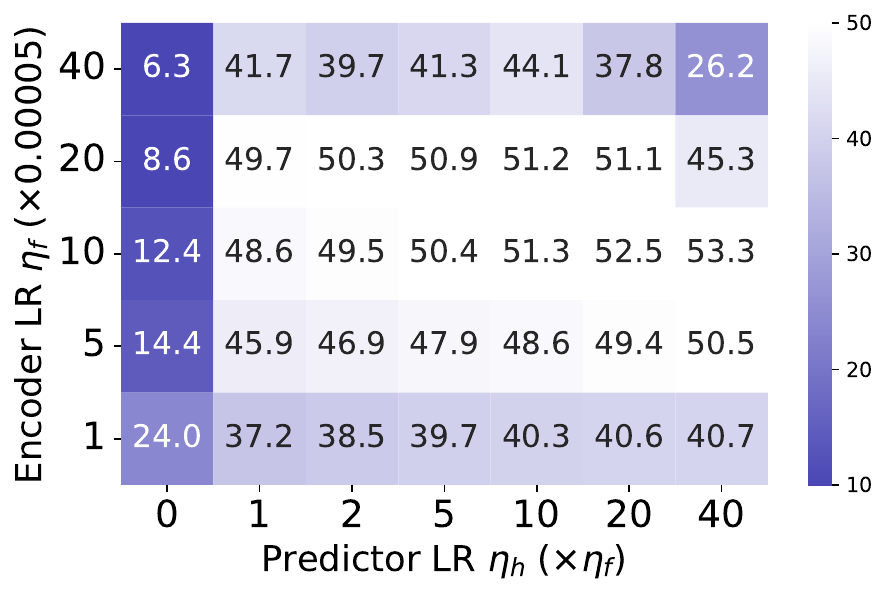}
\caption{Sensitivity to learning rates on Swin-Tiny. Results are reported on ImageNet-C (Gaussian, level 5) under \textbf{\textsc{online imbalanced label shifts}} (imbalance ratio = $\infty$).}
\label{fig:swin-lr-heatmap}
\end{figure*}

\subsection{Discussions on a \textit{Data-Free} Approach for Selecting $\eta_h$ in \methodname}
\label{sec:suppl:data-free}

While the predictor lr $\eta_h$ in \methodname is simply set the same as $\eta_f$ on larger models and around 10$\times$ larger than $\eta_f$ on smaller and more collapse-prone models, we further briefly introduce \textbf{a data-free method} to help select $\eta_h$. The core idea is to leverage random Gaussian noise as the proxy to track TTA dynamics, where ideally, noisy learning signals should be totally absorbed and suppressed in ZeroSiam. To this end, we track the changes in prediction, \eg, prediction entropy $\Delta_e$, before and after adaptation on Gaussian noise, and select the best-performing $\eta_h$ that leads to a minimal change, \eg, $\min_{\eta_h} |\Delta_e|$, which implies an appropriate regularization strength for bias filtration. As shown in Table V, on ResNet50-GN, we can find $\eta_h=10$ by $\min_{\eta_h} |\Delta_e|$ using only generated Gaussian before TTA deployment, which also delivers the highest TTA accuracy of 51.4\% under label shift. 

\begin{table}[h]
    \caption{Illustration of data-free selection for $\eta_h$ on ResNet50-GN based on entropy difference $|\Delta_e|$ before and after adaptation on 6,400 random Gaussian inputs. TTA accuracy is measured under the label shift setting. Encoder learning rate $\eta_f$ is set to 0.00125, and $\eta_h$ is set to $k\times$ 0.00125.}
     \vspace{-0.06in}
    \label{tab:sensitivity_rho}
\newcommand{\tabincell}[2]{\begin{tabular}{@{}#1@{}}#2\end{tabular}}
 \begin{center}
 \begin{threeparttable}
    \resizebox{0.9\linewidth}{!}{
 	\begin{tabular}{ccccccc}
    Metric & $\eta_h=0$ & $\eta_h=2$  & $\eta_h=5$  & $\eta_h=10$  & $\eta_h=20$ & $\eta_h=40$ \\
 	\midrule
    prediction $|\Delta_e|$ & 0.750 & 0.085 & 0.053 & 0.001 & 0.061 & 0.083 \\
    TTA Acc. (\%) & 14.3 & 49.6 & 51.0 & 51.4 & 50.3 & 45.3 \\
    
    \end{tabular}
	}
	 \end{threeparttable}
	 \end{center}
\end{table}

We acknowledge that some architectures, such as ConvNeXt, can consistently output the highest prediction entropy on Gaussian noise regardless of adaptation and may require tracking other prediction metrics for data-free $\eta_h$ selection. We leave this for future work, and believe that our current focus on collapse prevention and asymmetric entropy-based architecture already makes meaningful contributions, as this significantly reduces the risk of TTA across adaptation scenarios while maintaining efficiency for practical and reliable deployment.

\subsection{Efficacy of \methodname across Diverse Adaptation Epochs}
\label{sec:suppl:different-epochs}

In our main experiments, all methods are evaluated with only 1 epoch over the test set, following the evaluation settings established by prior works. Here, we further evaluate Tent and \methodname using more adaptation epochs under Tent’s default mild setting. As shown in Table~\ref{tab:more-epochs}, Tent’s performance gradually degrades with more epochs (47.9\% → 41.3\%), reflecting its known overfitting tendency. In contrast, \methodname remains stable and even benefits moderately from additional epochs (62.6\% → 65.7\%). This confirms that ZeroSiam improves generalization while mitigating overfitting in Tent, which maintains a stable adaptation across different epoch budgets.


\begin{table}[h]
    \caption{Accuracy of Tent and ZeroSiam under the $k$-th epoch of TTA over the entire test set. Results are reported with ViT-Base under Tent's default mild setting.}
    \setlength{\tabcolsep}{3pt}
    \label{tab:more-epochs}
\newcommand{\tabincell}[2]{\begin{tabular}{@{}#1@{}}#2\end{tabular}}
 \begin{center}
 \begin{threeparttable}
    \resizebox{1\linewidth}{!}{
 	\begin{tabular}{lccccccc}
    Method & No adapt & Tent (k=1)  & Tent (k=3)  & Tent (k=5)  & \methodname (k=1) & \methodname (k=3) & \methodname (k=5) \\
 	\midrule
    Acc. (\%) & 29.9 & 47.9 & 45.5 & 41.3 & 62.6 & 65.7 & 65.7 \\
    \end{tabular}
	}
	 \end{threeparttable}
	 \end{center}
\end{table}

\subsection{Necessity of Stop-gradient for Asymmetry}
As verified in Theory~\ref{proofs: thm1}, the online branch in \methodname converges more rapidly towards the collapse solutions during TTA. The stop-gradient on the target branch prevents the target branch from also drifting toward the collapse modes, while providing a stable reference for the online branch. As shown in Table~\ref{tab:stop-gradient-importance}, removing stop-gradient leads to severe collapse, \eg, reducing the accuracy from 51.6\%$\rightarrow$20.5\% on ResNet50-GN. Similar results are also observed in SimSiam~\citep{chen2021exploring} and BYOL~\citep{grill2020bootstrap}, underscoring stop-gradient as a key component in asymmetry.

\begin{table*}[h]
    \caption{Importance of stop-gradient operation. Results are reported on ImageNet-C (severity level 5) under \textbf{\textsc{online imbalanced label shifts}} (imbalance ratio = $\infty$)  regarding \textbf{Accuracy (\%)}.}
    \label{tab:stop-gradient-importance}
    \begin{center}
    \begin{threeparttable}
    \large
    \resizebox{1.0\linewidth}{!}{
    \begin{tabular}{l|ccc|cccc|cccc|cccc|c}
    \multicolumn{1}{c}{} & \multicolumn{3}{c}{Noise} & \multicolumn{4}{c}{Blur} & \multicolumn{4}{c}{Weather} & \multicolumn{4}{c}{Digital} \\
    ResNet50-GN & Gauss. & Shot & Impul. & Defoc. & Glass & Motion & Zoom & Snow & Frost & Fog & Brit. & Contr. & Elastic & Pixel & JPEG & Avg. \\
    \midrule
    ZeroSiam (ours) & 42.9 & 45.1 & 43.1 & 35.1 & 36.1 & 44.5 & 49.2 & 58.1 & 55.1 & 62.9 & 72.3 & 54.8 & 52.5 & 61.7 & 60.1 & 51.6 \\
    - w/o stop-grad & 2.1 & 2.7 & 2.0 & 15.3 & 2.0 & 15.2 & 11.1 & 11.9 & 17.7 & 1.2 & 71.9 & 43.5 & 3.8 & 50.9 & 55.5 & 20.5 \\
    \midrule
    ViT-Base & Gauss. & Shot & Impul. & Defoc. & Glass & Motion & Zoom & Snow & Frost & Fog & Brit. & Contr. & Elastic & Pixel & JPEG & Avg. \\
    \midrule
    ZeroSiam (ours) & 52.3 & 52.6 & 53.4 & 57.7 & 58.7 & 62.7 & 61.1 & 67.6 & 66.0 & 73.3 & 78.0 & 67.0 & 67.9 & 73.5 & 70.1 & 64.1 \\
    - w/o stop-grad & 2.8 & 0.8 & 12.8 & 54.1 & 51.1 & 57.6 & 33.7 & 6.3 & 8.0 & 68.7 & 76.1 & 65.9 & 4.4 & 69.2 & 66.3 & 38.5 \\
    \end{tabular}
    }
    \end{threeparttable}
    \end{center}
\end{table*}

\subsection{Statistical comparison}
We re-run Table~\ref{tab:imagenet-c-label-shift} in the main paper with 5 different random seeds. As shown in Table~\ref{tab:statistical-comparison}, \methodname achieves both higher adaptation accuracy and lower standard deviation compared to prior methods. Specifically, \methodname increases the average accuracy from 42.6\% (DeYO) to 52.9\%, while reducing the standard deviation by over \textit{10}-fold. This reduced standard deviation further underscores our reliability for robust adaptation in real-world applications.

\begin{table}[h]
    \caption{Statistical comparison. Experiments follow the settings of Table~\ref{tab:imagenet-c-label-shift} in the main paper. Results are reported with 5 random seeds.}
    \label{tab:statistical-comparison}
\newcommand{\tabincell}[2]{\begin{tabular}{@{}#1@{}}#2\end{tabular}}
 \begin{center}
 \begin{threeparttable}
    \resizebox{1\linewidth}{!}{
 	\begin{tabular}{lcccccc}
    Method & ResNet50-GN & ViT-Base  & ViT-Small  & ConvNeXt-Tiny  & Swin-Tiny & Average \\
 	\midrule
    DeYO & 39.7$_{\pm 2.05}$ &  61.8$_{\pm 2.19}$ & 40.8$_{\pm 0.36}$ & 34.3$_{\pm 0.40}$ &  36.5$_{\pm 0.71}$ & 42.6$_{\pm 0.82}$ \\
    \methodname & 51.2$_{\pm 0.06}$ & 63.5$_{\pm 0.10}$ & 50.0$_{\pm 0.06}$ & 49.8$_{\pm 0.11}$ & 50.2$_{\pm 0.16}$ & 52.9$_{\pm 0.06}$ \\
    \end{tabular}
	}
	 \end{threeparttable}
	 \end{center}
\end{table}

\subsection{More Discussions on the Impacts of Uncontrolled Logit Norm Inflation}
During test-time entropy minimization, uncontrolled inflation of logit norm can induce the following issues that hinder robust and reliable model adaptation: 1) \textit{Overconfidence}: regarding individual predictions, increasing the logit magnitude $||u||_2$ (where $u$ is the predicted logit) artificially inflates predictive confidence without improving accuracy, since the predicted class $\arg \max_k u_k$ depends solely on the direction $u'=\frac{u}{||u||_2}$, not its magnitude. This leads to overly confident but inaccurate predictions that undermine trust in AI systems. 2) \textit{Degraded generalization from gradient interference}: while inflating the logit norm does not benefit a sample’s own generalization, it adversely affects others through gradient interference during batch updates. Empirically, test-time adaptation with only the objective of $-||u||_2$ to inflate the logit norm would lead to a catastrophic collapse of zero accuracy for all models. Theoretically, define the decision margins as $m(x)=\max_ku_k-\max_{j\ne k}u_j$ and the sharpness of relative boundaries $\frac{|| \nabla_x m(x) ||}{m(x)}$, given that both $m(x)$ and $|| \nabla_x m(x) ||$ scale proportionally with $||u||_2$, inflating the logit norm cannot improve the sample's own relative boundary. However, since per-example gradients are generally non-orthogonal, a sample's update can \textit{add irrelevant noise to the boundaries of other samples while having no benefit on its current prediction}. Such interference accumulates during online TTA, adversely affecting generalization and ultimately causing collapse. ZeroSiam mitigates this by aligning with a stable target branch, preventing uncontrolled norm growth, and improving TTA efficacy.


\subsection{More Discussions with Multi-Branch Adaptation Methods}
\methodname advances the entropy-based, self-training TTA by adding both the mechanisms of bias learning signals filtration (c.f. Section~\ref{sec:empric_theory_evidence}) and asymmetric optimization (c.f. Section~\ref{sec:zerosiam-architecture}) through a lightweight predictor with theoretical insights (Theorem~\ref{proofs: thm1}). We further detail our distinctions with multi-branch-based TTA methods, such as SPA~\citep{niu2025spa}, REM~\citep{han2025rem}, and TTE~\citep{kim2025tte} from three aspects. 

(1) \textit{Distinct inspirations from self-supervised learning (objective vs. architecture):} SPA and REM are augmentation-based methods that exploit consistency learning—an objective similar to self-supervised approaches—across different views to promote adaptation to the target domain. To this end, their efforts focus on devising appropriate augmentations or deteriorations on images for information masking and creating a weak-to-strong consistency learning. In contrast, \methodname is entirely augmentation-free and focuses on the asymmetric architecture design—orthogonal to efforts in SPA and REM—for the single-branch, entropy-based adaptation, not being limited to images.

\textit{(2) Advantages of model collapse prevention (model-, test data stream- \& modality-agnostic):} REM prevents collapse by depending on the assumption that masked images should exhibit higher prediction entropy, which develops architecture-specific masking strategies for the vision transformer equipped with a classification token and explicitly fixes the rank of prediction entropy of masked inputs. However, the assumption and the masking strategy may fail to generalize to broader architectures and modalities, while the objective can still be minimized with constant one-hot outputs. 
TTE prevents collapse by subtracting a moving average center in the predicted logit for self-alignment, which explicitly reduces center dominance in a straightforward manner. However, it assumes that the prediction center truly reflects the trends of overfitting, which, however, violates under a long-tail data stream where the center actually reflects the dominant class. In this case, when subtracting the center, TTE can significantly risk flipping correct predictions and producing misguided alignment. 
In contrast, \methodname develops an assumption-free and theoretically grounded asymmetric architecture that self-induces collapse-resistant alignment to disfavor shortcuts in entropy minimization without relying on heuristic thresholds, making it generally applicable across diverse models, data streams, and modalities (Tables~\ref{tab:imagenet-c-mix-shift}-\ref{tab:imagenet-c-mild}) and requiring fewer hyperparameters for deployment.

\textit{(3) Unique strengths of adaptive bias filtration (improving generalization):} Beyond the widely applicable collapse prevention mechanism in \methodname compared to prior methods, we also empirically and theoretically demonstrate that \methodname further improves generalization with a bias learning signals filtration mechanism during TTA. Specifically, we show that the predictor in the online branch purposely absorbs and converts biased shortcut signals (\eg, norm inflation) into explicit discrepancies, where biased signals are then penalized by the alignment loss. As shown in Theorem 1 and Figure 2 (b-c), any shortcut representable by the predictor is naturally suppressed, while useful signals flow into the backbone encoder update. This bias filtration effect even helps improve the generalization of a large language model during test-time reasoning incentivization, demonstrating a promising research direction to enhance future TTA methods.

Finally, we provide further empirical comparisons on both TTA performance and efficiency. As shown in Table~\ref{tab:performance_efficiency_with_multi_branch}, existing methods improve performance at a substantial cost of computation and memory efficiency, due to using additional augmentations and branches, \eg, SPA increases the memory of Tent from 7,760MB to 28,147MB (which cannot be fitted into an RTX 3090), and the latency from 113s to 275s on ViT-Base. In contrast, \methodname is the only method that maintains almost the same memory and latency as Tent, while achieving the highest stability, \eg, increasing the accuracy of SPA from 60.3\% to 64.5\%, while reducing the memory from 28,147MB to 7,806MB on ViT-Base, which reveals a novel and practical stability–efficiency Pareto frontier that has not been achieved in prior TTA methods. 

\begin{table}[h]
    \caption{Comparisons on performance and efficiency with multi-branch TTA methods. Accuracy is reported under the blind-spot adaptation scenario. Efficiency is measured by processing 50,000 images under a batch size of 64 via an A100 GPU.}
    \label{tab:performance_efficiency_with_multi_branch}
\newcommand{\tabincell}[2]{\begin{tabular}{@{}#1@{}}#2\end{tabular}}
 \begin{center}
 \begin{threeparttable}
    \resizebox{0.9\linewidth}{!}{
 	\begin{tabular}{l|ccc|ccc}
    \multicolumn{1}{c}{} & \multicolumn{3}{c}{ResNet50-GN} & \multicolumn{3}{c}{ViT-Base}\\
    Method & Acc. (\%) & Mem. (MB)  & Time (s)  & Acc. (\%) & Mem. (MB)  & Time (s) \\
 	\midrule
    Tent & 13.4	& 5,519 & 89 & 30.7 & 7,760 & 113 \\
    TTE & 40.8 & 6,303 & 136 & 55.2 & 8,574 & 164 \\
    REM & - & - & - & 56.9 & 27,438 & 346 \\
    SPA & 45.0 & 21,327 & 220 & 60.3 & 28,147 & 275 \\
    \rowcolor{black!8} \methodname (ours) & 46.2 & 5,584 & 89 & 64.5 & 7,806 & 113 \\
    \end{tabular}
	}
	 \end{threeparttable}
	 \end{center}
\end{table}

\section{More Discussions with Reinforcement Learning}
Entropy collapse is also a common challenge within reinforcement learning, where policies can degenerate (\eg, become deterministic and lose diversity) due to biased or noisy advantage estimation, insufficient exploration, or overly aggressive updates~\citep{mnih2016asynchronous,haarnoja2018soft,ahmed2019understanding}. To address this, trust-region methods such as TRPO~\citep{schulman2015trust} and PPO~\citep{schulman2017proximal} constrain each update step in policy space through KL-divergence penalties or clipping, ensuring gradual and stable improvement. Similar principles appear in value-based methods~\citep{mnih2015human,schwarzer2023bigger,nauman2024bigger}, where target networks provide a stable anchor to stabilize bootstrapped updates against rapidly changing estimates. These approaches share with \methodname the key idea of structurally constraining updates to counter collapse. Specifically, \methodname's online branch with gradient updates resembles a fast-updating policy (\ie, with extra updates in the predictor), while the stop-gradient target branch serves as an anchor, effectively forming a lightweight trust region in the entropy minimization landscape. This connection situates \methodname as a general collapse-prevention mechanism, linking stabilization strategies across TTA, self-supervised learning, and reinforcement learning.




\section{Large Language Model Usage Statement}
In accordance with the ICLR 2026 policy on the responsible use of LLMs, we confirm that our study did not use any LLM to generate scientific content or conduct substantive experiments. The only use of an LLM (ChatGPT-5) was to polish the English writing and improve presentation quality; all core methodology, experiments, and analyses were authored and verified by the human authors.

\end{document}